%
%
%




\documentclass[12pt]{article}
\usepackage{fullpage}
\usepackage[round, numbers]{natbib}

\usepackage{type1cm}        
%
\usepackage{makeidx}         
\usepackage{graphicx}        
\usepackage{multicol}        
\usepackage[bottom]{footmisc}

\usepackage{newtxtext}       %
\usepackage[varvw]{newtxmath}       
\usepackage{algorithm,algorithmic}

\usepackage{amsthm}
\newtheorem{theorem}{Theorem}
\newtheorem{lemma}[theorem]{Lemma}
\usepackage{hyperref}
\setcitestyle{square}
%

\usepackage{booktabs}
\usepackage{multirow}
\usepackage{makecell}

\usepackage[skip=1ex]{caption}
\usepackage[skip=0ex]{subcaption}


\makeindex             

\DeclareMathOperator{\sign}{sign}
\DeclareMathOperator*{\argmin}{arg\,min}
\DeclareMathOperator*{\argmax}{\arg\max}
\newcommand{\const}{\mathrm{Const.}}
\newcommand{\bx}{\boldsymbol{x}}
\newcommand{\bw}{\boldsymbol{w}}
\newcommand{\bc}{\boldsymbol{c}}
\newcommand{\bl}{\boldsymbol{l}}
\newcommand{\bI}{\boldsymbol{I}}
\newcommand{\bW}{\boldsymbol{W}}
\newcommand{\by}{\boldsymbol{y}}

\newcommand{\balpha}{\boldsymbol{\alpha}}
\newcommand{\bbeta}{\boldsymbol{\beta}}
\newcommand{\bphi}{\boldsymbol{\phi}}
\newcommand{\bPhi}{\boldsymbol{\Phi}}
\newcommand{\bpsi}{\boldsymbol{\psi}}
\newcommand{\bPsi}{\boldsymbol{\Psi}}

\newcommand{\tr}{\mathrm{tr}}
\newcommand{\te}{\mathrm{te}}

\newcommand{\bR}{\mathbb{R}}
\newcommand{\bE}{\mathbb{E}}
\newcommand{\cX}{\mathcal{X}}
\newcommand{\cY}{\mathcal{Y}}
\newcommand{\cy}{\tilde{y}}

\newcommand{\cD}{\mathcal{D}}
\newcommand{\cF}{\mathcal{F}}
\newcommand{\cG}{\mathcal{G}}

\newcommand{\fR}{\mathfrak{R}}

\newcommand{\nparams}{b}

\newcommand{\T}{{\hspace{-0.25ex}\top\hspace{-0.25ex}}}

\newcommand{\dif}{\mathrm{d}}
\newcommand{\cA}{\mathcal{A}}

\newcommand{\cM}{\mathcal{M}}
\newcommand{\cH}{\mathcal{H}}
\newcommand{\cS}{\mathcal{S}}
\newcommand{\cZ}{\mathcal{Z}}
\newcommand{\cL}{\mathcal{L}}

\newcommand{\bz}{\boldsymbol{z}}
\newcommand{\bK}{\boldsymbol{K}}
\newcommand{\bk}{\boldsymbol{k}}

\newcommand{\mf}{\boldsymbol{f}_\theta}

\newcommand{\Str}{\cS^\tr}

\newcommand{\Ste}{\cS^\te}
\newcommand{\Ztr}{\cZ^\tr}

\newcommand{\Zte}{\cZ^\te}
\newcommand{\Ltr}{\cL^\tr}

\newcommand{\Lte}{\cL^\te}

\makeatletter
\newcommand{\specificthanks}[1]{\@fnsymbol{#1}}
\makeatother


\begin{document}

\title{Rethinking Importance Weighting for Transfer Learning}



\author{\normalsize
Nan Lu\thanks{The University of Tokyo, email: lu@ms.k.u-tokyo.ac.jp}
\and\normalsize Tianyi Zhang\thanks{The University of Tokyo / RIKEN, email: zhang@ms.k.u-tokyo.ac.jp}
\and\normalsize Tongtong Fang\thanks{The University of Tokyo, email: fang@ms.k.u-tokyo.ac.jp}
\and\normalsize Takeshi Teshima\thanks{The University of Tokyo / RIKEN, email: teshima@ms.k.u-tokyo.ac.jp}
\and\normalsize Masashi Sugiyama\thanks{RIKEN/ The University of Tokyo, email: sugi@k.u-tokyo.ac.jp}
}
\date{}

%
%
\maketitle

\abstract{
A key assumption in supervised learning is that training and
test data follow the \emph{same} probability distribution.
However, this fundamental assumption is not always satisfied in
practice, e.g., 
due to changing environments, sample selection bias, privacy concerns, or high labeling costs.
\emph{Transfer learning} (TL) relaxes this assumption and allows us to
learn under distribution shift.
Classical TL methods typically rely on \emph{importance-weighting}---a
predictor is trained based on the training losses weighted according to
the importance (i.e., the test-over-training density ratio).
However, as real-world machine learning tasks are becoming
increasingly complex, high-dimensional, and dynamical, novel
approaches are explored to cope with such challenges recently.
In this article, after introducing the foundation of TL based on
importance-weighting, we review recent advances based on \emph{joint} and
\emph{dynamic} importance-predictor estimation.
Furthermore, we introduce a method of \emph{causal mechanism transfer}
that incorporates causal structure in TL.
Finally, we discuss future perspectives of TL research.
}

\section{Introduction}
\label{sec:intro}
Supervised learning has been successfully applied to a wide variety of fields \cite{goodfellow2016deep}.
The vast majority of supervised learning methods follow the canonical framework of \emph{empirical risk minimization}~(ERM) \cite{wahba1990spline,bishop1995neural,Vapnik1998,scholkopf2001learning,hastie2009elements,duda2012pattern,sugiyama2015introduction}, assuming that the samples used for training and the samples used for testing follow the \emph{same} probability distribution.
However, this assumption may not be fulfilled in many real-world scenarios.
For example, due to changing environments, autonomous cars trained in good weather also need to work well in bad weather \cite{yu2020bdd100k,koh2021wilds};
due to sample selection bias, the training set may be gender or race imbalanced and the test set is balanced \cite{berk1983introduction,zadrozny2004learning};
and due to privacy concerns or high labeling costs, the labels of the training data may be corrupted or noisy versions of the ground truth \cite{JMLR:v18:16-315,han2020survey}.
Such distribution mismatches may significantly degrade the prediction performance of models trained by standard supervised learning \cite{quionero2009dataset,sugiyama2012machine}.
\clearpage

Fortunately, \emph{transfer learning} provides us useful tools to learn under differing distributions.
Classical transfer learning methods mainly rely on \emph{importance weighting} \cite{cochran2007sampling,fishman2013monte,kahn1953methods}, which handles the distribution mismatch in two steps \cite{shimodaira2000improving,zadrozny2004learning,sugiyama2005input,sugiyama2007covariate,huang2007correcting,sugiyama2008direct,sugiyama2008direct1,kanamori2009least,yamada2011relative}:
\begin{itemize}
\item \emph{importance estimation}: estimate the ratio between test and training densities, a.k.a. the \emph{importance};
\item \emph{importance-weighted ERM}: train a predictive model by weighting the training losses according to the importance in the ERM framework.
\end{itemize}
These classical methods work well, as if there is no distribution mismatch, if the form of data is simple, e.g., some linear model suffices \cite{sugiyama2012density}.
However, nowadays, as the data become increasingly complex, high-dimensional, and dynamical, new challenges arise in transfer learning.
\begin{enumerate}
    \item First, given very complex and high-dimensional data, estimating the test-over-training density ratio becomes significantly difficult.
    However, for the classical two-step methods, the error occured in the first importance estimation step directly propagates to the second importance-weighted training step, which degrades the performance of the learned predictor.
    Can we avoid this problem and use importance weighting to solve transfer learning problems in an end-to-end fashion?
    \item 
    Second, existing transfer learning researches mainly rely on certain assumptions on distribution shifts \cite{sugiyama2012machine,zhang2013domain}.
    However, the type of distribution shift can be unknown or multiple type of distribution shifts can present together.
    In practice, data from different domains may share a common data-generating mechanism, 
    e.g., the source and target distributions are induced by a common causal mechanism.
    Can we leverage such causal mechanisms to tackle the transfer learning problem?
\end{enumerate}

In this article, 
we answer these questions affirmatively and provide our solutions.
In Section~\ref{sec:formulate}, we formulate the ordinary supervised learning and transfer learning problems.
In Section~\ref{sec:iw} and Section~\ref{sec:aiw}, we review classical two-step importance weighting based methods for transfer learning.
In Section~\ref{sec:ubm}, we answer the first question by proposing a one-step \emph{joint} approach which integrates the importance estimation step and the importance-weighted ERM step by minimizing an upper bound of the test risk \cite{zhang2020one,zhang2021one}.
In Section~\ref{sec:diw}, we further explore the first question by proposing an end-to-end \emph{dynamic} approach that iterates between importance estimation and importance-weighted ERM and combines them in a seamless manner.
In Section~\ref{sec:causal}, we answer the second question by proposing a causal mechanism transfer approach that incorporates the causal structure in transfer learning.
We conclude this article and discuss future works in Section~\ref{sec:concl}.
\section{Problem Formulation}
\label{sec:formulate}
In this section, we introduce the problem setups for ordinary supervised learning and transfer learning.

\subsection{Ordinary Supervised Learning}\label{sec:formulate:osl}
Let us consider the \emph{supervised learning} problem of estimating an unknown input-output relationship from training samples.
Let $\cX\subset\bR^{d_\mathrm{in}}$ be the input feature space, and $\cY$ be the output label space where
$\cY\subset\bR$ for regression,
$\cY\coloneqq\{+1,-1\}$ for binary classification,
and $\cY\coloneqq[k]$ for multi-class classification.
Here, $d_\mathrm{in}$ denotes the input dimension,
$k$ $(\ge 3)$ denotes the number of classes,
and $[k]\coloneqq\{1,2,\ldots,k\}$.
Let $\cD_\tr\coloneqq\left\{\left(\bx_i^\tr,  y_i^\tr\right)\right\}_{i=1}^{n_\tr}$ be the training samples in the source domain, where $n_\tr$ is the training sample size, $\bx_i^\tr \in \cX$ and $y_i^\tr \in \cY$ are the training input and output drawn independently from a source domain density $p_\tr(\bx,y)$.
Any $p_\tr(\bx,y)$ can be decomposed in two ways, using either
\begin{itemize}
    \item the class conditional density $p_\tr(\bx\mid y)$ and the class prior probability $p_\tr(y)$, i.e., $p_\tr(\bx,y)=p_\tr(\bx\mid y)p_\tr(y)$;
    \item the marginal density $p_\tr(\bx)$ and the class posterior probability $p_\tr(y\mid\bx)$, i.e., $p_\tr(\bx,y)=p_\tr(\bx)p_\tr(y\mid \bx)$.
\end{itemize}
Let $(\bx^\te, y^\te)$ be a test sample in the target domain, where $\bx^\te \in \cX$ and $y^\te \in \cY$ are the test input and output drawn from a target domain density $p_\te(\bx,y)$.
Note that the test sample is not given 
in the training phase, but will be given in the test phase in the future.

The goal of supervised learning is to learn a predictor 
$f\colon\cX\rightarrow\bR^{d_\mathrm{out}}$
that minimizes the expected test error, also known as the \emph{risk}:
\begin{align}
    \label{eq: risk}
    R(f) \coloneqq \bE_{(\bx^\te, y^\te)\sim p_\te(\bx, y)}\left[\ell(f(\bx^\te), y^\te)\right].
\end{align}
Here, $d_\mathrm{out}=1$ for regression and binary classification,
and $d_\mathrm{out}=k$ for multi-class classification.
$\bE$ denotes the expectation, and 
$\ell$ denotes a real-valued \emph{loss function} that measures the discrepancy between the true output value $y$ and its predicted value $f(\bx)$.
Typically, the predicted output is given by 
\begin{align*}
    \widehat{y}=
    \begin{cases}
    f(\bx)& \mathrm{for~regression},\\
    \sign(f(\bx))& \mathrm{for~binary~classification},\\
    \argmax_{j\in [k]}(f(\bx))_j& \mathrm{for~multi\textit{-}class~classification},
    \end{cases}
\end{align*}
where $(f(\bx))_j$ is the $j$-th element of $f(\bx)$.
For regression problems, $\ell$ in \eqref{eq: risk} is often chosen as the \emph{squared loss}
\begin{align*}
    \ell_\mathrm{s}(f(\bx), y)=(\widehat{y}-y)^2
\end{align*}
for training and evaluation purposes.
For classification problems, $\ell$ in \eqref{eq: risk} is often chosen as the \emph{zero-one loss}
\begin{align*}
    \ell_\mathrm{01}(f(\bx), y)=I(\widehat{y}\neq y)
\end{align*}
for evaluation, where $I$ is the indicator function.
However, since $\ell_\mathrm{01}$ 
is discontinuous and therefore difficult to optimize, we often replace it with some \emph{surrogate loss} \cite{bendavid06jcss,bartlett06jasa}.
For example, $\ell_\mathrm{01}$ is replaced by the 
\emph{hinge loss} 
\begin{align*}
    \ell_{\mathrm{h}}(f(\bx), y)=\operatorname{max}(0, 1-yf(\bx))
\end{align*}
for binary classification,
and is replaced by the \emph{softmax cross-entropy loss} 
\begin{align*}
    \ell_{\mathrm{ce}}(f(\boldsymbol{x}), y)&=-\sum\nolimits_{j=1}^k I(y=j)\log\left(\frac{\exp((f(\boldsymbol{x}))_j)}{\sum_{i=1}^k \exp((f(\boldsymbol{x}))_i)}\right)\\
    &=\log\left(\sum\nolimits_{i=1}^k\exp((f(\boldsymbol{x}))_i)\right)-(f(\boldsymbol{x}))_y
\end{align*}
for multi-class classification.

As the target domain density $p_\te(\bx, y)$ in \eqref{eq: risk} remains unknown, in supervised learning theories, 
it is commonly assumed that the training samples and test samples follow the same probability density, i.e., $p_\tr(\bx,y)=p_\te(\bx,y)$ \cite{wahba1990spline,bishop1995neural,Vapnik1998,scholkopf2001learning,hastie2009elements,duda2012pattern,sugiyama2015introduction}.
With this assumption, \emph{empirical risk minimization}~(ERM) \cite{Vapnik1998} is a common practice in supervised learning that learns a predictor $f$ with training samples through the following procedures:
\begin{enumerate}
    \item Choose a loss function $\ell(\cdot)$, so that the risk $R(f)$ is defined;
    \item Choose a model $\cF$, so that our goal is $\min_{f\in\cF}R(f)$;
    \item Approximate $R(f)$ by replacing the expectation by the sample average (under $p_\tr(\bx,y)=p_\te(\bx,y)$):
    \begin{align}
        \label{eq:empirical-risk}%
        \widehat{R}(f) = \frac{1}{n_\tr}\sum_{i=1}^{n_\tr}\left[\ell(f(\bx_i^\tr), y_i^\tr)\right];
    \end{align}
    \item Minimize the empirical risk $\widehat{R}(f)$, with appropriate regularization, by a favorite optimization algorithm:
    \begin{align}
    \label{eq:empirical-risk-minimizer}%
    \widehat{f}=\argmin_{f\in\cF}\widehat{R}(f).
    \end{align}
\end{enumerate}
ERM defines a family of learning algorithms and thanks to its mathematically clean formulation, it enables us to conduct theoretical analysis on the prediction performance.
Indeed, the empirical risk estimator \eqref{eq:empirical-risk} is \emph{unbiased} and the empirical risk minimizer \eqref{eq:empirical-risk-minimizer} is \emph{consistent}, i.e., $\widehat{f}$ in \eqref{eq:empirical-risk-minimizer} will converge to the optimal predictor $f^*=\argmin_{f\in\cF}R(f)$ as $n_\tr\to\infty$ \cite{Vapnik1998}.

\subsection{Transfer Learning}
In many real-world problems, the training samples in the source domain and the test samples in the target domain are drawn from different densities, i.e., $p_\tr(\bx,y)\neq p_\te(\bx,y)$.
In that case, ERM is not generally consistent anymore.
Learning under such differing distributions is called \emph{transfer learning} or \emph{domain adaptation} \cite{pan2009survey,quionero2009dataset,sugiyama2012machine,yang2020transfer}.

Due to various assumptions on how the joint distribution shifts from the source domain to the target domain, several transfer learning scenarios have been considered in the literature:
\begin{itemize}
    \item \emph{Covariate shift} is a traditional learning setting, which assumes that $p_\tr(\bx)\neq p_\te(\bx)$
    and $p_\tr(y\mid\bx)=p_\te(y\mid\bx)$ \cite{shimodaira2000improving,huang2007correcting,sugiyama2007covariate,sugiyama2012machine,zhang2020one,zhang2021one}.
    Even though $p(y\mid\bx)$ does not change, $p_\tr(\bx)\neq p_\te(\bx)$ may break the consistency of ERM which occurs if the model $\cF$ is not expressive enough, so that fitting the training data does not fit the test data \cite{shimodaira2000improving}.
    Covariate shift is a primal target in Sections~\ref{sec:iw},~\ref{sec:aiw},~and \ref{sec:ubm}.
    
    \item \emph{Output noise} is a common learning setting, where $p_\tr(y\mid\bx)\neq p_\te(y\mid\bx)$ and $p_\tr(\bx)=p_\te(\bx)$ which is opposite to covariate shift \cite{angluin1988learning,han2020survey}.
    In this scenario, we consider a label corruption process $p(\cy\mid y,\bx)$ where $\cy$ denotes the corrupted label so that $p_\tr(\cy\mid\bx)=\sum_y p(\cy\mid y,\bx)\cdot p_\te(y\mid\bx)$, i.e., a label $y$ may flip to every corrupted label $\cy\neq y$ with probability $p(\cy\mid y,\bx)$.
    Such output noise is extremely detrimental to ERM training, since an over-parameterized $f$ is able to fit any training data even with random labels \cite{zhang2016understanding}.
    
    \item \emph{Class-prior/target shift} is also a frequently encountered setting in practice, where $p_\tr(y)\neq p_\te(y)$ while $p_\tr(\bx\mid y)=p_\te(\bx\mid y)$ \cite{japkowicz2002class,he2009learning,zhang2013domain,huang2016learning,buda2018systematic,lipton2018detecting}.
    Under this shift, $f$ will emphasize over-represented classes and neglect under-represented classes, which may raise fairness issues \cite{cao2019learning}.
    
    \item \emph{Class-conditional shift} refers to the setting that $p_\tr(\bx\mid y)\neq p_\te(\bx\mid y)$ and $p_\tr(y)=p_\te(y)$ \cite{zhang2013domain}.
    Since the estimation of $p_\te(\bx\mid y)$ in this setting is in general ill-posed, a further assumption that there exists a transformation $\tau$ such that the conditional densities agree, i.e., $p_\tr(\tau(\bx)\mid y)=p_\te(\tau(\bx)\mid y)$, is often needed to solve the class-conditional shift problems \cite{gong2016domain,yu2020label}.
    
    
    \item \emph{Full-distribution shift} is the most general setting where we consider $p_\tr(\bx,y)\neq p_\te(\bx,y)$ without any further information.
    The problem is extremely challenging and the use of validation data from the target domain is essential \cite{fang2020rethinking}.
    In principle, importance weighting is also applicable if validation data from the target domain is available.
    We discuss the details in Section~\ref{sec:diw}.
    \item \emph{Independent-component shift} also considers $p_\tr(\bx,y)\neq p_\te(\bx,y)$ but with an additional prior knowledge about the underlying generative mechanism. In particular, we consider the case where the distribution shift is attributed to a latent distribution shift of the \emph{independent components} (ICs) and the mixing function to generate the data from the ICs is identical across the domains. We discuss this case in Section~\ref{sec:causal}.
\end{itemize}
\section{Importance Weighting for Covariate Shift Adaptation}
\label{sec:iw}
In this section, we briefly introduce the \textit{importance weighting} for covariate shift adaptation, 
including importance-weighted ERM and some representative approaches for direct importance estimation.

\subsection{Importance-Weighted ERM}
\label{subsec:iwerm}
As discussed in Section~\ref{sec:formulate}, covariate shift may cause the failure of ERM. 
Importance weighting \cite{quionero2009dataset, sugiyama2012machine, shimodaira2000improving, sugiyama2007covariate} is a standard method for mitigating this problem. 
The essential idea of importance weighting under covariate shift is as follows:
\begin{align*}
    \bE_{(\bx^\te, y^\te)\sim p_\te(\bx, y)}\left[\ell(f(\bx^\te), y^\te)\right] = \bE_{(\bx^\tr, y^\tr)\sim p_\tr(\bx, y)}\left[w(\bx^\tr)\ell(f(\bx^\tr), y^\tr)\right],
\end{align*}
where $w(\bx) = \frac{p_\te(\bx)}{p_\tr(\bx)}$ is referred to as the \textit{importance}. This implies the expectation for any loss function $\ell(\cdot)$ over the test distribution is the importance-weighted expectation of the loss over the training distribution. This is how covariate shift is mitigated by importance weighting. 

\textit{Importance-weighted ERM} (IWERM) is accordingly defined as
\begin{align}
    \label{eq: weighted_erm}
    \min_{f\in\cF}\frac{1}{n_\tr}\sum_{i=1}^{n_\tr}\left[w(\bx_i^\tr)\ell(f(\bx_i^\tr), y_i^\tr)\right].
\end{align} 
For any fixed $f\in\cF$, the importance weighted empirical risk is an unbiased estimator of the test risk and its minimizer is consistent \cite{shimodaira2000improving}.

\subsection{Direct Importance Estimation}
\label{subsec:w_estimation}
In the importance-weighted learning objective \eqref{eq: weighted_erm}, the importance remains unknown.
Next, we review approaches for directly estimating the importance $w(\bx)$.
We assume that test input data $\{\bx_j^\te\}_{j=1}^{n_{\te}}$ drawn independently from $p_\te(\bx)$ are available for importance estimation, in addition to the training input data $\{\bx_i^\tr\}_{i=1}^{n_{\tr}}$.

\subsubsection{Kernel Mean Matching (KMM)}
\label{subsubsec: kmm}
\textit{Kernel mean matching}~(KMM) \cite{huang2007correcting} aims to find $w(\bx)$ that minimizes the mean discrepancy of nonlinear transformations of data drawn from $p_\tr$ and $p_\te$ in a \textit{reproducing kernel Hilbert space} (RKHS), that is, for an RKHS $\cH$,
\begin{equation*}
    \begin{split}
    &\min_{w(\bx)}\left\|\bE_{\bx^\te\sim p_\te(\bx)}\left[K(\bx^\te, \cdot)\right]-\bE_{\bx^\tr\sim p_\tr(\bx)}\left[w(\bx^\tr)K(\bx^\tr, \cdot)\right]\right \|_\cH ^{2}\\
    &\mathrm{subject~to~} \bE_{\bx^\tr\sim p_\tr(\bx)}\left[w(\bx^\tr)\right]=1 \mathrm{~and~} w(\bx)\geq 0 \mathrm{~for~all~} \bx\in\mathcal{X},
    \end{split}
\end{equation*}
where $\left\| \cdot \right\|_\cH$ denotes the norm in $\cH$ and $K(\bx, \bx')$ is a kernel function \cite{10.5555/345662}.
An empirical version of the above optimization problem can be written as
\begin{equation*}
\begin{split}
    &\min_{\left\{ w_i\right\}_{i=1}^{n_\tr}}\left[\frac{1}{2}\sum_{i, i'=1}^{n_\tr}w_iw_i'K(\bx_i^\tr, \bx_{i'}^\tr)-\sum_{i=1}^{n_\tr}w_i\kappa_i \right]\\
    &\mathrm{~subject~to~} \frac{1}{n_\tr}\left|\sum_{i=1}^{n_\tr}w_i-n_\tr\right|\leq\epsilon \mathrm{~and~} 0\leq w_1, \ldots, w_{n_\tr}\leq B, 
\end{split}
\end{equation*}
where $\kappa_i=\frac{n_\tr}{n_\te}\sum_{j=1}^{n_\te}K(\bx_i^\tr, \bx_j^\te)$, and $B \geq 0$ and $\epsilon \geq 0$ are parameters for controlling the strength of regularization.

\subsubsection{Least-Squares Importance Fitting (LSIF)}
\label{subsubsec: lsif}
\emph{Least-squares importance fitting} (LSIF) \cite{kanamori2009least} adopts the squared loss for importance fitting. We model the importance $w(\bx)$ using a \emph{linear-in-parameter model}:
\begin{align*}
  g(\bx)=\sum\nolimits_{l=1}^\nparams\beta_l \psi_l(\bx)=\bbeta^\top\bpsi(\bx),
\end{align*}
where $\bbeta=(\beta_1,\ldots,\beta_\nparams)^\top\in\bR^\nparams$ are the parameters to be learned,
\begin{align*}
\bpsi(\bx)=(\psi_1(\bx),\ldots,\psi_\nparams(\bx))^\top\in\bR^\nparams
\end{align*}
is a vector of \emph{basis functions}, and $\nparams$ denotes the number of basis functions.
As basis functions, Gaussian kernels are practically useful \cite{10.5555/345662},
where $\psi_l(\bx):=\exp(-\frac{\left\|\bx-\bx'\right\|^2}{2\sigma^2})$ for some $\bx'\in\bR^{d_\mathrm{in}}$ and $\sigma>0$.
The parameters $\bbeta$ in $g(\bx)$ are determined by minimizing the following squared-error $J$:
\begin{align}
\label{squared-error-obj}
    J(\bbeta) & =\frac{1}{2}\bE_{\bx^\tr\sim p_\tr(\bx)}\left[(g(\bx^\tr)-w(\bx^\tr))^2\right] \notag\\
    & = \frac{1}{2}\bE_{\bx^\tr\sim p_\tr(\bx)}\left[g^2(\bx^\tr)\right] - \bE_{\bx^\tr\sim p_\tr(\bx)}\left[g(\bx^\tr)w(\bx^\tr)\right] + \frac{1}{2}\bE_{\bx^\tr\sim p_\tr(\bx)}\left[w^2(\bx^\tr)\right], \notag\\
    & = \underbrace{\frac{1}{2}\bE_{\bx^\tr\sim p_\tr(\bx)}\left[g^2(\bx^\tr)\right] - \bE_{\bx^\te\sim p_\te(\bx)}\left[g(\bx^\te)\right]}_{:=J'(\bbeta)} + \underbrace{\frac{1}{2}\bE_{\bx^\tr\sim p_\tr(\bx)}\left[w^2(\bx^\tr)\right]}_{:=C}.
\end{align}
The last term $C$ is a constant with respect to $\bbeta$ and thus it can be safely ignored.
Approximating the expectations in $J'(\bbeta)$, we have
\begin{align*}
    \widehat{J'}(\bbeta) & = \frac{1}{2n_\tr}\sum\nolimits_{i=1}^{n_\tr}g^2(\bx_i^\tr)-\frac{1}{n_\te}\sum\nolimits_{j=1}^{n_\te}g(\bx_j^\te) \\
    & = \frac{1}{2}\sum\nolimits_{l,l'=1}^\nparams\beta_l\beta_{l'}\left(\frac{1}{n_\tr}\sum\nolimits_{i=1}^{n_\tr}\psi_{l}(\bx_i^\tr)\psi_{l'}(\bx_i^\tr)\right)
    - \sum_{l=1}^\nparams\beta_l\left(\frac{1}{n_\te}\sum_{j=1}^{n_\te}\psi_{l}(\bx_j^\te)\right) \\
    & = \frac{1}{2}\bbeta^{\T}\widehat{\boldsymbol{H}}\mathbb{\bbeta}-\widehat{\boldsymbol{h}}^{\T}\bbeta,
\end{align*}
where the $(l, l')$-th element of $\widehat{\boldsymbol{H}}$ is $\frac{1}{n_\tr}\sum_{i=1}^{n_\tr}\psi_{l}(\bx_i^\tr)\psi_{l'}(\bx_i^\tr)$ and $l$-th element of $\widehat{\boldsymbol{h}}$ is $\frac{1}{n_\te}\sum_{j=1}^{n_\te}\psi_{l}(\bx_j^\te)$.

Since the importance $w(\bx)$ is non-negative by definition, we may practically solve the following optimization problem:
\begin{align}
    \label{eq: lsif}
    \min_\beta \left[ \frac{1}{2}\bbeta^{\T}\widehat{\boldsymbol{H}}\bbeta-\widehat{\boldsymbol{h}}^{\T}\bbeta+\lambda \textbf{1}_\nparams^\T\bbeta \right]
    \mathrm{~subject~to~} \bbeta\geq\textbf{0}_\nparams,
\end{align} 
where $\textbf{1}_\nparams$ and $\textbf{0}_\nparams$ are the $\nparams$-dimensional vectors with all ones and zeros. $\bbeta\geq\textbf{0}_\nparams$ is applied in the element-wise manner. The term $\lambda \textbf{1}_\nparams^\T\bbeta$ is for regularization and a non-negative $\lambda$ is the parameter controlling the regularization effect. Note that \eqref{eq: lsif} is a convex quadratic programming problem and the unique global optimal solution can be obtained by using some standard optimization packages. 

\emph{Unconstrained LSIF} (uLSIF) \cite{kanamori2009least} is an approximation method for LSIF by removing the non-negativity constraint in \eqref{eq: lsif}. This leads to the following unconstrained optimization problem:
\begin{align*}
        \min_{\beta \in \bR^{\nparams}} \left[ \frac{1}{2}\bbeta^{\T}\widehat{\boldsymbol{H}}\bbeta-\widehat{\boldsymbol{h}}^{\T}\bbeta+\frac{\lambda}{2} \bbeta^\T\bbeta \right]. 
\end{align*}
Here $\frac{\lambda}{2} \bbeta^\T\bbeta$ is introduced for regularization rather than $\lambda\textbf{1}_{\nparams}^\T\bbeta$ because the linear term does not work as a regularizer without the nonnegativity constraint. The solution can be analytically computed as
$
    \tilde{\bbeta} = (\widehat{\boldsymbol{H}}+\lambda\boldsymbol{I}_{\nparams})^{-1}\widehat{\boldsymbol{h}},
$
where $\boldsymbol{I}_{\nparams}$ is the ${\nparams}$-dimensional identity matrix.

To compensate for the approximation error that some of the learned $\bbeta$ can be negative, the solution is modified as
$
 \widehat{\bbeta} = \max(\textbf{0}_{\nparams}, \tilde{\bbeta}),
$
where the $\max$ operation is applied in the element-wise manner for a pair of vectors. 

\section{Adaptive Importance Weighting for Covariate Shift Adaptation
}
\label{sec:aiw}
Although IWERM gives a consistent predictor as shown in Section~\ref{sec:iw}, it can also produce an unstable predictor due to high variance of the importance weights for the training samples, which indicates that IWERM may not be the best method in the finite-sample case \cite{shimodaira2000improving}. 
Therefore, in practice, it is preferable to slightly stabilize IWERM during training. 
In this section, we introduce two such adaptive methods for covariate shift adaptation.

\subsection{Exponentially-flattened IWERM (EIWERM)}
\label{subsec:eiwerm}
The first method flattening the importance weights in IWERM is \emph{exponentially-flattened IWERM} (EIWERM) \cite{shimodaira2000improving}:
\begin{align*}
    \min_{f\in\cF}\frac{1}{n_\tr}\sum_{i=1}^{n_\tr}\left[(w(\bx_i^\tr))^\gamma\ell(f(\bx_i^\tr), y_i^\tr)\right],
\end{align*}
where $\gamma$ $(0\le\gamma\le1)$ is called the \emph{flattening parameter}.

The flattening parameter controls stability and consistency of the predictor: $\gamma=0$ corresponds to ordinary ERM (the uniform weight, which yields a stable but inconsistent predictor), and $\gamma=1$ corresponds to IWERM (the importance weight, which yields a consistent but unstable predictor). An intermediate value of $\gamma$ would provide the optimal control of the trade-off between stability and consistency.

\subsection{Relative IWERM (RIWERM)}
\label{riwerm}
A potential drawback of EIWERM when the importance $w(\bx)$ is replaced with an estimate $\widehat{w}(\bx)$ --- its exponent $(\widehat{w}(\bx))^\gamma$ is not necessarily a good estimate of the original exponent $(w(\bx))^\gamma$. Indeed, estimation of $(w(\bx))^\gamma$ may be hard/easy if $\gamma$ is large/small, since the target function $(w(\bx))^\gamma$  is more/less fluctuated.
 To cope with this problem, \emph{relative IWERM} (RIWERM) \cite{yamada2011relative} was proposed which learned a flattened importance weight directly. More specifically, it estimates the $\eta$-relative importance weight $w_\eta(\bx)$ $(0\le\eta\le 1)$ defined as
\begin{align*}
    w_{\eta}(\bx) \coloneqq \frac{p_\te(\bx)}{\eta p_\te(\bx) + (1-\eta)p_\tr(\bx)}.
\end{align*}
Consequently, RIWERM minimizes the following weighted loss in the same way as uLSIF:
\begin{align*}
    \min_{f\in\cF}\frac{1}{n_\tr}\sum_{i=1}^{n_\tr}\left[w_{\eta}(\bx_i^\tr)\ell(f(\bx_i^\tr), y_i^\tr)\right].
\end{align*}

Similarly to the role of flattening parameter $\gamma$ in EIWERM, the parameter $\eta$ also controls stability and consistency of the predictor: $\eta=0$ corresponds to IWERM and $\eta=1$ corresponds to the ordinary ERM.

To estimate the relative importance $w_{\eta}(\bx)$, \emph{relative uLSIF} (RuLSIF) was proposed and it learns a relative importance model $g_{\eta}(\bx)$ by minimizing the following expected squared-error $J_\eta$:
\begin{align*}
    J_\eta(g_\eta) & =\frac{1}{2}\bE_{\eta p_\te(\bx) + (1-\eta)p_\tr(\bx)}\left[(g_\eta(\bx)-w_\eta(\bx))^2\right] \\
    & = \frac{\eta}{2}\bE_{p_\te(\bx)}\left[g^2_\eta(\bx)\right] + \frac{1-\eta}{2}\bE_{p_\tr(\bx)}\left[g^2_\eta(\bx)\right] - \bE_{p_\te(\bx)}\left[(g_\eta(\bx)\right] + \text{Const}.
\end{align*}
The analytical solution minimizing the empirical version of the objective $J_\eta$ can be obtained similarly to uLSIF (see Section~\ref{subsubsec: lsif}) when the linear-in-parameter model is used and the details can be found in \cite{yamada2011relative}.
\newcommand{\tabincell}[2]{\begin{tabular}{@{}#1@{}}#2\end{tabular}}

\section{Upper Bound Minimization for Covariate Shift Adaptation}
\label{sec:ubm}
As we have seen in Sections~\ref{sec:iw} and~\ref{sec:aiw}, covariate shift adaptation methods usually need an intermediate step for estimating the importance. A natural question is whether we can have a more direct way to solve the covariate shift adaptation problem.
In this section, we answer it affirmatively by introducing
a one-step approach for covariate shift adaptation \cite{zhang2020one,zhang2021one} which integrates the importance estimation step and the importance-weighted ERM step by minimizing an upper bound of the test risk.


\subsection{One-step Approach}
\label{subsec:osa}
First, we derive an upper bound of the test risk \eqref{eq: risk}, which is the key of the one-step approach.

\begin{theorem}\cite{zhang2020one}%
\label{thm: upper bound}
~
Let $w(\bx)$ be the importance $p_\te(\bx)/p_\tr(\bx)$ and $\cF\subseteq \{f\colon\cX\to\bR\}$ be a given hypothesis set.
Suppose that there is a constant $m \in \bR$ such that $\ell(f(\bx), y)\leq m$ holds for every $f\in\cF$ and every $(\bx, y)\in\cX\times\cY$.
Then, for any $f\in\cF$ and any measurable function $g\colon\cX\to\bR$, the test risk is bounded as follows under covariate shift:
\begin{align}
    \label{eq: upper bound1}
    \frac{1}{2} R^2(f) \le J(f, g) &\coloneqq \left(\bE_{(\bx^\tr, y^\tr)\sim p_\tr(\bx, y)}\left[g(\bx^\tr)\ell(f(\bx^\tr), y^\tr)\right]\right)^2\notag\\
    &\phantom{\coloneqq}\ + m^2\bE_{\bx^\tr\sim p_\tr(\bx)}\left[\left(g(\bx^\tr)-w(\bx^\tr)\right)^2\right].
\end{align}
Furthermore, if $g$ is non-negative and $\ell_\mathrm{UB}$ bounds $\ell$ from above,
we have
\begin{align}
    \label{eq: upper bound2}
    J(f, g) \le J_\mathrm{UB}(f, g)
    &\coloneqq \left(\bE_{(\bx^\tr, y^\tr)\sim p_\tr(\bx, y)}\left[g(\bx^\tr)\ell_\mathrm{UB}(f(\bx^\tr), y^\tr)\right]\right)^2\notag\\
    &\phantom{\coloneqq}\ + m^2\bE_{\bx^\tr\sim p_\tr(\bx)}\left[\left(g(\bx^\tr)-w(\bx^\tr)\right)^2\right].
\end{align}
\end{theorem}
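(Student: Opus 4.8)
The plan is to prove the two displayed inequalities separately, since they are logically independent (the second only relates $J$ to $J_\mathrm{UB}$). For \eqref{eq: upper bound1} I would first rewrite the test risk as a training-domain expectation. Under covariate shift $p_\te(y\mid\bx)=p_\tr(y\mid\bx)$, so the importance-weighting identity from Section~\ref{subsec:iwerm} gives $R(f)=\bE_{p_\tr}[w(\bx^\tr)\ell(f(\bx^\tr),y^\tr)]$, where I abbreviate $\bE_{p_\tr}\coloneqq\bE_{(\bx^\tr,y^\tr)\sim p_\tr(\bx,y)}$. The central maneuver is then to insert the surrogate weight $g$ via the split $w=g+(w-g)$, which decomposes the risk as $R(f)=A+B$ with $A\coloneqq\bE_{p_\tr}[g(\bx^\tr)\ell(f(\bx^\tr),y^\tr)]$ and $B\coloneqq\bE_{p_\tr}[(w(\bx^\tr)-g(\bx^\tr))\ell(f(\bx^\tr),y^\tr)]$.

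Next I would apply the elementary inequality $(A+B)^2\le 2A^2+2B^2$ to obtain $\frac{1}{2}R^2(f)\le A^2+B^2$. The term $A^2$ is already the first summand of $J(f,g)$, so only $B^2$ remains. Here I would invoke the Cauchy--Schwarz inequality for $p_\tr$-expectations, $|B|\le\sqrt{\bE_{p_\tr}[(w(\bx^\tr)-g(\bx^\tr))^2]}\,\sqrt{\bE_{p_\tr}[\ell^2(f(\bx^\tr),y^\tr)]}$, and then use the uniform bound $0\le\ell\le m$ to replace $\bE_{p_\tr}[\ell^2]$ by $m^2$. This yields $B^2\le m^2\bE_{p_\tr}[(g(\bx^\tr)-w(\bx^\tr))^2]$, precisely the second summand of $J(f,g)$, which closes \eqref{eq: upper bound1}.

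For \eqref{eq: upper bound2} I would note that $J$ and $J_\mathrm{UB}$ share an identical second term, so it suffices to compare the squared first terms. Because $g$ is non-negative and $\ell_\mathrm{UB}$ dominates $\ell$ pointwise (with $\ell\ge 0$), we have $0\le g(\bx)\ell(f(\bx),y)\le g(\bx)\ell_\mathrm{UB}(f(\bx),y)$ pointwise; taking $\bE_{p_\tr}$ keeps both sides non-negative, and squaring a chain of ordered non-negative reals preserves the order, giving $(\bE_{p_\tr}[g\ell])^2\le(\bE_{p_\tr}[g\ell_\mathrm{UB}])^2$ and hence \eqref{eq: upper bound2}.

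I expect the only genuinely delicate point to be the boundedness step in the first part: Cauchy--Schwarz on its own leaves the factor $\bE_{p_\tr}[\ell^2]$, and it is the hypothesis $\ell\le m$ together with non-negativity of the loss (implicit in the surrogate-loss setup, where the squared, hinge, cross-entropy, and zero-one losses are all non-negative) that converts it into the clean constant $m^2$. Conceptually, the main idea is the decomposition $w=g+(w-g)$ paired with the $(A+B)^2\le 2A^2+2B^2$ splitting, since this is exactly what makes the two summands of $J$ emerge naturally; once that structure is spotted, the remaining estimates are routine.
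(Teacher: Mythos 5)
Your proof is correct and takes essentially the same route as the paper's: the decomposition $w = g + (w-g)$, the splitting $\tfrac{1}{2}(A+B)^2 \le A^2 + B^2$, Cauchy--Schwarz on the residual term $B$, and the uniform bound $\ell \le m$ to replace $\bE_{p_\tr}[\ell^2]$ by $m^2$, with the second inequality handled by pointwise domination exactly as the paper (which merely calls it ``obvious''). Your explicit remark that non-negativity of the loss is what licenses both the step $\bE_{p_\tr}[\ell^2]\le m^2$ and the order-preserving squaring in the second part is a detail the paper leaves implicit, but it is the same argument.
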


\begin{proof}
According to the Cauchy-Schwarz inequality, we have
\begin{align*}
    \frac{1}{2}R^2(f)
    &=\frac{1}{2}\left(\bE_{p_\tr(\bx, y)}\left[w(\bx^\tr)\ell(f(\bx^\tr), y^\tr)\right]\right)^2\\
    &\leq\left(\bE_{p_\tr(\bx, y)}\left[g(\bx^\tr)\ell(f(\bx^\tr), y^\tr)\right]\right)^2\\
    &\phantom{\leq}\ 
    +\left(\bE_{p_\tr(\bx, y)}\left[\left(w(\bx^\tr)-g(\bx^\tr)\right)\ell(f(\bx^\tr), y^\tr)\right]\right)^2\\
    &\leq\left(\bE_{p_\tr(\bx, y)}\left[g(\bx^\tr)\ell(f(\bx^\tr), y^\tr)\right]\right)^2\\
    &\phantom{\leq}\ 
    +\bE_{p_\tr(\bx)}\left[\left(g(\bx^\tr)-w(\bx^\tr)\right)^2\right]\bE_{p_\tr(\bx, y)}\left[\ell^2(f(\bx^\tr), y^\tr)\right]\\
    &\leq\left(\bE_{p_\tr(\bx, y)}\left[g(\bx^\tr)\ell(f(\bx^\tr), y^\tr)\right]\right)^2
    +m^2\bE_{p_\tr(\bx)}\left[\left(g(\bx^\tr)-w(\bx^\tr)\right)^2\right],
\end{align*}
This proves \eqref{eq: upper bound1}, and based on this, \eqref{eq: upper bound2} is obvious.
\end{proof}

For classification problems, the test risk $R(f)$ is typically defined by the zero-one loss, and thus the boundedness assumption of the loss function in Theorem~\ref{thm: upper bound} holds with $m=1$.
For regression problems, the squared loss is a typical choice, but it violates the boundedness assumption.
Instead, we define $R(f)$ using Tukey's bisquare loss \cite{beaton1974fitting} (see Fig.~\ref{fig:tukey}).

\begin{figure*}[bt]
  \centering
  \includegraphics[scale=0.5]{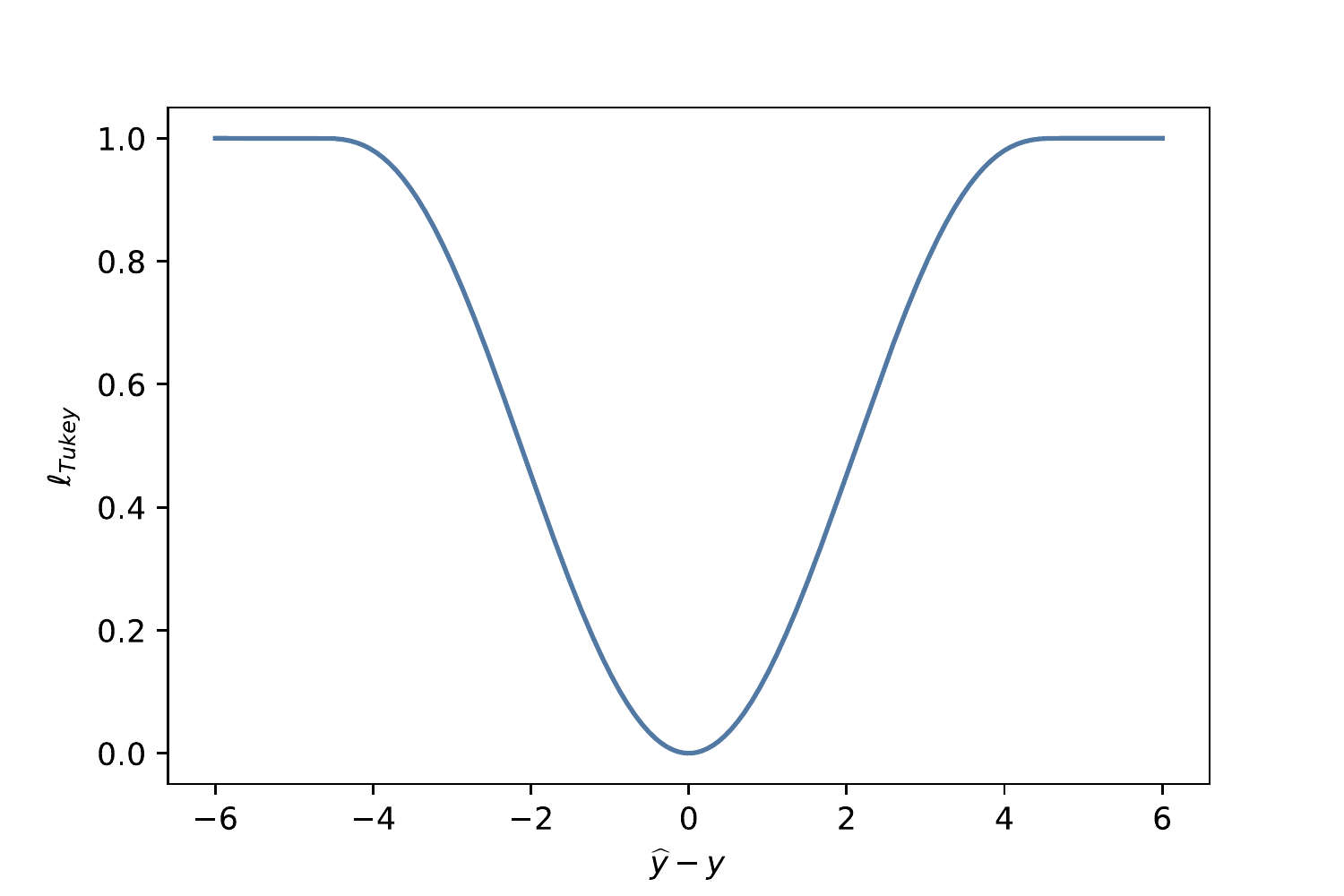}
  \caption{Tukey's loss defined as 
  $\ell_{\rm Tukey}(\widehat{y}, y) \coloneqq \min\left(1-\left[1-(\widehat{y}-y)^2/\rho^2\right]^{3}, 1\right) \le 1$.
  It has been widely used in the context of robust statistics.
  The hyper-parameter $\rho > 0$ is usually set to $4.685$ for this loss function,
  and it provides an asymptotic efficiency $95\%$ of that of least squares for Gaussian noise \cite{andersen2008modern}.
  Here, we rescale the standard Tukey's bisquare loss for convenience, which does not change the minimizer of the test risk.}
  \label{fig:tukey}
\end{figure*}

~\\
\noindent
\emph{Remark:} The two-step approach that first applies uLSIF to estimate the importance weights and then employs IWERM with the estimated importance is equivalent to minimizing the second term of the above upper bounds first and then minimizing the first term, which leads to a sub-optimal solution to the upper-bound minimization. 
~\\

Instead of estimating the unknown $w(\bx)$ for minimizing $R(f)$ as in the previous two-step approaches,
we propose a one-step approach that minimizes the upper bound $J(f, g)$ or $J_\mathrm{UB}(f, g)$ based on Theorem~\ref{thm: upper bound}.

For classification problems, $J(f, g)$ is defined using the zero-one loss, with which training will not be tractable \cite{bendavid06jcss}.
Fortunately, the latter part of Theorem~\ref{thm: upper bound} allows us to minimize $J_\mathrm{UB}(f, g)$ instead,
with $\ell_\mathrm{UB}$ being any (sub-)differentiable approximation that bounds the zero-one loss from above
so that we can apply any gradient method
such as stochastic gradient descent \cite{robbins1951stochastic}.
Examples of such $\ell_\mathrm{UB}$ include the hinge loss and the squared loss in binary classification and the softmax cross-entropy loss in multi-class classification.
For regression problems, Tukey's loss is already sub-differentiable,
but we can use the squared loss that bounds Tukey's loss which makes the optimization problem simpler as described later.
This is again justified by Theorem~\ref{thm: upper bound} with the squared loss used as the upper-bounding loss $\ell_\mathrm{UB}$.

Although the second expectation in $J_\mathrm{UB}(f, g)$ contains an unknown term $w(\bx)$, it can be estimated from the samples on hand, up to addition by a constant $C$ as shown in~\eqref{squared-error-obj}.
Since the true distributions are unknown, we minimize its empirical version $\widehat{J}_\mathrm{UB}(f,g)$ with respect to $f$ and non-negative $g$ in some given hypothesis sets $\cF$ and $\cG_+$:
\begin{align}
    \label{eq: empirical}
    \widehat{J}_\mathrm{UB}(f, g; S)
    &\coloneqq \left(\frac{1}{n_\tr}\sum_{i=1}^{n_\tr}g(\bx^\tr_i)\ell_\mathrm{UB}(f(\bx^\tr_i), y^\tr_i)\right)^2\notag\\
    &\phantom{\coloneqq} +m^2\left(\frac{1}{n_\tr}\sum_{i=1}^{n_\tr}g^2(\bx^\tr_i)-\frac{2}{n_\te}\sum_{j=1}^{n_\te}g(\bx^\te_j)+C\right),
\end{align}
where $S\coloneqq\left\{\left(\bx_i^\tr, y_i^\tr\right)\right\}_{i=1}^{n_\tr}\cup\big\{\bx_j^\te\big\}_{j=1}^{n_\te}$ is the set of sample points.
Notice that constant $C$ can be safely ignored in the minimization.

In Algorithm~\ref{alg:alternating}, we present an alternating minimization algorithm that can be employed when $f(\bx)$ and $g(\bx)$ are linear-in-parameter models used in Section~\ref{subsubsec: lsif}, i.e.,
\begin{equation}
    \label{linear model}
    f(\bx)=\balpha^\top \bphi(\bx)
    \quad \text{and}\quad 
    g(\bx)=\bbeta^\top \bpsi(\bx),
\end{equation}
where $\balpha\in\bR^{b_f}$ and $\bbeta\in\bR^{b_g}$ are parameters, and $\bphi$ and $\bpsi$ are $b_f$-dimensional and $b_g$-dimensional vectors of basis functions.

First, we minimize the objective~\eqref{eq: empirical} with respect to $g$ while fixing $f$.
This step has an analytic solution as shown in Algorithm~\ref{alg:alternating}, Line~6,
where $\bPhi_\tr=(\bphi(\bx^\tr_1),\ldots,\bphi(\bx^\tr_{n_\tr}))^\top$,
$\bPsi_\tr=(\bpsi(\bx^\tr_1),\ldots,\bpsi(\bx^\tr_{n_\tr}))^\top$,
$\bPsi_\te = (\bpsi(\bx^\te_1),\ldots,\bpsi(\bx^\te_{n_\te}))^\top$,
$\boldsymbol{1}=(1,\ldots,1)^\top$,
and $\bI$ is the identity matrix.

Next, we minimize the objective~\eqref{eq: empirical} with respect to $f$ while fixing $g$.
In this step, we can safely ignore the second term and remove the square operation of the first term in the objective~\eqref{eq: empirical}
to reduce the problem into weighted empirical risk minimization (cf.\@ Algorithm~\ref{alg:alternating}, Line~12)
by forcing $g$ to be non-negative with a rounding up technique \cite{kanamori2009least} as shown in Algorithm~\ref{alg:alternating}, Line~7.
For regression problems, the method of iteratively reweighted least squares (IRLS) \cite{beaton1974fitting} can be used for optimizing Tukey's bisquare loss.
In practice, we suggest using the squared loss as a convex approximation of Tukey's loss to obtain a closed-form solution as shown in Algorithm~\ref{alg:alternating}, Line~10 for reducing computation time, and we compare their performance in the experiments.
For classification with linear-in-parameter models using the hinge loss, then the weighted support vector machine \cite{yang2007weighted} can be used.
After this step, we go back to the step for updating $g$ and repeat the procedure.

\begin{algorithm}[t]
\caption{Alternating Minimization with Linear-in-parameter Models}
\label{alg:alternating}
\begin{algorithmic}[1]
\linespread{1.1}\selectfont
\STATE $\balpha_0\gets$ an arbitrary $b_f$-dimensional vector
\STATE $\lambda\gets$ a positive $\ell_2$-regularization parameter
\STATE $\mu\gets$ a positive $\ell_2$-regularization parameter
\FOR{$t=0, 1, \ldots, T-1$}
\STATE $\bl_t\gets(\ell_\mathrm{UB}(\balpha_t^\top\bphi(\bx^\tr_1),y^\tr_1),\ldots,\ell_\mathrm{UB}(\balpha_t^\top\bphi(\bx^\tr_{n_\tr}), y^\tr_{n_\tr}))^\top$
\STATE $\bbeta_{t+1}\leftarrow\left(\frac{1}{n_\tr}\bPsi_\tr^\top\bPsi_\tr+\frac{1}{m^2 n_\tr^2}\bPsi_\tr^\top\bl_t\bl_t^\top\bPsi_\tr+\frac{1}{m^2}\lambda\bI\right)^{-1}\frac{1}{n_\te}\bPsi_\te^\top\boldsymbol{1}$
\STATE $\bbeta_{t+1}\leftarrow\max(\bbeta_{t+1},\boldsymbol{0})$\
\STATE $w_i^{t+1} \gets \bbeta_{t+1}^\top\bpsi(\bx_i^\tr)$, $i=1,\ldots,n_\tr$
\IF{$\ell_\mathrm{UB}$ is the squared loss}
  \STATE $\balpha_{t+1}\leftarrow\left(\bPhi_\tr^\top\bW_{t+1}\bPhi_\tr+\mu n_\tr\bI\right)^{-1}\bPhi_\tr^\top\bW_{t+1}\by_\tr$,\\
  where $\bW_{t+1}=\operatorname{diag}(w_1^{t+1},\ldots,w_{n_\tr}^{t+1})$ and $\by_\tr=(y_1^\tr,\ldots,y_{n_\tr}^\tr)^\top$
\ELSE
  \STATE $\balpha_{t+1}\leftarrow\argmin_{\balpha}\frac{1}{n_\tr}\sum_{i=1}^{n_\tr}w_i^{t+1}\ell_\mathrm{UB}(\balpha_t^\top\bphi(\bx^\tr_i),y^\tr_i)+\mu\balpha^\top\balpha$
\ENDIF
\ENDFOR
\end{algorithmic}
\end{algorithm}

On the other hand, when $f(\bx)$ and $g(\bx)$ are modeled as neural networks, we present a gradient-based alternating minimization algorithm described in Algorithm~\ref{alg:gradient} that is more convenient for training neural networks.

\begin{algorithm}[t]
\caption{Gradient-based Alternating Minimization}
\label{alg:gradient}
\begin{algorithmic}[1]
\linespread{1.1}\selectfont
\STATE $\cD^{\tr}, \cD_{\mathrm{in}}^{\te} \gets \left\{\left(\bx_i^\tr, y_i^\tr\right)\right\}_{i=1}^{n_\tr}, \big\{\bx_j^\te\big\}_{j=1}^{n_\te}$
\STATE $\cA\gets$ a gradient-based optimizer
\STATE $f\gets$ an arbitrary classifier
\FOR{$\mathrm{round}=0, 1, \ldots, \mathrm{numOfRounds}-1$}
\FOR{$\mathrm{epoch}=0, 1, \ldots, \mathrm{numOfEpochsForG}-1$}
\FOR{$k=0, 1, \ldots, \mathrm{numOfMiniBatches}-1$}
\STATE $\cD_k^{\tr}, \cD_{\mathrm{in},k}^{\te}\gets\mathrm{sampleMiniBatch}(\cD^{\tr}, \cD_{\mathrm{in}}^{\te})$
\STATE $g\gets\cA(g, \nabla_g\widehat{J}_\mathrm{UB}(f, g; \cD_k^{\tr}\cup\cD_{\mathrm{in},k}^{\te}))$
\ENDFOR
\ENDFOR
\FOR{$\mathrm{epoch}=0, 1, \ldots, \mathrm{numOfEpochsForF}-1$}
\FOR{$k=0, 1, \ldots, \mathrm{numOfMiniBatches}-1$}
\STATE $\cD_k^{\tr}\gets\mathrm{sampleMiniBatch}(\cD^{\tr})$
\STATE $w_i\gets\max(g(\bx_i), 0)$, $\forall(\bx_i, \cdot)\in\cD_k^{\tr}$
\STATE $w_i\gets w_i/\sum_j w_j$, $\forall i$
\STATE $L_k\gets\sum_{(\bx_i, y_i)\in\cD_k^{\tr}}w_i\ell_\mathrm{UB}(f(\bx_i), y_i)$
\STATE $f\gets\cA(f, \nabla_{f}L_k)$
\ENDFOR
\ENDFOR
\ENDFOR
\end{algorithmic}
\end{algorithm}

\subsection{Theoretical Analysis}
\label{subsec:osa-ta}
In what follows, we establish a generalization error bound for the proposed method in terms of the \emph{Rademacher complexity} \cite{koltchinskii_rademacher_2001} in regression and binary classification. Due to the space limitation, we omit the proofs here --- they can be found in \cite{zhang2020one,zhang2021one}.

\begin{lemma}
\label{lemma: uniform bound}
Assume that
(a) there exist some constants $M\ge m$ and $L>0$ such that $\ell_\mathrm{UB}(f(\bx), y)\leq M$ holds for every $f\in\cF$ and every $(\bx, y)\in\cX\times\cY$ and $y\mapsto\ell_\mathrm{UB}(y, y')$ is $L$-Lipschitz for every fixed $y'\in\cY$;\footnote{This assumption is valid when $\sup _{f \in \cF}\|f\|_{\infty}$ and $\sup_{y\in\cY}|y|$ are bounded.}
(b) there exists some constant $G\geq 1$ such that $g(\bx)\leq G$ for every $g\in\cG_+$ and every $\bx\in\cX$. Let $\cG=\cG_+\cup-\cG_+$. 
Then for any $\delta>0$, with probability at least $1-\delta$ over the draw of $S$, the following holds for all $f\in\cF, g\in\cG_+$ uniformly:
\begin{align}
    \label{eq: uniform bound}
    J_\mathrm{UB}(f, g) \leq &\widehat{J}_\mathrm{UB}(f, g; S) + 8MG\left(M+G\right)\left(L\fR^\tr_{n_\tr}(\cF)+\fR^\tr_{n_\tr}(\cG)\right)\notag\\
    &+ 4M^2\fR^\te_{n_\te}(\cG) + 5M^2G^2\sqrt{\frac{\log\frac{1}{\delta}}{2}}\left(\frac{1}{\sqrt{n_\tr}}+\frac{1}{\sqrt{n_\te}}\right),
\end{align}
where $\fR^\tr_{n_\tr}(\cF)$ and $\fR^\tr_{n_\tr}(\cG)$ are the Rademacher complexities of $\cF$ and $\cG$, respectively, for the sampling of size $n_\tr$ from $p_\tr(\bx)$, and $\fR^\te_{n_\te}(\cG)$ is the Rademacher complexity of $\cG$ for the sampling of size $n_\te$ from $p_\te(\bx)$.
\end{lemma}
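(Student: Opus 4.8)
The plan is to control the uniform deviation $\sup_{f\in\cF,\,g\in\cG_+}\big(J_\mathrm{UB}(f,g)-\widehat{J}_\mathrm{UB}(f,g;S)\big)$ by splitting $J_\mathrm{UB}$ into its two constituent terms and handling each by a symmetrization-plus-Rademacher argument. Write $A(f,g)\coloneqq\bE_{p_\tr(\bx,y)}[g(\bx^\tr)\ell_\mathrm{UB}(f(\bx^\tr),y^\tr)]$ with empirical counterpart $\widehat A(f,g)$ over the $n_\tr$ training points, and let $B(g)$ and $\widehat B(g)$ denote the density-ratio-fitting term $\bE_{p_\tr}[(g-w)^2]$ and its empirical estimate. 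The identity enabling estimation of $B$ is the importance-weighting relation $\bE_{p_\tr(\bx)}[g(\bx)w(\bx)]=\bE_{p_\te(\bx)}[g(\bx)]$, so that $B(g)=\bE_{p_\tr}[g^2]-2\bE_{p_\te}[g]+C$ is estimated unbiasedly (up to the constant $C$, which cancels in $B-\widehat B$) by averaging $g^2$ over training inputs and $g$ over test inputs. It therefore suffices to bound $A^2-\widehat A^2$ and $m^2(B-\widehat B)$ separately and add the two.

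For the loss term, I would first linearize the square via $A^2-\widehat A^2=(A+\widehat A)(A-\widehat A)$ and use $|A|,|\widehat A|\le MG$ (since $g\le G$ and $\ell_\mathrm{UB}\le M$) to get $|A^2-\widehat A^2|\le 2MG\,|A-\widehat A|$. It then remains to bound $\sup_{f,g}(A-\widehat A)$, a centered empirical process over the product class $\{(\bx,y)\mapsto g(\bx)\ell_\mathrm{UB}(f(\bx),y):f\in\cF,g\in\cG_+\}$. By standard symmetrization this is controlled by the Rademacher complexity of that product class, which I would bound by a product-of-classes Rademacher lemma; such a lemma bounds the complexity of the product of a $G$-bounded class and an $M$-bounded class by a constant times $(M+G)\big(\fR_n(\cG_+)+\fR_n(\ell_\mathrm{UB}\circ\cF)\big)$, which is the origin of the symmetric factor $(M+G)$ in the statement. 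Applying the Ledoux--Talagrand contraction lemma to peel off the $L$-Lipschitz $\ell_\mathrm{UB}$ then yields $\fR_n(\ell_\mathrm{UB}\circ\cF)\le L\,\fR^\tr_{n_\tr}(\cF)$, producing the factor $L$ in front of $\fR^\tr_{n_\tr}(\cF)$. Passing from $\cG_+$ to the symmetrized class $\cG=\cG_+\cup-\cG_+$ lets me handle the two-sided deviation and explains why the statement is phrased in terms of $\fR^\tr_{n_\tr}(\cG)$.

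For the density-ratio term, $m^2\le M^2$ lets me replace $m^2$ by $M^2$ throughout. I would split $B-\widehat B$ into the training part $\bE_{p_\tr}[g^2]-\frac1{n_\tr}\sum_i g^2(\bx_i^\tr)$ and the test part $-2\big(\bE_{p_\te}[g]-\frac1{n_\te}\sum_j g(\bx_j^\te)\big)$. The test part is directly a centered process over $\cG_+$ and, after symmetrization, yields the $4M^2\fR^\te_{n_\te}(\cG)$ term; the training part needs one further contraction step because $g\mapsto g^2$ is $2G$-Lipschitz on $[-G,G]$, producing an extra $\fR^\tr_{n_\tr}(\cG)$ contribution that merges with the one coming from the loss term.

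Finally, I would upgrade each expectation bound to a high-probability bound via McDiarmid's bounded-difference inequality. The bounded-difference constant for each empirical average is $O(M^2G^2/n)$ per replaced sample, and since $A^2$ and $g^2$ are nonlinear but still uniformly bounded by $M^2G^2$, the differences stay controlled; aggregating the $n_\tr$- and $n_\te$-based fluctuations gives the final $5M^2G^2\sqrt{\log(1/\delta)/2}\,(1/\sqrt{n_\tr}+1/\sqrt{n_\te})$ term. The main obstacle I anticipate is the product-class step: unlike a single Lipschitz loss composed with one hypothesis class, the integrand $g\cdot\ell_\mathrm{UB}(f,\cdot)$ couples two classes multiplicatively, so obtaining the clean symmetric coefficient $8MG(M+G)$ requires a careful product Rademacher bound together with disciplined bookkeeping of the boundedness constants through the squaring, the contraction, and McDiarmid simultaneously, rather than a single application of the contraction principle.
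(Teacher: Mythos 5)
Note first that the paper itself does not contain a proof of this lemma---it explicitly defers to \cite{zhang2020one,zhang2021one}---so there is no in-paper argument to compare against; judged on its own merits, your plan is the natural and essentially correct route, and it correctly reconstructs every structural feature of the stated bound: the linearization $A^2-\widehat{A}^2\le 2MG\,(A-\widehat{A})_+$, the product-class Rademacher bound (via polarization plus contraction) producing the symmetric factor $(M+G)$, the Ledoux--Talagrand step producing $L\fR^\tr_{n_\tr}(\cF)$, the contraction of $g\mapsto g^2$ for the training part of the density-fitting term, and the role of $\cG=\cG_+\cup-\cG_+$ in absorbing sign reversals---needed in particular because the test-sample deviation enters with a negative sign through $-2\bE_{p_\te}[g]$, which is exactly what yields the $4M^2\fR^\te_{n_\te}(\cG)$ term.

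The one step I would change is the concentration bookkeeping at the end. As written, you apply McDiarmid separately to each empirical process and then aggregate; with a union bound over the two or three resulting events the confidence term becomes proportional to $\sqrt{\log(k/\delta)}$ rather than $\sqrt{\log(1/\delta)}$, and for $\delta$ close to $1$ such a bound is \emph{not} dominated by the stated term $5M^2G^2\sqrt{\log(1/\delta)/2}\,(1/\sqrt{n_\tr}+1/\sqrt{n_\te})$, so the lemma as stated would not follow. The standard fix is to apply McDiarmid once to the aggregated supremum $\sup_{f\in\cF,\,g\in\cG_+}\bigl(J_\mathrm{UB}(f,g)-\widehat{J}_\mathrm{UB}(f,g;S)\bigr)$: its bounded differences are at most $3M^2G^2/n_\tr$ per training point (at most $2M^2G^2/n_\tr$ from the squared loss term plus $M^2G^2/n_\tr$ from $m^2\widehat{\bE}_\tr[g^2]$, using $m\le M$) and at most $2M^2G^2/n_\te$ per test point (using $G\ge 1$), which is precisely where the constant $5$ comes from since $3/\sqrt{n_\tr}+2/\sqrt{n_\te}\le 5(1/\sqrt{n_\tr}+1/\sqrt{n_\te})$; only afterwards do you split the \emph{expectation} of the supremum into the three pieces that you bound by symmetrization and contraction. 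With that reordering, your argument goes through and in fact delivers the stated constants with room to spare.
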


Combining \eqref{eq: upper bound1}, \eqref{eq: upper bound2}, and \eqref{eq: uniform bound}, we obtain the following theorem.

\begin{theorem}
\label{thm: uniform generalization bound}
Suppose that the assumptions in Lemma~\ref{lemma: uniform bound} hold.
Then, for any $\delta>0$, with probability at least $1-\delta$ over the draw of $S$, the test risk can be bounded as follows for all $f\in\cF$ uniformly:
\begin{align*}
    \frac{1}{2}R^2(f) \leq &\min_{g\in\cG_+}\widehat{J}_\mathrm{UB}(f, g; S) + 8MG\left(M+G\right)\left(L\fR^\tr_{n_\tr}(\cF)+\fR^\tr_{n_\tr}(\cG)\right)\notag\\
    &+ 4M^2\fR^\te_{n_\te}(\cG) + 5M^2G^2\sqrt{\frac{\log\frac{1}{\delta}}{2}}\left(\frac{1}{\sqrt{n_\tr}}+\frac{1}{\sqrt{n_\te}}\right).
\end{align*}
\end{theorem}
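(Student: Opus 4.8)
The plan is to chain the three inequalities already in hand: the two deterministic upper bounds \eqref{eq: upper bound1} and \eqref{eq: upper bound2} of Theorem~\ref{thm: upper bound}, together with the uniform concentration bound \eqref{eq: uniform bound} of Lemma~\ref{lemma: uniform bound}. First I would fix, once and for all, the high-probability event on which \eqref{eq: uniform bound} holds. By Lemma~\ref{lemma: uniform bound} this event has probability at least $1-\delta$, and crucially the bound there holds \emph{simultaneously} for every pair $(f,g)$ with $f\in\cF$ and $g\in\cG_+$. Everything after this step is deterministic on this event.

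Next, for an arbitrary fixed $f\in\cF$ and an arbitrary fixed non-negative $g\in\cG_+$, I would assemble the chain
\begin{align*}
    \tfrac12 R^2(f)\le J(f,g)\le J_\mathrm{UB}(f,g)\le \widehat{J}_\mathrm{UB}(f,g;S)+C_\mathrm{comp},
\end{align*}
where the first inequality is \eqref{eq: upper bound1}, the second is \eqref{eq: upper bound2} (applicable precisely because $g$ is non-negative and $\ell_\mathrm{UB}$ bounds $\ell$ from above), and the third is \eqref{eq: uniform bound}. Here $C_\mathrm{comp}$ denotes the additive remainder $8MG(M+G)\left(L\fR^\tr_{n_\tr}(\cF)+\fR^\tr_{n_\tr}(\cG)\right)+4M^2\fR^\te_{n_\te}(\cG)+5M^2G^2\sqrt{\tfrac{\log(1/\delta)}{2}}\left(\tfrac{1}{\sqrt{n_\tr}}+\tfrac{1}{\sqrt{n_\te}}\right)$.

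The decisive observation is that the left-hand side $\tfrac12 R^2(f)$ does not depend on $g$, and $C_\mathrm{comp}$ depends only on $\cF$, $\cG$, the constants $M,G,L$, the sample sizes, and $\delta$ --- never on the particular $g$. Therefore, holding $f$ fixed and taking the infimum over $g\in\cG_+$ of the right-hand side, only the empirical term $\widehat{J}_\mathrm{UB}(f,g;S)$ varies, yielding
\begin{align*}
    \tfrac12 R^2(f)\le \min_{g\in\cG_+}\widehat{J}_\mathrm{UB}(f,g;S)+C_\mathrm{comp}.
\end{align*}
Since the uniform event was fixed before this minimization, the displayed inequality in fact holds for every $f\in\cF$ at once, which is exactly the claimed bound.

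What requires care --- more than it is a genuine obstacle --- is the order of quantifiers: all of the randomness must be absorbed into the single uniform event of Lemma~\ref{lemma: uniform bound}, so that the minimization over $g$ and the universal quantifier over $f$ are carried out deterministically afterward. Were one instead to select the minimizing $g$ first and then invoke concentration, that $g$ would be data-dependent and a non-uniform argument (or an extra union bound) would be needed. I would also verify that assumption~(a) supplies $M\ge m$, so that the boundedness hypothesis $\ell\le m$ of Theorem~\ref{thm: upper bound} is consistent with the constants entering \eqref{eq: uniform bound}.
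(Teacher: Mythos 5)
Your proposal is correct and takes essentially the same route as the paper, which obtains this theorem simply by combining \eqref{eq: upper bound1}, \eqref{eq: upper bound2}, and \eqref{eq: uniform bound}. Your explicit treatment of the quantifier order---fixing the uniform high-probability event of Lemma~\ref{lemma: uniform bound} first, so that the minimization over $g\in\cG_+$ and the universal quantifier over $f\in\cF$ are deterministic afterward---is exactly the point that makes the paper's one-line combination legitimate.
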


Theorem~\ref{thm: uniform generalization bound} implies that minimizing $\widehat{J}_\mathrm{UB}(f, g)$, as the proposed method does, amounts to minimizing an upper bound of the test risk.
Furthermore, the following theorem shows a generalization error bound for the minimizer obtained by the proposed method.

\begin{theorem}
\label{thm: generalization error bound}
Let $(\widehat{f}, \widehat{g}) = \argmin_{(f,g)\in\cF\times\cG_+}\widehat{J}_\mathrm{UB}(f, g)$.
Then, under the assumptions of Lemma~\ref{lemma: uniform bound}, for any $\delta>0$, it holds with probability at least $1-\delta$ over the draw of $S$ that
\begin{align*}
    \frac{1}{2}R^2(\widehat{f}) \leq &\min_{f\in\cF, g\in\cG_+}J_\mathrm{UB}(f, g) + 8MG\left(M+G\right)\left(L\fR^\tr_{n_\tr}(\cF)+\fR^\tr_{n_\tr}(\cG)\right) \notag\\
    &+ 4M^2\fR^\te_{n_\te}(\cG)+ 10M^2G^2\sqrt{\frac{\log\frac{1}{\delta}}{2}}\left(\frac{1}{\sqrt{n_\tr}}+\frac{1}{\sqrt{n_\te}}\right) + \frac{M^2G^2}{n_\tr}. \label{eq: gen error bound}
\end{align*}
\end{theorem}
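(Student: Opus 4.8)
The plan is to run the standard oracle-inequality template for an empirical minimizer, gluing together the two results already available. Write $(f^{*}, g^{*}) \in \argmin_{(f,g)\in\cF\times\cG_+} J_\mathrm{UB}(f,g)$ for a population-optimal pair, and abbreviate the Rademacher block of Lemma~\ref{lemma: uniform bound} as $\Phi \coloneqq 8MG(M+G)(L\fR^\tr_{n_\tr}(\cF)+\fR^\tr_{n_\tr}(\cG)) + 4M^2\fR^\te_{n_\te}(\cG)$ and its confidence term as $\Psi_\delta \coloneqq 5M^2G^2\sqrt{\log(1/\delta)/2}\,(1/\sqrt{n_\tr}+1/\sqrt{n_\te})$. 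First I would chain $\tfrac12 R^2(\widehat f) \le J_\mathrm{UB}(\widehat f, \widehat g)$, which is legitimate by Theorem~\ref{thm: upper bound} since $\widehat g \in \cG_+$; then apply Lemma~\ref{lemma: uniform bound} at the pair $(\widehat f,\widehat g)$ to get $J_\mathrm{UB}(\widehat f,\widehat g) \le \widehat J_\mathrm{UB}(\widehat f,\widehat g;S) + \Phi + \Psi_\delta$; and finally use optimality of the empirical minimizer, $\widehat J_\mathrm{UB}(\widehat f,\widehat g;S) \le \widehat J_\mathrm{UB}(f^{*}, g^{*};S)$. The whole problem thus collapses to bounding the empirical objective at the single fixed pair $(f^{*},g^{*})$ by the population optimum $J_\mathrm{UB}(f^{*},g^{*}) = \min_{f,g} J_\mathrm{UB}$.

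Second, I would carry out this fixed-point comparison in two substeps; this is where the extra $M^2G^2/n_\tr$ term is born. Taking $\bE_S$ of $\widehat J_\mathrm{UB}(f^{*},g^{*};S)$, the second block reproduces $m^2\bE_{p_\tr}[(g^{*}-w)^2]$ exactly, because the estimator $\tfrac{1}{n_\tr}\sum g^{*2} - \tfrac{2}{n_\te}\sum g^{*} + C$ is unbiased for it (the $-\tfrac{2}{n_\te}\sum g^{*}$ piece estimates $-2\bE_{p_\te}[g^{*}]$ via the importance-weight identity $\bE_{p_\tr}[g^{*}w] = \bE_{p_\te}[g^{*}]$). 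The only biased term is the leading squared one: since $\bE_S[(\tfrac{1}{n_\tr}\sum_i g^{*}\ell_\mathrm{UB})^2] = (\bE_{p_\tr}[g^{*}\ell_\mathrm{UB}])^2 + \tfrac{1}{n_\tr}\mathrm{Var}_{p_\tr}(g^{*}\ell_\mathrm{UB})$, the empirical objective sits above the population one by exactly $\tfrac{1}{n_\tr}\mathrm{Var}_{p_\tr}(g^{*}\ell_\mathrm{UB})$. Under assumptions (a) and (b) of Lemma~\ref{lemma: uniform bound} the integrand $g^{*}\ell_\mathrm{UB}$ lies in $[0,MG]$, so this variance is at most $M^2G^2$, giving $\bE_S[\widehat J_\mathrm{UB}(f^{*},g^{*};S)] \le J_\mathrm{UB}(f^{*},g^{*}) + M^2G^2/n_\tr$.

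Third, I would lift this in-expectation statement to high probability by one application of McDiarmid's bounded-difference inequality to the fixed map $S \mapsto \widehat J_\mathrm{UB}(f^{*},g^{*};S)$; no uniform control is needed here, since the pair is frozen. Replacing a single training point perturbs the squared term by at most $2M^2G^2/n_\tr$ (using $|a^2-b^2| \le 2MG\,|a-b|$ with both means in $[0,MG]$) and the $m^2\tfrac{1}{n_\tr}\sum g^{*2}$ term by at most $M^2G^2/n_\tr$, while a single test point perturbs the $-2m^2\tfrac{1}{n_\te}\sum g^{*}$ term by at most $2M^2G^2/n_\te$ (invoking $m \le M$ and $G \ge 1$). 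McDiarmid then yields a deviation of order $M^2G^2\sqrt{\log(1/\delta)/2}\,(1/\sqrt{n_\tr}+1/\sqrt{n_\te})$, which I would absorb into a second copy $\Psi_\delta$. A union bound over the two high-probability events (the uniform bound of Lemma~\ref{lemma: uniform bound} and this single-point concentration) combines $\Psi_\delta + \Psi_\delta$ into the coefficient $10 = 5+5$ and carries $\Phi$ over verbatim, producing the stated inequality.

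Finally, on difficulty: the mechanical steps --- Theorem~\ref{thm: upper bound}, Lemma~\ref{lemma: uniform bound}, and optimality of $(\widehat f,\widehat g)$ --- are routine, and the McDiarmid constants are a matter of arithmetic. The one genuinely substantive point, and the step I expect to demand the most care, is recognizing that the squaring in the first term makes $\widehat J_\mathrm{UB}$ a \emph{biased} estimator of $J_\mathrm{UB}$, so that even the fixed-pair comparison already costs an additive $\tfrac{1}{n_\tr}\mathrm{Var}_{p_\tr}(g^{*}\ell_\mathrm{UB})$; correctly isolating this variance and bounding it by $M^2G^2$ is precisely what generates the extra $M^2G^2/n_\tr$ term that distinguishes this bound from Theorem~\ref{thm: uniform generalization bound}. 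The residual constant bookkeeping in the union bound (allocating confidence so the two $\sqrt{\log(1/\delta)/2}$ contributions coalesce into $10M^2G^2\sqrt{\log(1/\delta)/2}(\cdots)$) I would defer to the very end, since there is ample slack between the computed bounded-difference constants and the allotted factor of $5$.
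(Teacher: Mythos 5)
Your proposal follows what is almost certainly the intended argument (note the paper itself defers this proof to \cite{zhang2020one,zhang2021one}, so the comparison here is against the standard route that the form of the bound dictates): chain Theorem~\ref{thm: upper bound} at $(\widehat f,\widehat g)$, the uniform bound of Lemma~\ref{lemma: uniform bound}, optimality of the empirical minimizer, and a fixed-pair comparison at a population minimizer $(f^{*},g^{*})$. You also correctly isolate the one substantive idea: because of the squared first term, $\widehat{J}_\mathrm{UB}$ is a \emph{biased} estimator of $J_\mathrm{UB}$, the bias at a fixed pair being $\tfrac{1}{n_\tr}\mathrm{Var}_{p_\tr}\bigl(g^{*}\ell_\mathrm{UB}\bigr)\le M^2G^2/n_\tr$, which is precisely the origin of the final $M^2G^2/n_\tr$ term; your bounded-difference constants ($3M^2G^2/n_\tr$ per training point, $2M^2G^2/n_\te$ per test point, hence a McDiarmid deviation with coefficient at most $3$) are also correct.

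The gap is the step you defer as ``constant bookkeeping,'' and it is not merely bookkeeping. A union bound over two separate high-probability events cannot deliver the stated inequality at confidence $1-\delta$ with genuine $\log(1/\delta)$ factors for \emph{every} $\delta>0$. Instantiating Lemma~\ref{lemma: uniform bound} and your McDiarmid step at $\delta/2$ each turns both factors into $\sqrt{\log(2/\delta)/2}$, and the needed inequality $(5+3)\sqrt{\log(2/\delta)}\le 10\sqrt{\log(1/\delta)}$ holds only for $\delta$ below roughly $0.3$; no allocation $\delta_1+\delta_2=\delta$ can work as $\delta\to 1$, since at least one $\delta_i$ is at most $1/2$, so the left-hand side stays bounded below by a positive constant while $\sqrt{\log(1/\delta)}\to 0$. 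The clean repair is to spend McDiarmid only once: apply it to the single random variable $\Theta(S)\coloneqq\sup_{f\in\cF,\,g\in\cG_+}\bigl[J_\mathrm{UB}(f,g)-\widehat{J}_\mathrm{UB}(f,g;S)\bigr]+\widehat{J}_\mathrm{UB}(f^{*},g^{*};S)$, which upper-bounds $\tfrac12 R^2(\widehat f)$ by your first three steps. Its bounded differences are the sums of the two sets you computed, namely $6M^2G^2/n_\tr$ and $4M^2G^2/n_\te$, so a single application at confidence $1-\delta$ gives a deviation of at most $6M^2G^2\sqrt{\log(1/\delta)/2}\,\bigl(1/\sqrt{n_\tr}+1/\sqrt{n_\te}\bigr)\le 10M^2G^2\sqrt{\log(1/\delta)/2}\,\bigl(1/\sqrt{n_\tr}+1/\sqrt{n_\te}\bigr)$, while $\bE[\Theta(S)]$ is bounded by the Rademacher block of Lemma~\ref{lemma: uniform bound} (here you need the in-expectation bound established inside that lemma's proof, not the lemma as a black box) plus $\min_{f\in\cF,g\in\cG_+}J_\mathrm{UB}(f,g)+M^2G^2/n_\tr$ from your bias computation. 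This yields the theorem as stated, for all $\delta\in(0,1)$, with room to spare in the constant.
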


If we use linear-in-parameter models with bounded norms, then $\fR^\tr_{n_\tr}(\cF)=O(1/\sqrt{n_\tr})$, $\fR^\tr_{n_\tr}(\cG)=O(1/\sqrt{n_\tr})$, and $\fR^\te_{n_\te}(\cG)=O(1/\sqrt{n_\te})$ \cite{mohri2018foundations,shalev2014understanding}. Furthermore, if we assume that the approximation error of $\cG_+$ is zero, i.e., $r\in\cG_+$, then $\min_{f\in\cF, g\in\cG+}J_\mathrm{UB}(f, g)\leq J_\mathrm{UB}(f^*,r)=R_\mathrm{UB}^2(f^*)$, where $R_\mathrm{UB}$ is the test risk defined with $\ell_\mathrm{UB}$ and $f^*=\argmin_{f\in\cF}R_\mathrm{UB}(f)$.
Thus,
\begin{equation*}
    R(\widehat{f})\leq\sqrt{2}R_\mathrm{UB}(f^*)+O_p(1/\sqrt[4]{n_\tr}+1/\sqrt[4]{n_\te}).
\end{equation*}
When the best-in-class test risk $R_\mathrm{UB}(f^*)$ is small, this bound would theoretically guarantee a good performance of the proposed method.

\subsection{Experiments}
\label{subsec:osa-exp}
In this section, we examine the effectiveness of the proposed method via a toy illustration and experiments on regression and classification benchmark datasets.

\begin{figure*}[t]
  \centering
  \includegraphics[scale=0.5]{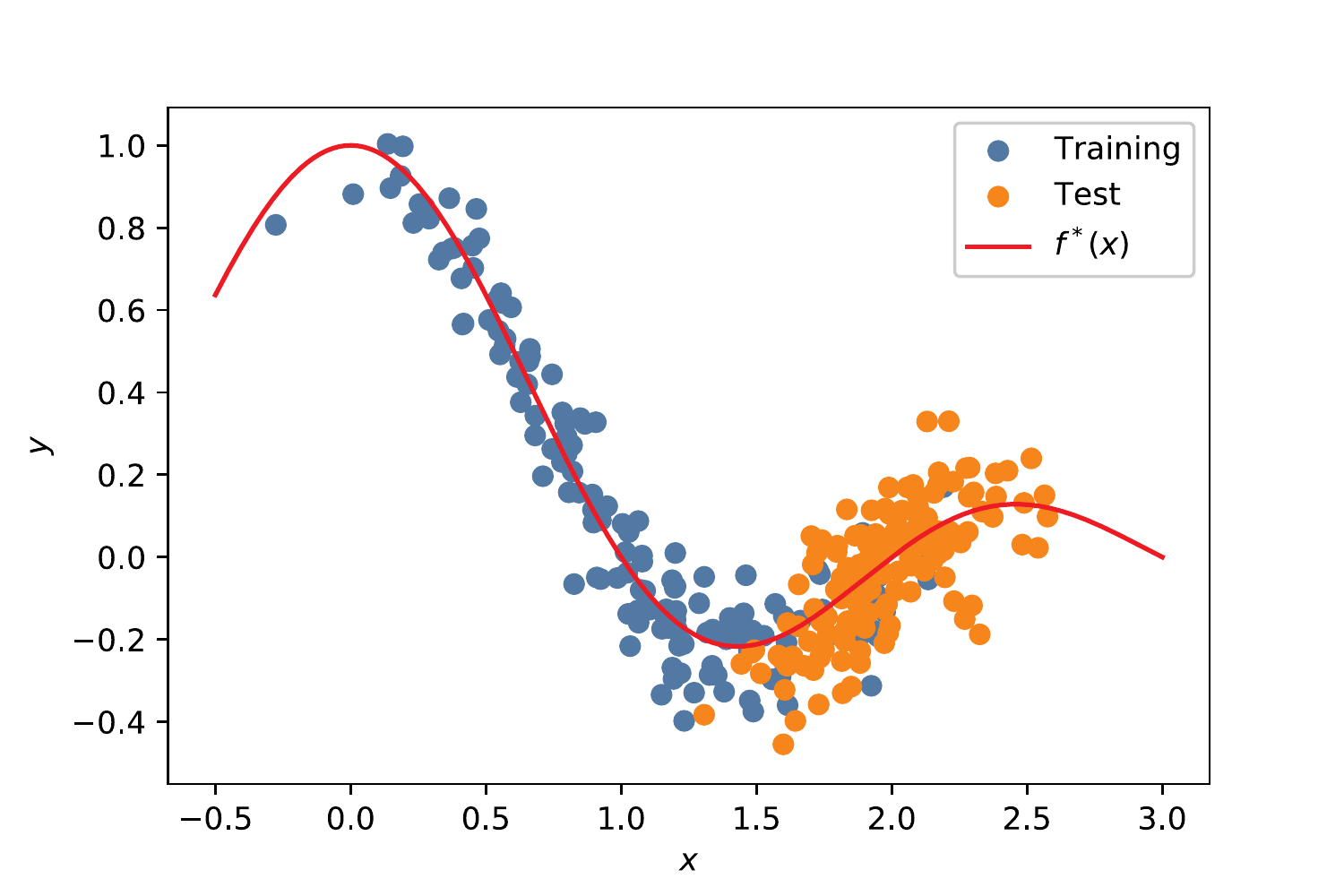}
  \caption{A toy regression example.
  The training input points (blue) are distributed on the left-hand side of the input domain
  and the test input points (orange) are distributed on the right-hand side.
  The two distributions share the same regression function $f^*$ (the red curve).}
  \label{fig:toy}
\end{figure*}

\subsubsection{Illustration with Toy Regression Datasets}
First, we report the results of experiments on a toy regression dataset. 

Let us consider a one-dimensional regression problem.
Let the training and test input densities be 
\[p_{\mathrm{tr}}(x)=N(x ; 1,(0.5)^{2}) \text{\quad and \quad  } p_{\mathrm{te}}(x)=N(x ; 2,(0.25)^{2}),\]
where $N(x ; \mu, \sigma^{2})$ denotes the Gaussian density with mean $\mu$ and variance $\sigma^2$.
Consider the case where the output labels of examples are generated by
\begin{equation*}
    y=f^*(x)+\epsilon \quad \text{with} \quad f^*(x)=\operatorname{sinc}(x),
\end{equation*}
and the noise $\epsilon$ following $N\left(0,(0.1)^{2}\right)$ is independent of $x$.
As illustrated in Fig.~\ref{fig:toy}, the training input points are distributed on the left-hand side of the input domain and the test input points are distributed on the right-hand side.
We sampled $n_{\tr}=150$ labeled i.i.d.\@ training samples $\left\{\left(x_i^\tr,  y_i^\tr\right)\right\}_{i=1}^{n_\tr}$ with each $x_i^\tr$ following $p_\tr(x)$ and $n_{\te}=150$ unlabeled i.i.d.\@ test samples $\{x_j^\te\}_{j=1}^{n_\te}$ following $p_\te(x)$ for learning the target function $f^*(x)$ in the experiment.

We compared our one-step approach with three baseline methods, which are the ordinary ERM, EIWERM with uLSIF, and RIWERM.
We used the linear-in-parameter models~\eqref{linear model} with the Gaussian kernels
(introduced in Section~\ref{subsubsec: lsif}) as basis functions for learning the input-output relation and the importance (or the relative importance) in all the experiments:
\[\phi_i(\bx)=\exp\bigg\{-\frac{\|\bx-\bc^f_i\|^2}{2\sigma_f^2}\bigg\}\quad\text{and}\quad \psi_i(\bx)=\exp\bigg\{-\frac{\|\bx-\bc^g_i\|^2}{2\sigma_g^2}\bigg\},\]
where $\sigma_f$ and $\sigma_g$ are the bandwidths of the Gaussian kernels, and $\bc_i^f$ and $\bc_i^g$ are the kernel centers randomly chosen from $\{\bx_j^\te\}_{j=1}^{n_\te}$ \cite{sugiyama2008direct,kanamori2009least}.
We set $b_f=b_g=50$ in all the experiments.

As suggested in Section~\ref{subsec:osa}, we used the squared loss in the one-step approach for computational efficiency. 
We also employed the IRLS algorithm for optimizing Tukey's bisquare loss in the one-step approach.
For better comparison, we reported the results of the baseline methods using both the squared loss and Tukey's bisquare loss.

The experimental results of the toy regression problem are summarized in Table~\ref{tab:toy}.
Note that when the target function $f^*$ is perfectly learned, the mean squared error is the variance of noise $\epsilon$, i.e., 0.01.
Therefore, our method significantly mitigates the influence of covariate shift. Since the IRLS algorithm is needed when using Tukey's bisquare loss, the training should take longer time than that when using the squared loss, and we confirmed this fact from the results in Table~\ref{tab:toy}.
\begin{table}[t]
\centering
\caption{Mean squared test errors averaged over 100 trials on the toy dataset.
The numbers in the brackets are the standard deviations.
The best method and comparable ones based on the \emph{paired t-test} at the significance level 5\% are described in bold face.
The computation time is averaged over 100 trials.
``Squared'' denotes the squared loss, while ``Tukey'' denotes Tukey's bisquare loss.}
\vspace{2mm}
\label{tab:toy}
\begin{tabular}{l|c|c}
\toprule[1pt]
Methods & MSE(SD) & Computation time (sec) \\
\midrule
ERM (squared) & 0.1453 (0.1812) & 0.04 \\
ERM (Tukey)   & 0.0760 (0.0733) & 0.09 \\
EIWERM (squared) & 0.0198 (0.0151) & 0.33 \\
EIWERM (Tukey) & 0.0161 (0.0106) & 0.76 \\
RIWERM (squared) & 0.0162 (0.0100) & 0.47 \\
RIWERM (Tukey) & 0.0149 (0.0073) & 0.84\\
one-step (squared) & 0.0131 (0.0036) & 0.73 \\
one-step (Tukey) & \textbf{0.0125} (\textbf{0.0021}) & 1.50 \\
\bottomrule[1pt]
\end{tabular}
\end{table}

\begin{table}[t]
\centering
\caption{Mean squared test errors/mean test misclassification rates averaged over 100 trials on regression/binary classification benchmark datasets.
The numbers in the brackets are the standard deviations.
All the error values are normalized so that the mean error by ``ERM'' will be one.
For each dataset, the best method and comparable ones based on the \emph{paired t-test} at the significance level 5\% are described in bold face.
The upper half are regression datasets and the lower half are binary classification datasets.}
\vspace{2mm}
\label{tab:benchmark}
\begin{tabular}{c|cccc}
\toprule[1pt]
Dataset & ERM & EIWERM & RIWERM & one-step \\
\midrule
auto & 1.00 (0.22) & 1.08 (0.25) & 1.08 (0.23) & \textbf{0.99 (0.21)} \\
bike & 1.00 (0.10) & 0.97 (0.10) & 0.98 (0.10) & \textbf{0.95 (0.08)} \\
parkinsons & 1.00 (0.28) & 0.93 (0.17) & 0.92 (0.16) & \textbf{0.76 (0.05)} \\
wine & 1.00 (0.22) & 0.95 (0.12) & 0.95 (0.14) & \textbf{0.90 (0.07)} \\
\midrule
australian & 31.62 (17.88) & 30.70 (16.35) & 29.82 (14.83) & \textbf{25.57 (12.74)} \\
breast & 22.13 (10.36) & 21.82 (11.20) & 22.00 (13.38) & \textbf{16.55 (9.09)} \\
diabetes & 43.35 (9.56) & 41.67 (8.66) & 43.26 (8.42) & \textbf{38.57 (6.36)} \\
heart & 34.91 (12.45) & \textbf{32.06 (11.05)} & \textbf{33.39 (12.24)} & \textbf{31.39 (10.36)} \\
sonar & \textbf{39.03 (6.69)} & \textbf{38.77 (6.37)} & \textbf{38.83 (7.15)} & \textbf{37.69 (7.17)} \\
\bottomrule[1pt]
\end{tabular}
\end{table}

\subsubsection{Experiments on Regression and Binary Classification Benchmark Datasets}
Below, we report the results of experiments on regression benchmark datasets from UCI\footnote{\url{https://archive.ics.uci.edu/ml/datasets.php}} and binary classification benchmark datasets from LIBSVM\footnote{\url{https://www.csie.ntu.edu.tw/\textasciitilde cjlin/libsvmtools/datasets/}}.

We considered experimental settings with both synthetically created covariate shift and naturally occurring covariate shift. 
To perform train-test split for the datasets with naturally occurring covariate shift, we followed \cite{ahmed2014dataset}, \cite{pmlr-v51-chen16d}, and \cite{NIPS2006_a74c3bae} to separate the auto mpg dataset, the bike sharing dataset, the parkinsons dataset, and the wine quality dataset based on different origins, different semesters, different age ranges, and different types, respectively.
In the rest of the datasets, we synthetically introduced covariate shift in the following way similarly to \cite{cortes2008sample}.
First, we used Z-score normalization to preprocess all the input samples.
Then, an example $(\bx, y)$ was assigned to the training dataset with probability $\exp(v)/(1+\exp(v))$ and to the test dataset with probability $1/(1+\exp(v))$, where $v=16\bw^\top\bx/\sigma$, $\sigma$ is the standard deviation of $\bw^\top\bx$, and $\bw\in\bR^d$ is some given projection vector.
To ensure that the methods are tested in challenging covariate shift situations, we randomly sampled projection directions and chose the one such that the classifier trained on the training dataset generalizes worst to the test dataset for train-test split.

By following the above procedure, we split the datasets into training datasets and test datasets (with some randomness in synthetic cases).
Then we sampled a certain number (depending on the size of the dataset) of training samples and test input samples for training. We used the rest of test samples for evaluating the performance.
We ran 100 trials for each dataset.\footnotemark
\footnotetext{It means that we conduct the experiment for each dataset 100 times with different random draws of training and test samples.}
As discussed in Section~\ref{subsec:osa}, we used the squared loss as the surrogate loss function for all the methods including the one-step approach in the experiments.

The experimental results on benchmark datasets are summarized in Table~\ref{tab:benchmark}.
The table shows the proposed one-step approach outperforms or is comparable to the baseline methods with the best performance, which suggests that it is a promising method for covariate shift adaptation.

\subsubsection{Multi-class Classification Experiments with Neural Networks}
Finally, we designed a covariate shift setting and conducted experiments on the Fashion-MNIST \cite{xiao2017} and Kuzushiji-MNIST \cite{clanuwat2018deep} benchmark datasets for image classification using convolutional neural networks (CNNs).

Based on the fact that the labels of the images from those datasets are invariant to rotation transformation, we introduced covariate shift to the image datasets in the following way: we rotated each image $I_i$ in the training sets by angle $\theta_i$, where $\theta_i/180^\circ$ was drawn from a beta distribution $\mathrm{Beta}(a, b)$, and rotated each image $J_i$ in the test sets by angle $\phi_i$,  where $\phi_i/180^\circ$ was drawn from another beta distribution $\mathrm{Beta}(b, a)$. 
The parameters $a$ and $b$ control the shift level, and we tested three different levels in our experiments: $(a, b) = (2, 4)$, $(2, 5)$, and $(2, 6)$. 
Since our experiments were conducted in an inductive manner, we also rotated each image $I_i$ in the training sets by angle $\psi_i$, where $\psi_i/180^\circ$ was drawn from the beta distribution $\mathrm{Beta}(b, a)$ to obtain the unlabeled test images for training.

We used Algorithm~\ref{alg:gradient} as our implementation of the one-step appraoch.
We used the softmax cross-entropy loss as a surrogate loss and use 5-layer CNNs, which consist of 2 convolutional layers with pooling and 3 fully connected layers, to model the classifier $f$ and the weight model $g$. 
In order to learn useful weights, we pretrained $g$ in a binary classification problem whose goal is to discriminate between $\left\{\bx_i^\tr\right\}_{i=1}^{n_\tr}$ and $\big\{\bx_j^\te\big\}_{j=1}^{n_\te}$ and froze the parameters in the first two convolutional layers. 
We trained $f$ and $g$ for 20 rounds for the one-step approach, where a round consists of 5 epochs of training $g$ followed by 10 epochs of training $f$: we trained the models for 300 epochs in total. Details can be found in \cite{zhang2021one}.

The experimental results summarized in Table~\ref{tab:deep} verify the effectiveness of our one-step approach in image classification problems with neural networks. 
Specifically, the table shows that the ordinary ERM performs poorly under covariate shift, the weighted methods all improve the performance, and the one-step approach further improves the performance especially under large covariate shift (i.e., the difference between shift parameters $a$ and $b$ is large).

\begin{table}[t]
\centering
\caption{Mean test classification accuracy averaged over 5 trials on image datasets with neural networks.
The numbers in the brackets are the standard deviations.
For each dataset, the best method and comparable ones based on the \emph{paired t-test} at the significance level 5\% are described in bold face.}
\vspace{2mm}
\label{tab:deep}
\begin{tabular}{c|c|cccc}
\toprule[1pt]
Dataset & \tabincell{c}{Shift Level\\($a$, $b$)} & ERM & EIWERM & RIWERM & one-step\\
\midrule
 & (2, 4) & 81.71(0.17) & 84.02(0.18) & 84.12(0.06) & \textbf{85.07(0.08)} \\
Fashion-MNIST & (2, 5) & 72.52(0.54) & 76.68(0.27) & 77.43(0.29) & \textbf{78.83(0.20)} \\
 & (2, 6) & 60.10(0.34) & 65.73(0.34) & 66.73(0.55) & \textbf{69.23(0.25)} \\
\midrule
 & (2, 4) & 77.09(0.18) & 80.92(0.32) & 81.17(0.24) & \textbf{82.45(0.12)} \\
Kuzushiji-MNIST & (2, 5) & 65.06(0.26) & 71.02(0.50) & 72.16(0.19) & \textbf{74.03(0.16)} \\
 & (2, 6) & 51.24(0.30) & 58.78(0.38) & 60.14(0.93) & \textbf{62.70(0.55)} \\
\bottomrule[1pt]
\end{tabular}
\end{table}
 

\section{Dynamic Importance Weighting}
\label{sec:diw}
So far, we have discussed covariate shift as the primal target for transfer learning. In this section, we consider the \emph{full-distribution shift} problem and introduce a novel method, dynamic importance weighting (DIW)~\cite{fang2020rethinking}, for making importance weighting work well for deep learning under distribution shift. 

\subsection{Motivation and Problem Setup}
\label{sec:diw_setup}
Let us consider the problem setup of \emph{full-distribution shift}, where the training data $\left\{\left(\bx_i^\tr,  y_i^\tr\right)\right\}_{i=1}^{n_\tr}$ are drawn from $p_\tr(\bx,y)$, the test data $\big\{\big(\bx_j^\te,  y_j^\te\big)\big\}_{j=1}^{n_\te}$ are drawn from $p_\te(\bx,y)$, and $p_\tr(\bx,y)\neq p_\te(\bx,y)$. 
Then, importance weighting (IW) under \emph{full-distribution shift} can be formulated as
\begin{align}
    \label{eq:IW-identity}
    \bE_{(\bx^\te, y^\te)\sim p_\te(\bx, y)}\left[\ell(f(\bx^\te), y^\te)\right] = \bE_{(\bx^\tr, y^\tr)\sim p_\tr(\bx, y)}\left[w(\bx^\tr, y^\tr)\ell(f(\bx^\tr), y^\tr)\right],
\end{align}
where the \emph{importance} is $w(\bx,y)=p_\te(\bx,y)/p_\tr(\bx,y)$.
Here we assume that we have a tiny set of validation data from $p_\te(\bx,y)$. For brevity, we abbreviate \emph{importance estimation} in Section~\ref{subsec:iwerm} as \emph{weight estimation}~{(WE)} and abbreviate \emph{importance-weighted ERM} as \emph{weighted classification}~{(WC)}.



IW works very well if the form of data is simple (e.g., some linear model suffices), and it has been the common practice of non-deep learning under distribution shift \cite{sugiyama2012density}. However, IW cannot work well if the form of data is complex \cite{byrd2018effect}.
Recall that for the $k$-class classification problem with input domain $\cX\subset\bR^{d_\mathrm{in}}$ and output domain $\cY\coloneqq[k]$,
$w(\bx, y)$ processes $(d_\mathrm{in}+1)$-dimensional or $(d_\mathrm{in}+k)$-dimensional input depending on how $y$ is encoded and $f(\bx)$ processes $d_\mathrm{in}$-dimensional input, and consequently WE is not necessarily easier than WC.
Hence, when a deep model is used in WC, more \emph{expressive power} may be needed in WE.

Here we improve IW for deep learning under distribution shift.
Nevertheless, WE and WC are different tasks with different goals, and it is difficult to boost the expressive power of WE for three reasons:
\begin{itemize}
    \item some WE methods are \emph{model-free} (e.g., \textit{KMM} in Section~\ref{subsubsec: kmm}), i.e., they assign weights to data without a model of $w$;
    \item other WE methods are \emph{model-based} and also \emph{model-independent} (e.g., \textit{LSIF} and \textit{uLSIF} in Section~\ref{subsubsec: lsif}), but the optimizations are constrained due to $\bE_{p_\tr(\bx,y)}[w(\bx,y)]=\bE_{p_\te(\bx,y)}[1]=1$ and incompatible with stochastic solvers;
    \item most powerful deep models nowadays are hard to train with the WE optimizations since they are \emph{designed for classification}, even if we ignore the constraint or satisfy it within each mini-batch.
\end{itemize}
Therefore, it sounds better to boost the expressive power by an external \emph{feature extractor}~(FE).
For instance, we may rely on $f$ that is a deep model chosen for the classification problem to be solved.
Going along this way, we encounter the \emph{circular dependency} in Figure~\ref{fig:circle}:
originally we need $w$ to train $f$; now we need a trained $f$ to estimate $w$.
It becomes a chicken-or-egg causality dilemma.

We think of two possible ways to solve the circular dependency, one \emph{pipelined} and one \emph{end-to-end}.
The pipelined solution pretrains a deep classifier (DC) from unweighted training data, and creates an FE from this DC;
then, WE is carried out on the data transformed by the FE.
Since the weights cannot change, we call this method \emph{static importance weighting}~(SIW), as illustrated in the top diagram of Figure~\ref{fig:siwvsdiw}.
Here, the DC is biased to the training data, and so is the FE, which could be empirically confirmed~\cite{fang2020rethinking}.
As a result, this naive pipelined solution is only slightly better than no FE unfortunately.

To overcome the bias of SIW, we propose \emph{dynamic importance weighting}~(DIW) \cite{fang2020rethinking} as an end-to-end solution; see the bottom diagram of Figure~\ref{fig:siwvsdiw}.
DIW iterates between WE (on the transformed data) and WC (for updating the DC and FE) and combines them in a seamless manner. This end-to-end solution can gradually \emph{improve the importance weight} and \emph{reduce the bias of $f$}.

\begin{figure}[t]
    \begin{minipage}[c]{0.34\textwidth}
        \vskip-1pt%
        \includegraphics[width=\textwidth]{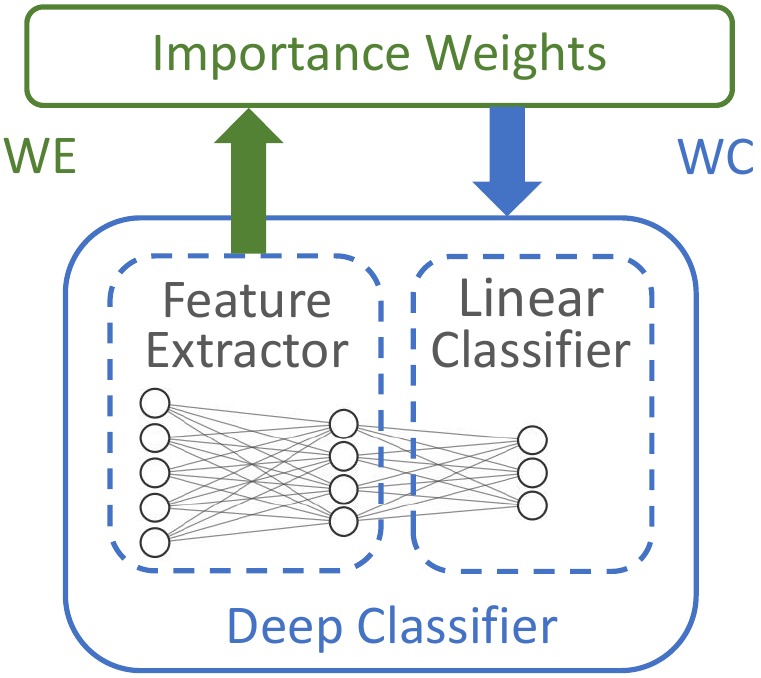}
        {\footnotesize%
        Blue arrow depicts WC depending on WE; green arrow depicts WE depending on WC---this makes a circle.
        }
        \caption{Circular dependency.}
        \label{fig:circle}
    \end{minipage}\hspace{1em}%
    \begin{minipage}[c]{0.66\textwidth}
        \includegraphics[width=\textwidth]{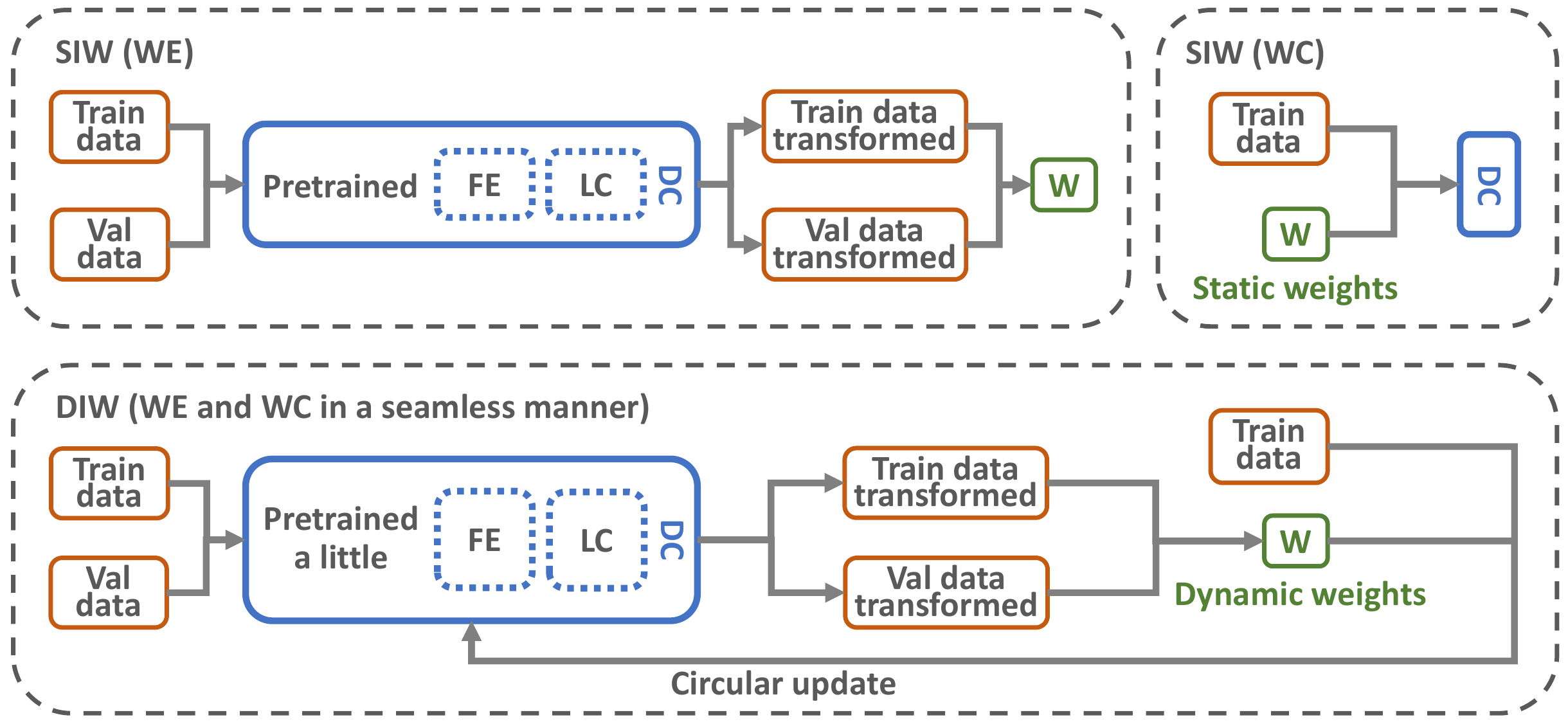}
        {\footnotesize%
            SIW/DIW stands for static/dynamic importance weighting;
            FE is short for feature extractor, and LC/DC is for linear/deep classifier;
            W is a set of weights.
            Circular update is employed to solve circular dependency.
        }
        \caption{Illustrations of SIW and DIW.}
        \label{fig:siwvsdiw}
    \end{minipage}
\end{figure}

\subsection{Dynamic Importance Weighting}
Here we introduce the details of the proposed method, dynamic importance weighting (DIW), including the weighted classification, non-linear transformation of data, practical choices of the transformation, and the distribution matching in DIW.

\subsubsection{Weighted Classification}
In this section, we assume that our classifier to be trained is a deep network parameterized by $\theta$, denoted by $\mf:\cX\to\bR^k$.
Let $\ell:\bR^k\times\cY\to\bR_{+}$ be a \emph{surrogate loss function} for $k$-class classification, e.g., the \emph{softmax cross-entropy loss}.
The classification risk of $\mf$ is defined as
\begin{align}
\label{eq:risk}%
R(\mf) = \bE_{p_\te(\bx,y)}[\ell(\mf(\bx),y)],
\end{align}
which is the performance measure we would like to optimize.
According to Eq.~\eqref{eq:IW-identity}, if $w(\bx,y)$ is given or $\mathcal{W}=\{w_i=w(\bx_i^\tr,y_i^\tr)\}_{i=1}^{n_\tr}$ is given, $R(\mf)$ can be approximated by
\begin{align}
\label{eq:risk-emp}%
\displaystyle
\widehat{R}(\mf) = \frac{1}{n_\tr}\sum_{i=1}^{n_\tr} w_i\ell(\mf(\bx_i^\tr),y_i^\tr),
\end{align}
which is the objective of WC.
With the \emph{optimal weights}, the weighted empirical risk in Eq.~\eqref{eq:risk-emp} is an \emph{unbiased estimator} of the risk in Eq.~\eqref{eq:risk}, and hence the trained classifier as the minimizer of $\widehat{R}(\mf)$ should converge to the minimizer of $R(\mf)$ as $n_\tr$ approaches infinity \cite{shimodaira2000improving,sugiyama2007covariate,huang2007correcting,sugiyama2008direct,sugiyama2008direct1,kanamori2009least}.

\subsubsection{Non-linear Transformation of Data}
Now, the issue is how to estimate the function $w$ or the set $\mathcal{W}$.
As discussed earlier in Section~\ref{sec:diw_setup}, we should boost the expressive power externally but not internally.
This means we should apply a \emph{non-linear transformation} of data rather than directly modeling $w(\bx,y)$ or $p_\tr(\bx,y)$ and $p_\te(\bx,y)$ by deep networks.
Let $\pi:\cX\times\cY\to\bR^{d_\mathrm{r}}$ or $\pi:\cX\times\cY\to\bR^{d_\mathrm{r}-1}\times\cY$ be a transformation, where $d_\mathrm{r}$ is the reduced dimension and $d_\mathrm{r}\ll d$;
let $\bz=\pi(\bx,y)$ be the transformed random variable, whose source of randomness is $(\bx,y)$ exclusively.
By applying $\pi$, we expect that WE on $\bz$ will be much easier than WE on $(\bx,y)$.
The feasibility of applying $\pi$ is justified below.

\begin{theorem}\cite{fang2020rethinking}%
    \label{thm:transform}%
~For a fixed, deterministic and invertible transformation $\pi:(\bx,y)\mapsto\bz$, let $p_\tr(\bz)$ and $p_\te(\bz)$ be the probability density functions induced by $p_\tr(\bx,y)$, $p_\te(\bx,y)$, and $\pi$.
    Then,
    \begin{align}
    \label{eq:transform}%
    w(\bx,y)
    = \frac{p_\te(\bx,y)}{p_\tr(\bx,y)}
    = \frac{p_\te(\bz)}{p_\tr(\bz)}
    = w(\bz).
    \end{align}
\end{theorem}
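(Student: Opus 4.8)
The plan is to recognize the claim as an instance of the invariance of the density ratio (the Radon--Nikodym derivative $p_\te/p_\tr$) under an invertible change of variables: whatever Jacobian-type volume factor the transformation $\pi$ introduces into each induced density will be \emph{identical} for the source and the target, since the \emph{same} $\pi$ is applied to both $p_\tr(\bx,y)$ and $p_\te(\bx,y)$, and so it cancels in the ratio. Establishing this cancellation is the whole content of the theorem.

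Concretely, I would first write the change-of-variables formula explicitly. Using invertibility of $\pi$, the densities induced on $\bz$ satisfy
\[
p_\tr(\bz) = p_\tr(\pi^{-1}(\bz))\,\lvert\det J(\bz)\rvert, \qquad p_\te(\bz) = p_\te(\pi^{-1}(\bz))\,\lvert\det J(\bz)\rvert,
\]
where $J(\bz)$ is the Jacobian of $\pi^{-1}$ at $\bz$. The crucial observation is that $\lvert\det J(\bz)\rvert$ depends only on $\pi$ and $\bz$, not on which distribution is pushed through. Forming the ratio $p_\te(\bz)/p_\tr(\bz)$ then cancels this common factor, leaving $p_\te(\pi^{-1}(\bz))/p_\tr(\pi^{-1}(\bz))$; substituting $\pi^{-1}(\bz)=(\bx,y)$ (because $\bz=\pi(\bx,y)$) gives $p_\te(\bx,y)/p_\tr(\bx,y)=w(\bx,y)$, which is exactly the chain of equalities claimed in \eqref{eq:transform}.

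The main obstacle I anticipate is that in classification the label coordinate $y$ is \emph{discrete}, so a literal Jacobian argument does not apply to that coordinate (there the transformation is a bijective relabeling carrying no volume factor), and the target space is the mixed space $\bR^{d_\mathrm{r}-1}\times\cY$ rather than a pure Euclidean space. To treat the continuous and discrete coordinates uniformly, I would instead phrase the argument measure-theoretically in terms of pushforward measures. Let $P_\tr,P_\te$ be the source and target probability measures and $\pi_{*}P$ the pushforward under $\pi$. For any measurable set $A$ in the target space, combining $P_\te(\pi^{-1}(A))=\int_{\pi^{-1}(A)} w\,\dif P_\tr$ with the generic change-of-variables identity $\int_{\pi^{-1}(A)} h\,\dif P_\tr = \int_{A}(h\circ\pi^{-1})\,\dif(\pi_{*}P_\tr)$ yields
\[
\frac{\dif(\pi_{*}P_\te)}{\dif(\pi_{*}P_\tr)} = w\circ\pi^{-1}.
\]
Identifying the left-hand Radon--Nikodym derivative with $w(\bz)=p_\te(\bz)/p_\tr(\bz)$ and the right-hand side with $w(\bx,y)$ closes the proof without ever separating continuous from discrete coordinates. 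The only hypotheses actually used are that $\pi$ be invertible and bimeasurable, which is precisely what the statement assumes.
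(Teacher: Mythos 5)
Your proposal is correct, and it rests on exactly the same core observation as the paper's proof: because the \emph{same} transformation $\pi$ is applied to both $p_\tr$ and $p_\te$, whatever volume-change factor $\pi$ introduces is common to numerator and denominator and cancels in the ratio. Where you differ is in the machinery used to make this precise. The paper argues through cumulative distribution functions and differential notation: it writes $p_\tr(\bx,y)\dif\bx = \dif F_\tr(\bx,y) = \dif F_\tr(\bz) = p_\tr(\bz)\dif\bz$ (and likewise for $p_\te$), invoking the fundamental theorem of calculus and handling the discrete label coordinate separately by expressing $\dif F_*(\bx,y)$ as a difference of partial sums over $y'\le y$ and $y'<y$; it then divides the two chains of equalities so that the differentials $\dif\bx$ and $\dif\bz$ cancel. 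Your first (Jacobian) paragraph is essentially this same argument made explicit, and you correctly identify its weak point --- the literal Jacobian/differential manipulation is awkward on the mixed space $\bR^{d_\mathrm{r}-1}\times\cY$. Your measure-theoretic upgrade, deriving $\dif(\pi_{*}P_\te)/\dif(\pi_{*}P_\tr) = w\circ\pi^{-1}$ from the pushforward change-of-variables identity and the uniqueness of the Radon--Nikodym derivative, is a genuinely cleaner execution: it needs only that $\pi$ be deterministic, invertible, and bimeasurable (no smoothness), it treats the continuous and discrete coordinates uniformly rather than by the paper's ad hoc differencing, and it replaces the paper's informal ``$\dif F_\tr(\bx,y)=\dif F_\tr(\bz)$'' step with the rigorous statement that probability mass is conserved under pushforward. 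The paper's version buys brevity and intuition; yours buys rigor and slightly greater generality, at the cost of presupposing the standard identification of the induced density ratio $p_\te(\bz)/p_\tr(\bz)$ with the Radon--Nikodym derivative of the pushforward measures, which holds wherever $p_\tr(\bz)>0$ --- an assumption the theorem statement already makes implicitly.
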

\begin{proof}
    Let $F_\tr(\bx,y)$, $F_\te(\bx,y)$, $F_\tr(\bz)$ as well as $F_\te(\bz)$ be the corresponding cumulative distribution functions.
    Then the fundamental theorem of calculus,
    and three properties of $\pi$ namely $\pi$ is fixed, deterministic, and invertible, it holds that
    \begin{align}
    \label{eq:equiv-tr}%
    p_\tr(\bx,y)\dif\bx = \dif F_\tr(\bx,y) &= \dif F_\tr(\bz) = p_\tr(\bz)\dif\bz,\\
    \label{eq:equiv-te}%
    p_\te(\bx,y)\dif\bx = \dif F_\te(\bx,y) &= \dif F_\te(\bz) = p_\te(\bz)\dif\bz,
    \end{align}
    where $\dif$ denotes the differential operator, and
    \begin{align*}
    \displaystyle
    \dif F_{*}(\bx,y) = \frac{\partial}{\partial\bx}
    ( \sum_{y'\le y}\int_{\bx'\le\bx}p_{*}(\bx',y')\dif\bx'
    -\sum_{y'<y}\int_{\bx'\le\bx}p_{*}(\bx',y')\dif\bx' )
    \cdot \dif\bx,
    \end{align*}
    where $_*$ indicates $\tr$ or $\te$. 
    For simplicity, the continuous random variable $\bx$ and the discrete random variable $y$ are considered separately.
    Dividing Eq.~\eqref{eq:equiv-te} by Eq.~\eqref{eq:equiv-tr} proves Eq.~\eqref{eq:transform}.
\end{proof}

Theorem~\ref{thm:transform} requires that $\pi$ satisfies three properties: we cannot guarantee $\dif F_\tr(\bz)=p_\tr(\bz)\dif\bz$ if $\pi$ is not fixed or $\dif F_\tr(\bx,y)=\dif F_\tr(\bz)$ if $\pi$ is neither deterministic nor invertible.
As a result, when $\mathcal{W}$ is computed in WE, $\mf$ is regarded as fixed, and it could be switched to the \emph{evaluation mode} from the \emph{training mode} to avoid the randomness due to dropout \cite{srivastava2014dropout} or similar randomized algorithms.
The invertibility of $\pi$ is non-trivial: it assumes that $\cX\times\cY$ is generated by a manifold $\cM\subset\bR^{d_\mathrm{m}}$ with an intrinsic dimension $d_\mathrm{m}\le d_\mathrm{r}$, and $\pi^{-1}$ recovers the generating function from $\cM$ to $\cX\times\cY$.
If $\pi$ is from parts of $\mf$, $\mf$ must be a reasonably good classifier so that $\pi$ compresses $\cX\times\cY$ back to $\cM$.
This finding is the circular dependency in Figure~\ref{fig:circle}, which is the major theoretical contribution.

\subsubsection{Practical Choices of $\pi$}
It seems obvious that $\pi$ can be $\mf$ as a whole or without its topmost layer.
However, the latter drops $y$ and corresponds to assuming
\begin{align}
\label{eq:transform-marginal}%
\displaystyle
p_\tr(y\mid\bx) = p_\te(y\mid\bx)
\Longrightarrow \frac{p_\te(\bx,y)}{p_\tr(\bx,y)}
= \frac{p_\te(\bx)\cdot p_\te(y\mid\bx)}{p_\tr(\bx)\cdot p_\tr(y\mid\bx)}
= \frac{p_\te(\bx)}{p_\tr(\bx)}
= \frac{p_\te(\bz)}{p_\tr(\bz)},
\end{align}
which is only possible under \emph{covariate shift} \cite{pan2009survey,shimodaira2000improving,sugiyama2008direct,sugiyama2008direct1}.
It is conceptually a bad idea to attach $y$ to the latent representation of $\bx$, since the distance metric on $\cY$ is completely different.
A better idea to take the information of $y$ into account may consist of three steps.
First, estimate $p_\te(y)/p_\tr(y)$;
second, partition $\{(\bx_i^\tr,y_i^\tr)\}_{i=1}^{n_\tr}$ and $\{(\bx_j^\te,y_j^\te)\}_{j=1}^{n_\te}$ according to $y$;
third, invoke WE $k$ times on $k$ partitions separately based on the following identity: let $w_y=p_\te(y)/p_\tr(y)$, then
\begin{align}
\label{eq:transform-class-conditional}%
\displaystyle
\frac{p_\te(\bx,y)}{p_\tr(\bx,y)}
= \frac{p_\te(y)\cdot p_\te(\bx\mid y)}{p_\tr(y)\cdot p_\tr(\bx\mid y)}
= w_y \cdot \frac{p_\te(\bx\mid y)}{p_\tr(\bx\mid y)}
= w_y \cdot \frac{p_\te(\bz\mid y)}{p_\tr(\bz\mid y)}.
\end{align}
That being said, in a small mini-batch, invoking WE $k$ times on even smaller partitions might be remarkably less reliable than invoking it once on the whole mini-batch.

To this end, we propose an alternative choice $\pi:(\bx,y)\mapsto\ell(\mf(\bx),y)$ that is motivated as follows.
In practice, we are not sure about the existence of $\cM$, we cannot check whether $d_\mathrm{m}\le d_\mathrm{r}$ when $\cM$ indeed exists, or it is computationally hard to confirm that $\pi$ is invertible.
Consequently, Eqs.~\eqref{eq:transform-marginal} and~\eqref{eq:transform-class-conditional} may not hold or only hold approximately.
As a matter of fact, Eq.~\eqref{eq:IW-identity} also only holds approximately after replacing the expectations with empirical averages, and then it may be too much to stick to the optimal solution $w(\bx,y)$.
According to Eq.~\eqref{eq:IW-identity}, there exists $w(\bx,y)$ such that for all possible
$h:\cX\times\cY\to\bR$,
\begin{align*}
\displaystyle
\frac{1}{n_\te}\sum_{j=1}^{n_\te} h(\bx_j^\te,y_j^\te)
& \approx \bE_{p_\te(\bx,y)}[h(\bx,y)] \\
& \approx \bE_{p_\tr(\bx,y)}[w(\bx,y)h(\bx,y)] 
 \approx \frac{1}{n_\tr}\sum_{i=1}^{n_\tr} \hat{w}_i h(\bx_i^\tr,y_i^\tr),
\end{align*}
where $\hat{w}_i$ is an estimated importance weight for $i=1,\ldots,n_\tr$.
This goal, \emph{IW for everything}, is too general and its only solution is $\hat{w}_i=w(\bx_i^\tr, y_i^\tr)$;
nonetheless, it is more than needed---\emph{IW for classification} is our current goal.

Specifically, the goal of DIW is to find a set of weights $\hat{\mathcal{W}}=\{\hat{w}_i\}_{i=1}^{n_\tr}$ such that for $\ell(\mf(\bx),y)$,
\begin{align}
\label{eq:DIW-goal}
\displaystyle
\frac{1}{n_\te}\sum_{j=1}^{n_\te} \ell(\mf(\bx_j^\te),y_j^\te) \big|_{\theta=\theta_t}
\approx \frac{1}{n_\tr}\sum_{i=1}^{n_\tr} \hat{w}_i\ell(\mf(\bx_i^\tr),y_i^\tr) \big|_{\theta=\theta_t},
\end{align}
where the left- and right-hand sides are conditioned on $\theta=\theta_t$, and $\theta_t$ holds model parameters at a certain time point $t$ of training.
After $\hat{\mathcal{W}}$ is found, $\theta_t$ will be updated to $\theta_{t+1}$, and the current $\mf$ will move to the next $\mf$;
then, we need to find a new set of weights satisfying Eq.~\eqref{eq:DIW-goal} again.
Compared with the general goal of IW, the goal of DIW is special and easy to achieve, and then there may be many different solutions, any of which can be used to replace $\mathcal{W}=\{w_i\}_{i=1}^{n_\tr}$ in $\widehat{R}(\mf)$.
The above argument elaborates the motivation of $\pi:(\bx,y)\mapsto\ell(\mf(\bx),y)$.
This is possible thanks to the \emph{dynamic nature of weights} in DIW, which is the major methodological contribution.

\subsubsection{Distribution Matching}
Finally, we perform distribution matching between the set of transformed training data $\{\bz_i^\tr\}_{i=1}^{n_\tr}$ and the set of transformed validation data $\{\bz_j^\te\}_{j=1}^{n_\te}$.
Let $\cH$ be a Hilbert space of real-valued functions on $\bR^{d_\mathrm{r}}$ with an inner product $\langle\cdot,\cdot\rangle_\cH$, or $\cH$ be a \emph{reproducing kernel Hilbert space}, where $k:(\bz,\bz')\mapsto\langle\phi_\mathrm{k}(\bz),\phi_\mathrm{k}(\bz')\rangle_\cH$ is the reproducing kernel of $\cH$ and $\phi_\mathrm{k}:\bR^{d_\mathrm{r}}\to\cH$ is the kernel-induced feature map \cite{scholkopf2001learning}. Then, we perform \emph{kernel mean matching} \cite{huang2007correcting} in Section~\ref{subsubsec: kmm} as follows.

Let $\mu_\tr=\bE_{w(\bz)\cdot p_\tr(\bx,y)}[\phi_\mathrm{k}(\bz)]$ and $\mu_\te=\bE_{p_\te(\bx,y)}[\phi_\mathrm{k}(\bz)]$ be the kernel embeddings of $p_\tr\cdot w$ and $p_\te$ in $\cH$, the discrepancy between the two can be 
approximated by
\begin{align}
\| \mu_\tr - \mu_\te \|_{\cH}^2
& \approx \| \frac{1}{n_\tr}\sum_{i=1}^{n_\tr}w_i\phi_\mathrm{k}(\bz_i^\tr) - \frac{1}{n_\te}\sum_{j=1}^{n_\te}\phi_\mathrm{k}(\bz_j^\te) \|_{\cH}^2 \nonumber\\
& \propto \bw^ \T\bK\bw - 2\bk^ \T\bw + \const,
\label{eq:MMD}
\end{align}
where $\bw\in\bR^{n_\tr}$ is the weight vector, $\bK\in\bR^{n_\tr\times n_\tr}$ is a kernel matrix such that $\bK_{ij}=k(\bz_i^\tr,\bz_j^\tr)$, and $\bk\in\bR^{n_\tr}$ is a vector such that $\bk_i=\frac{n_\tr}{n_\te}\sum_{j=1}^{n_\te}k(\bz_i^\tr,\bz_j^\te)$.
In practice, Eq.~\eqref{eq:MMD} is minimized subject to $0\le w_i\le B$ and $|\frac{1}{n_\tr}\sum_{i=1}^{n_\tr}w_i-1|\le\epsilon$ where $B>0$ and $\epsilon>0$ are hyperparameters as the upper bound of weights and the slack variable of $\frac{1}{n_\tr}\sum_{i=1}^{n_\tr}w_i=1$.
Eq.~\eqref{eq:MMD} is the objective of WE.
The whole DIW is shown in Algorithm~\ref{alg:DIW}, which is our major algorithmic contribution.
\begin{algorithm}[t]
    \caption{Dynamic importance weighting (in a mini-batch).}
    \label{alg:DIW}
    \begin{algorithmic}
        \REQUIRE a training mini-batch $\Str$,
            a teidation mini-batch $\Ste$,
            the current model $\boldsymbol{f}_{\theta_t}$
    \end{algorithmic}
    \begin{minipage}[t]{0.55\textwidth}\textbf{Hidden-layer-output transformation version:}\end{minipage}
    \begin{minipage}[t]{0.45\textwidth}\textbf{Loss-value transformation version:}\end{minipage}
    \begin{minipage}[t]{0.55\textwidth}
        \begin{algorithmic}[1]
            \STATE \texttt{forward} the input parts of $\Str$ \& $\Ste$
            \STATE \texttt{retrieve} the hidden-layer outputs $\Ztr$ \& $\Zte$
            \STATE \texttt{partition} $\Ztr$ \& $\Zte$ into $\{\Ztr_y\}_{y=1}^k$ \& $\{\Zte_y\}_{y=1}^k$
            \FOR{$y=1,\ldots,k$}
            \STATE \texttt{match} $\Ztr_y$ \& $\Zte_y$ to obtain $\hat{\mathcal{W}}_y$
            \STATE \texttt{multiply} all $w_i\in\hat{\mathcal{W}}_y$ by $w_y$
            \ENDFOR
            \STATE \texttt{compute} the loss values of $\Str$ as $\Ltr$
            \STATE \texttt{weight} the empirical risk $\widehat{R}(\mf)$ by $\{\hat{\mathcal{W}}_y\}_{y=1}^k$
            \STATE \texttt{backward} $\widehat{R}(\mf)$ and \texttt{update} $\theta$
        \end{algorithmic}
    \end{minipage}
    \begin{minipage}[t]{0.45\textwidth}
        \begin{algorithmic}[1]
            \STATE \texttt{forward} the input parts of $\Str$ \& $\Ste$
            \STATE \texttt{compute} the loss values as $\Ltr$ \& $\Lte$
            \STATE \texttt{match} $\Ltr$ \& $\Lte$ to obtain $\hat{\mathcal{W}}$
            \STATE \texttt{weight} the empirical risk $\widehat{R}(\mf)$ by $\hat{\mathcal{W}}$
            \STATE \texttt{backward} $\widehat{R}(\mf)$ and \texttt{update} $\theta$
        \end{algorithmic}
    \end{minipage}
\end{algorithm}

\subsection{Experiments}
In this section, we verify the effectiveness of DIW.%
\footnote{Our implementation of DIW is available at \url{https://github.com/TongtongFANG/DIW}.}
We first compare DIW based on loss-value transformation with baseline methods under label noise and class-prior shift.
We then report the results of ablation studies.

\begin{figure}[t]
    \centering
    \begin{minipage}[c]{0.02\textwidth}~\end{minipage}%
    \begin{minipage}[c]{0.326\textwidth}\centering\small 0.3 pair \end{minipage}%
    \begin{minipage}[c]{0.326\textwidth}\centering\small 0.4 symmetric \end{minipage}%
    \begin{minipage}[c]{0.326\textwidth}\centering\small 0.5 symmetric \end{minipage}\\
    \begin{minipage}[c]{0.02\textwidth}\small \rotatebox{90}{Fashion-MNIST} \end{minipage}%
    \begin{minipage}[c]{0.98\textwidth}
        \includegraphics[width=0.333\textwidth]{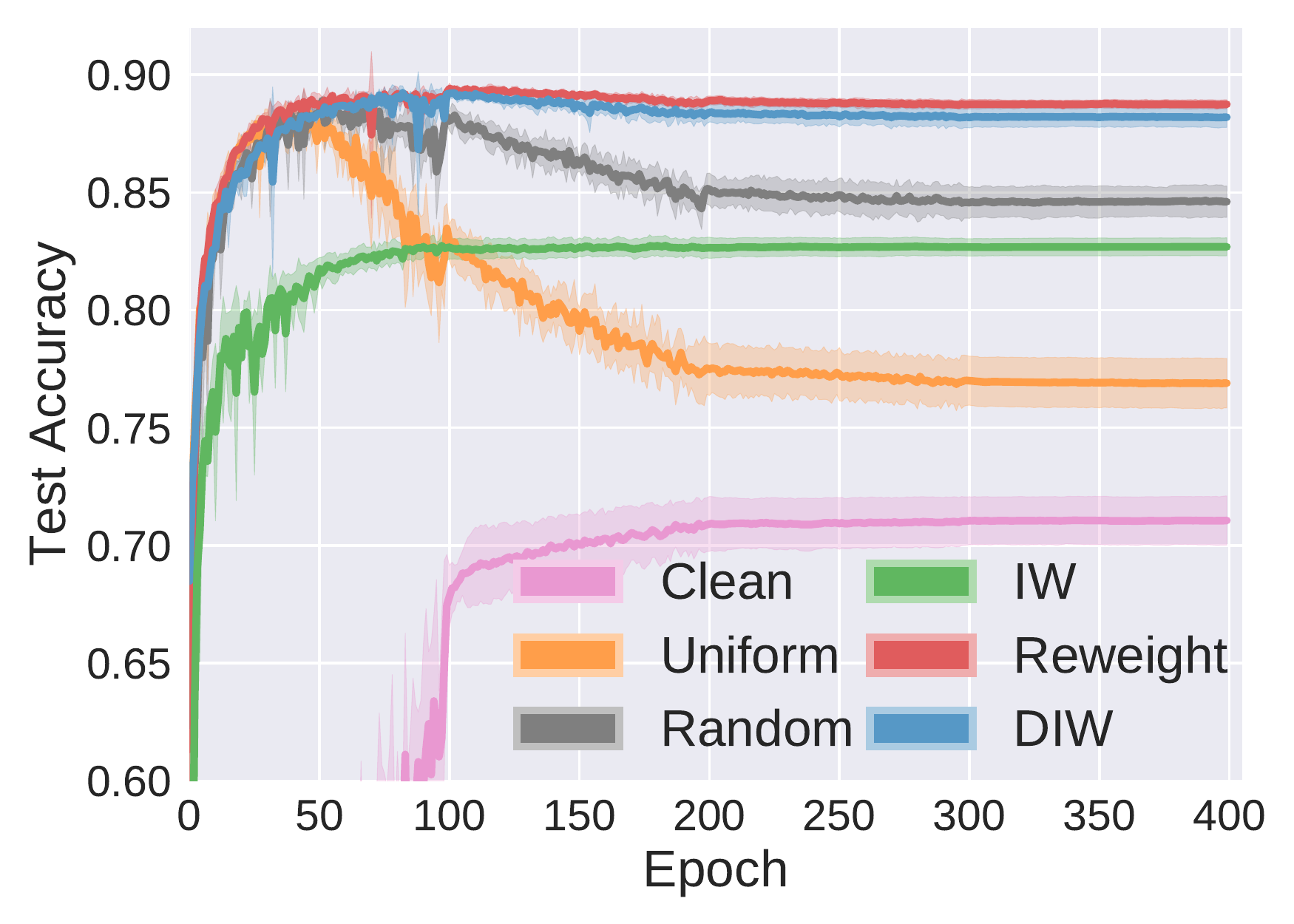}%
        \includegraphics[width=0.333\textwidth]{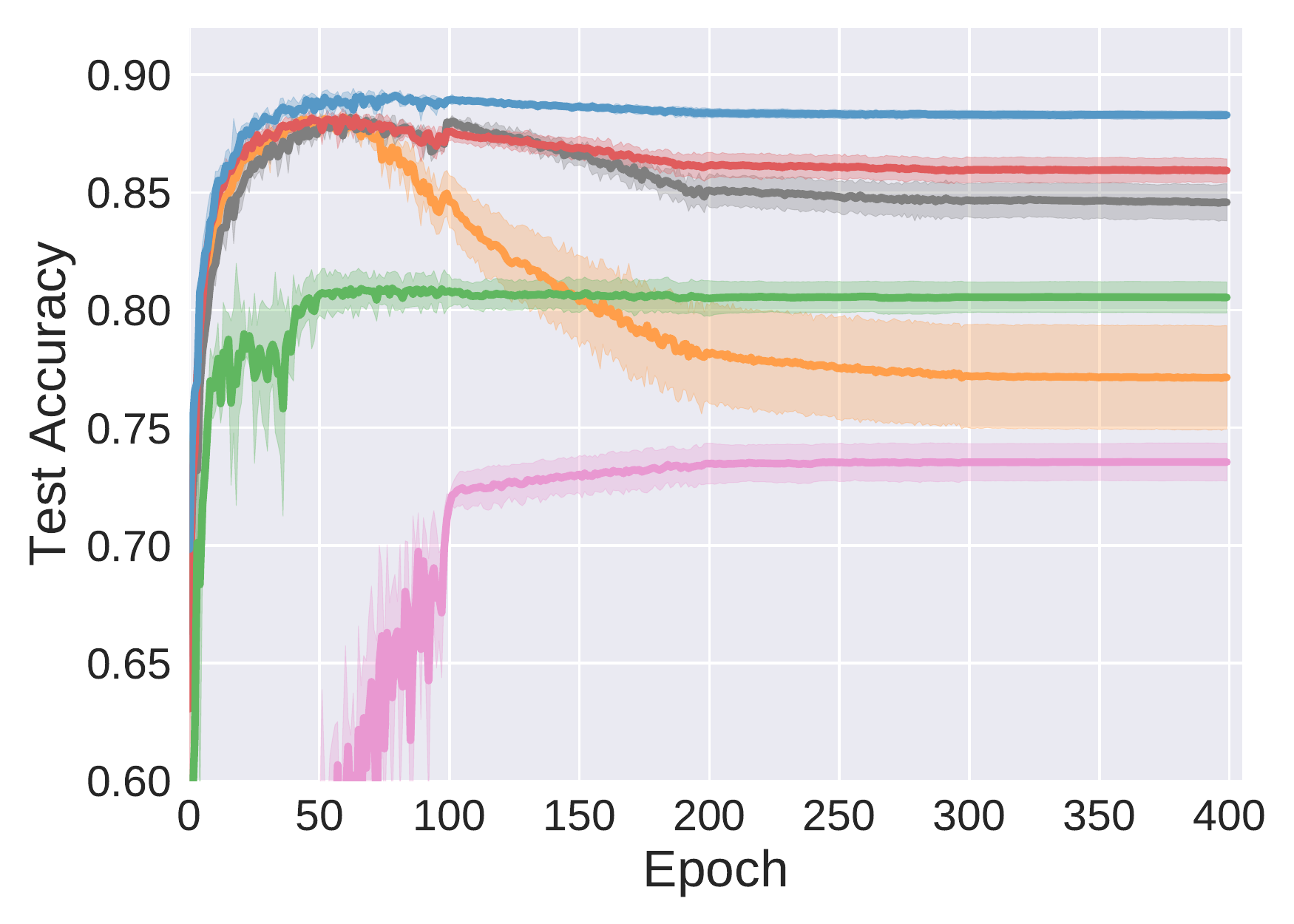}%
        \includegraphics[width=0.333\textwidth]{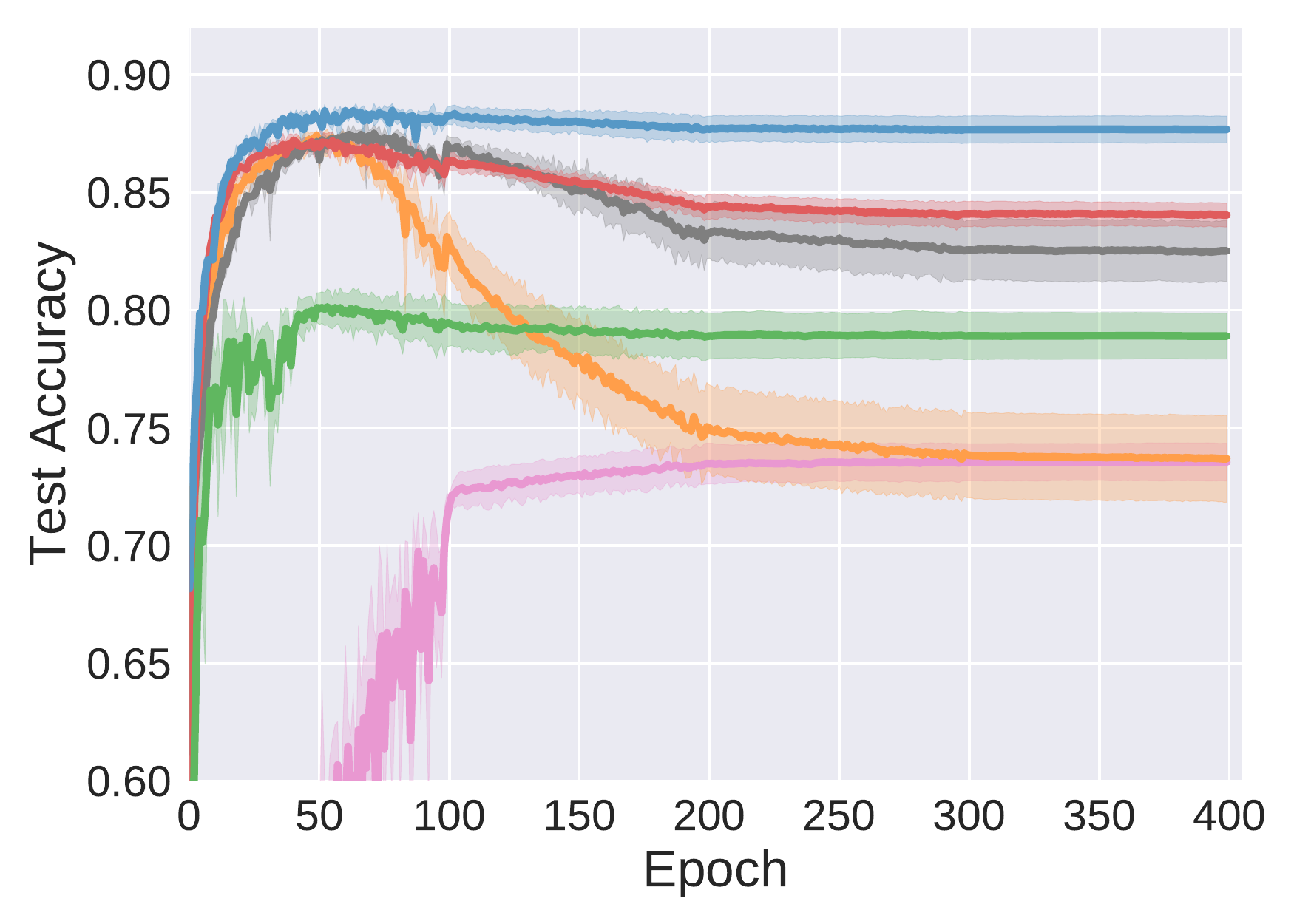}%
    \end{minipage}\\
    \begin{minipage}[c]{0.02\textwidth}\small \rotatebox{90}{CIFAR-10} \end{minipage}%
    \begin{minipage}[c]{0.98\textwidth}
        \includegraphics[width=0.333\textwidth]{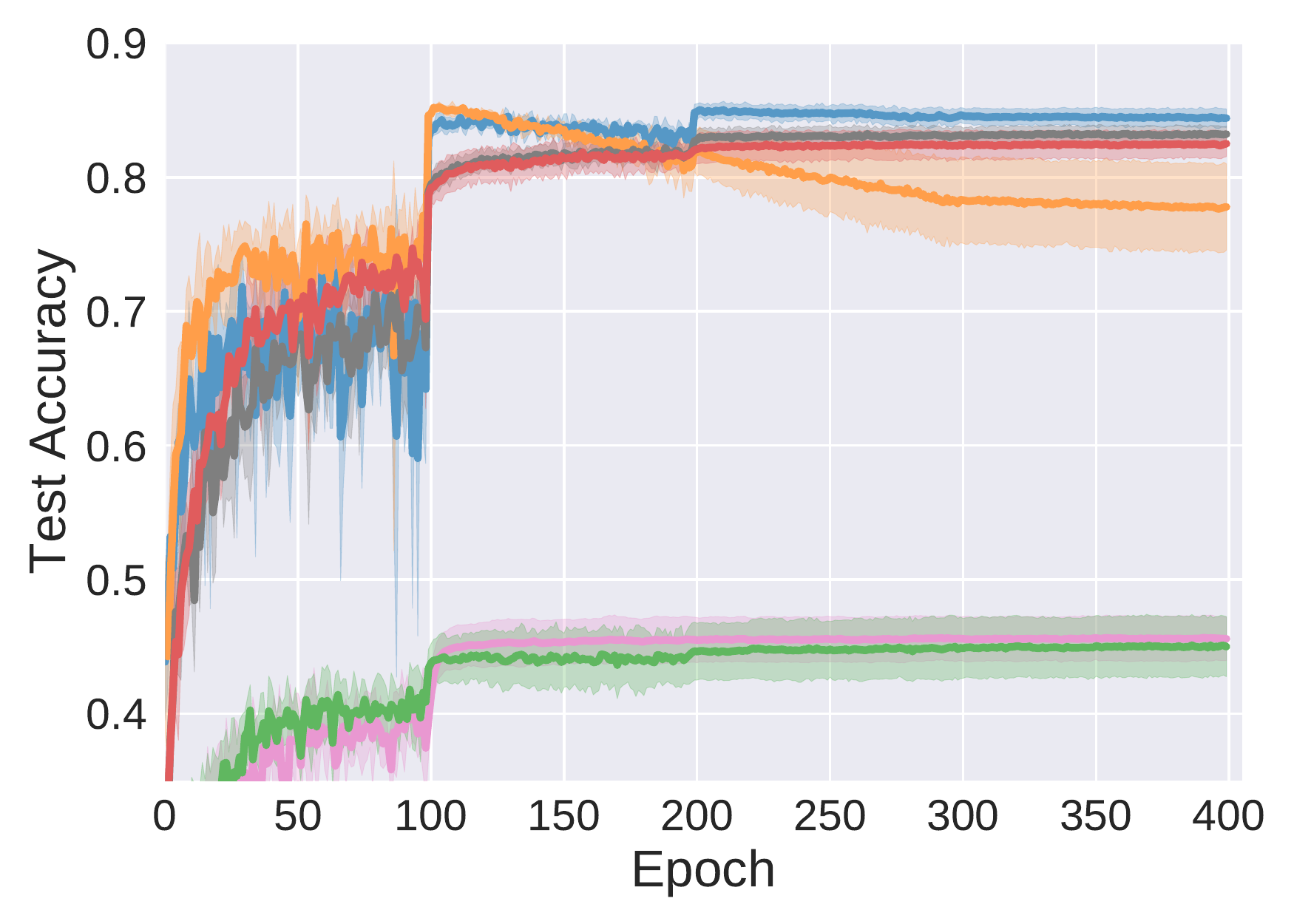}%
        \includegraphics[width=0.333\textwidth]{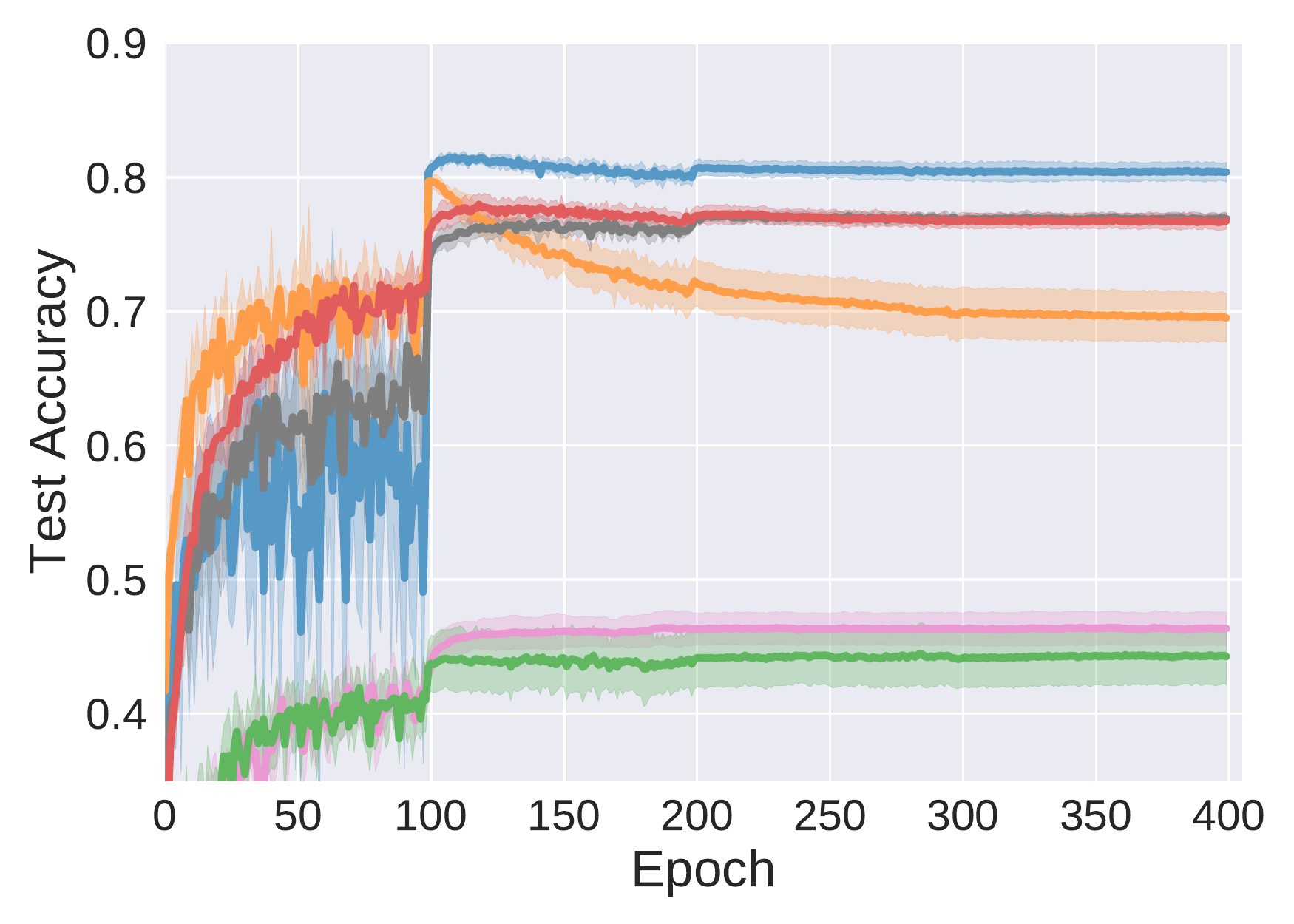}%
        \includegraphics[width=0.333\textwidth]{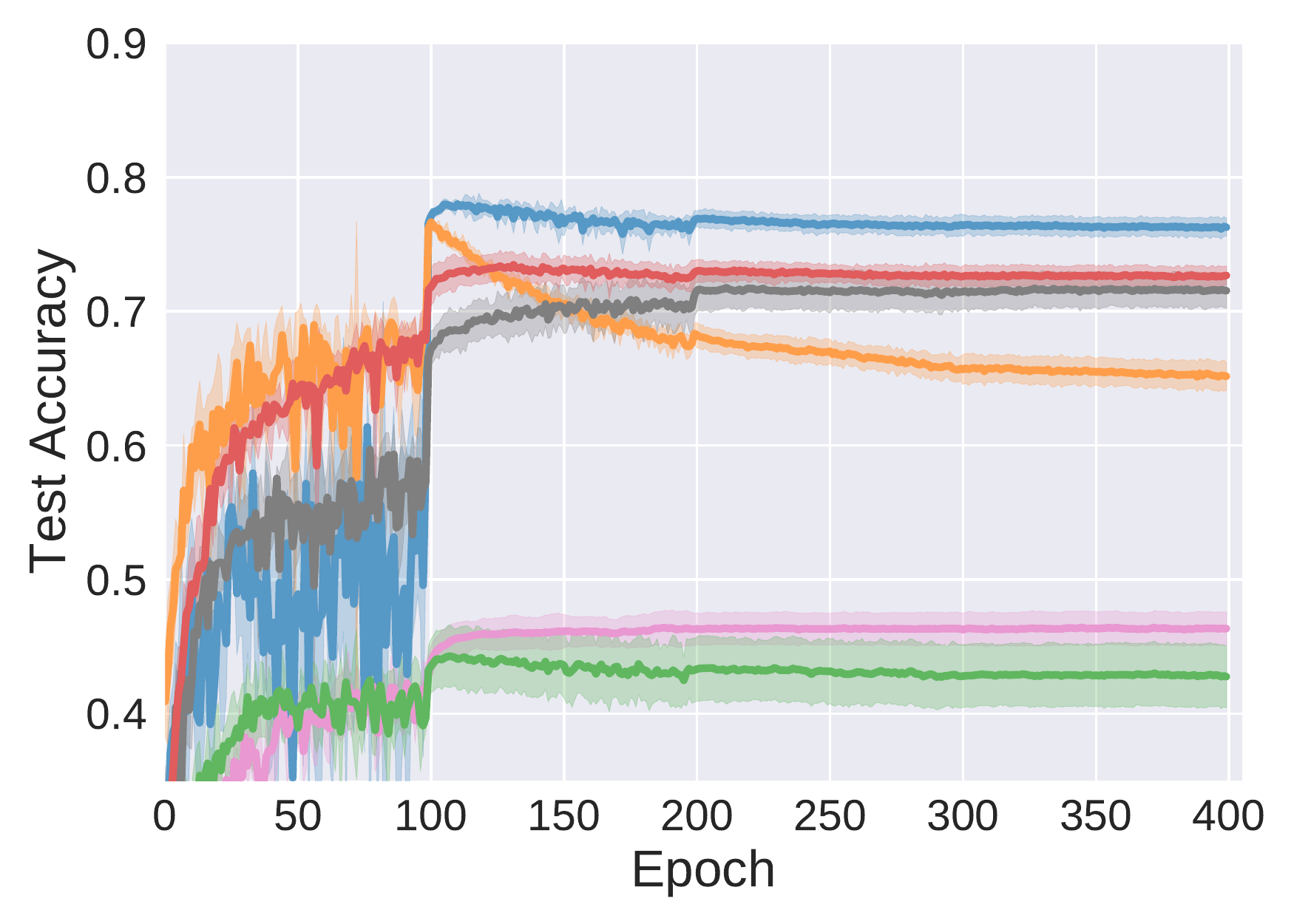}%
    \end{minipage}\\
    \begin{minipage}[c]{0.02\textwidth}\small \rotatebox{90}{CIFAR-100} \end{minipage}%
    \begin{minipage}[c]{0.98\textwidth}
        \includegraphics[width=0.333\textwidth]{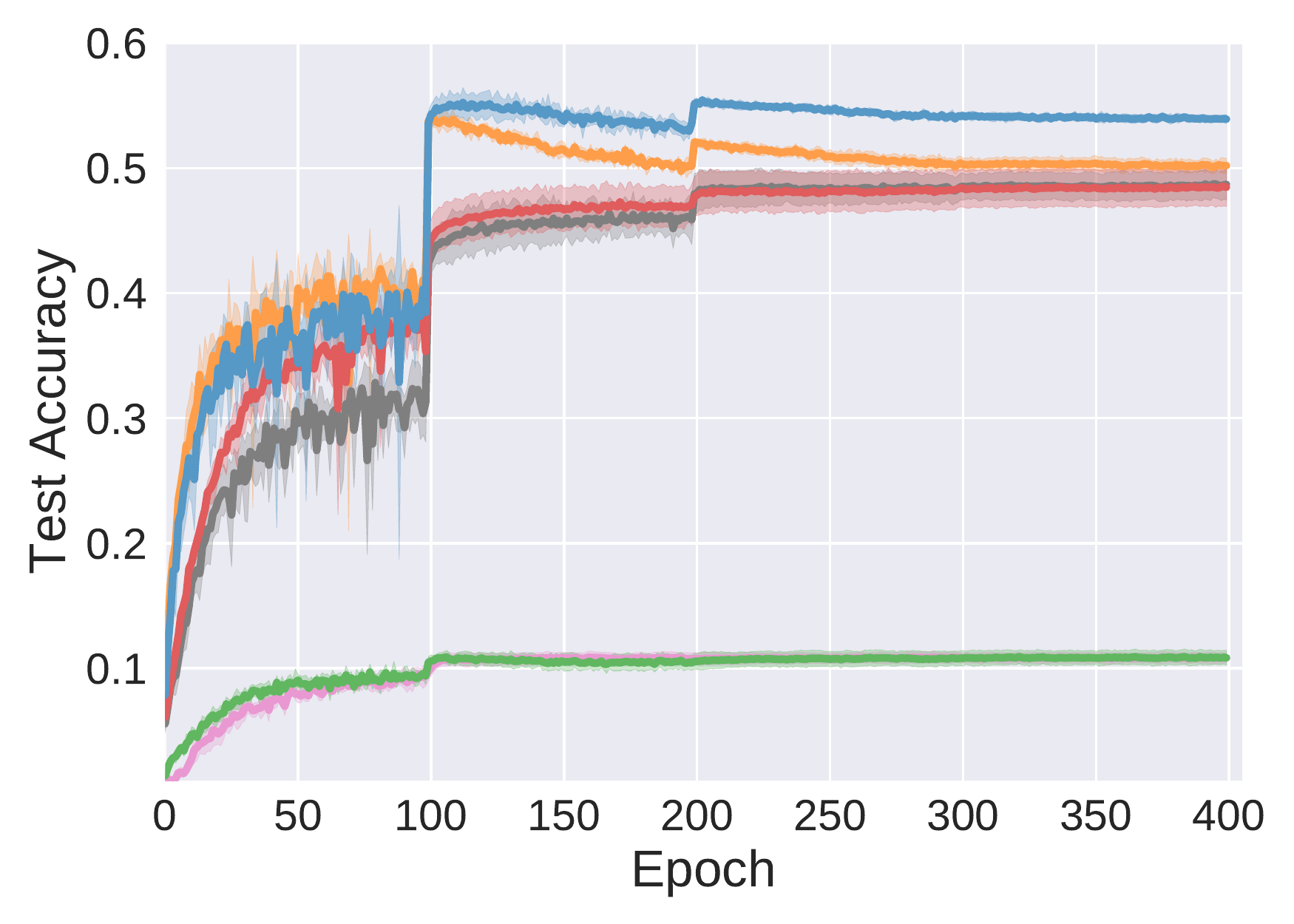}%
        \includegraphics[width=0.333\textwidth]{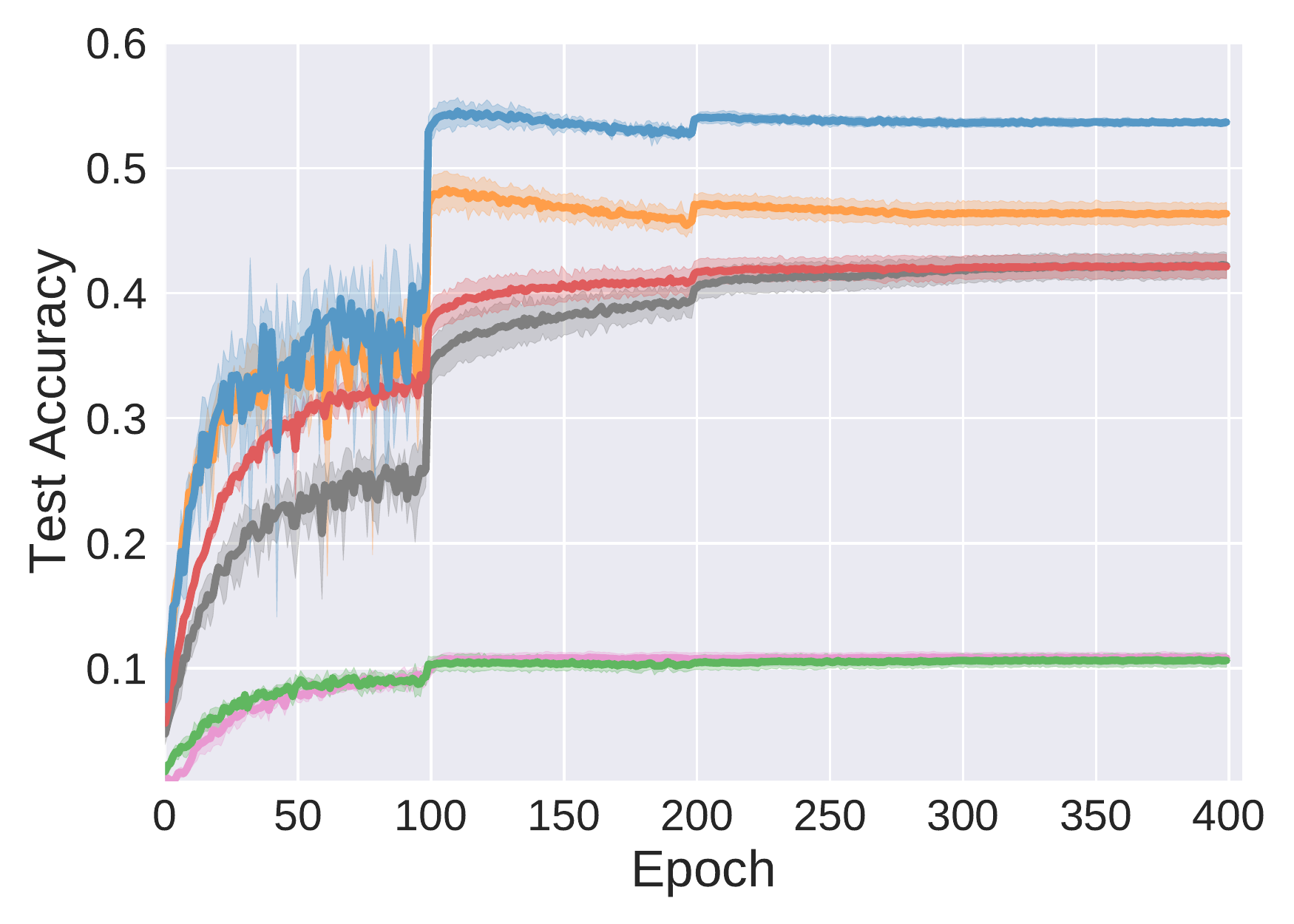}%
        \includegraphics[width=0.333\textwidth]{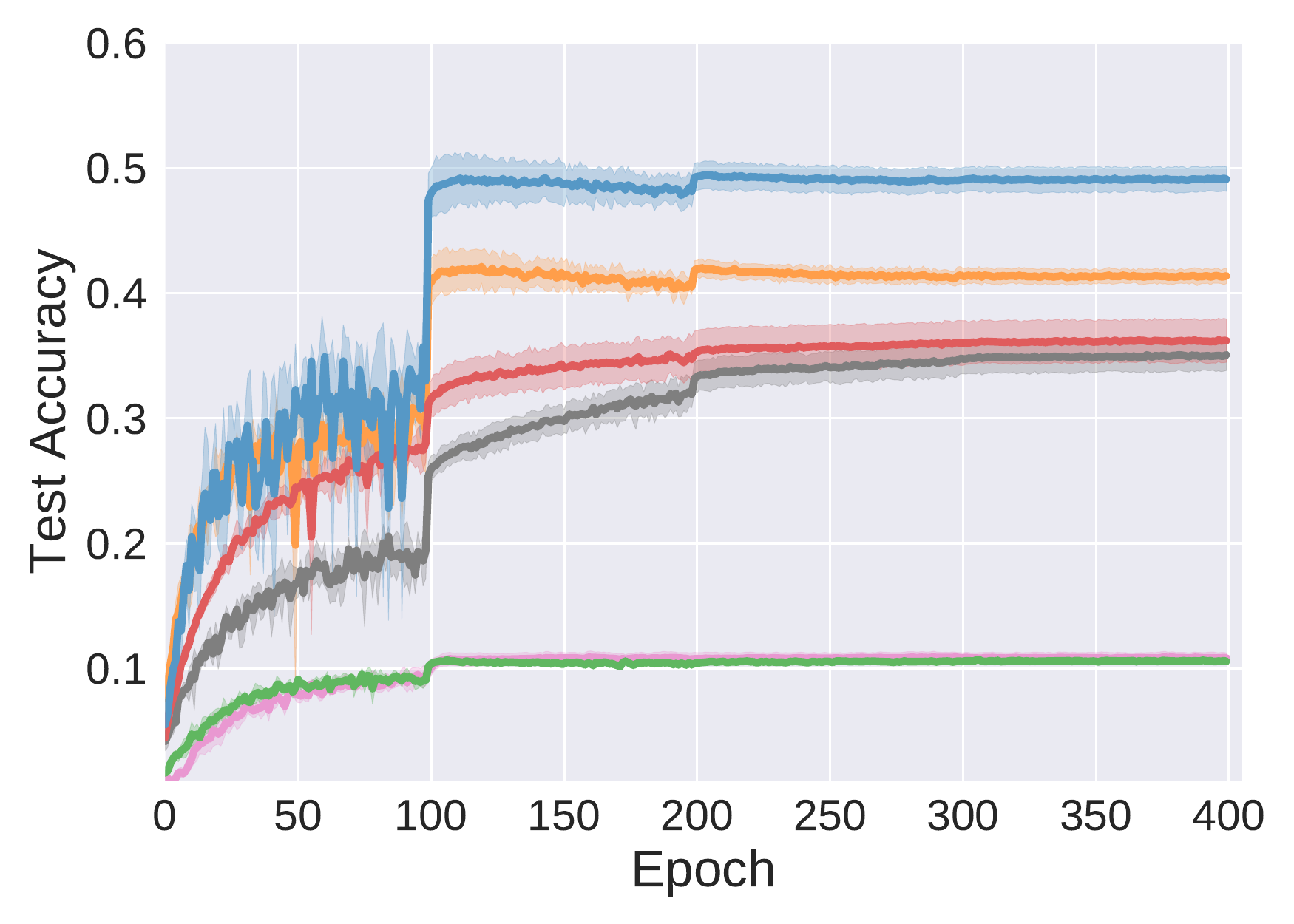}%
    \end{minipage}
    \caption{Experimental results on Fashion-MNIST and CIFAR-10/100 under label noise (5 trials).}
    \label{fig:performance}
\end{figure}

\subsubsection{Baselines}
There are five baseline methods involved in our experiments:
\begin{itemize}
    \item \emph{Clean} discards the training data and uses the validation data for training;
    \item \emph{Uniform} does not weight the training data, i.e., the weights are all ones;
    \item \emph{Random} draws random weights following the \emph{rectified Gaussian distribution};
    \item \emph{IW} is kernel mean matching without any non-linear transformation \cite{huang2007correcting};
    \item \emph{Reweight} is learning to reweight \cite{ren2018learning}.
\end{itemize}
All baselines were implemented with PyTorch.%
\footnote{We implemented Reweight to ensure same random samplings of data and initialization of models.}
Note that in each mini-batch, DIW computes $\mathcal{W}$ and then updates $\mf$, while Reweight updates $\mf$ and then updates $\mathcal{W}$.
Moreover, Reweight updates $\mathcal{W}$ in epoch one, while DIW pretrains $\mf$ in epoch one to equally go over all the training data once.

\subsubsection{Setup}
The experiments were based on three widely used benchmark datasets \emph{Fashion-MNIST} \cite{xiao2017}, \emph{CIFAR-10}, and \emph{CIFAR-100} \cite{krizhevsky2009learning}.
For the set of validation data,
\begin{itemize}
    \item 1,000 random clean data in total were used in the label-noise experiments;
    \item 10 random data per class were used in the class-prior-shift experiments.
\end{itemize}
The validation data were included in the training data, as required by Reweight. Then,
\begin{itemize}
    \item for Fashion-MNIST, LeNet-5 \cite{lecun1998gradient} was trained by SGD \cite{robbins1951stochastic};
    \item for CIFAR-10/100, ResNet-32 \cite{he2016deep} was trained by Adam \cite{kingma15iclr}.
\end{itemize}
For fair comparisons, we normalized $\mathcal{W}$ to make the average weight $1$ in each mini-batch.
For clear comparisons, there was no data augmentation.
More details can be found in \cite{fang2020rethinking}.

\subsubsection{Label-noise Experiments}
Two major class-conditional noise models were considered:
\begin{itemize}
    \item \emph{pair flip} \cite{han2018co}, where a label $j$, if it gets mislabeled, must flip to class $(j\bmod k+1)$;
    \item \emph{symmetric flip} \cite{van2015learning}, where a label may flip to all other classes with equal probability.
\end{itemize}
We set the noise rate as 0.3 for pair flip and 0.4 or 0.5 for symmetric flip.
The experimental results are reported in Figure~\ref{fig:performance}.
We can see that DIW outperforms the baselines.
As the noise rate increases, DIW stays reasonably robust and the baselines tend to overfit the noisy labels.

\begin{figure}[t]
    \centering
    \begin{minipage}[c]{0.02\textwidth}~\end{minipage}%
    \begin{minipage}[c]{0.326\textwidth}\centering\small IW \end{minipage}%
    \begin{minipage}[c]{0.326\textwidth}\centering\small Reweight \end{minipage}%
    \begin{minipage}[c]{0.326\textwidth}\centering\small DIW \end{minipage}\\
    \begin{minipage}[c]{0.02\textwidth}\small \rotatebox{90}{Box plots} \end{minipage}%
    \begin{minipage}[c]{0.98\textwidth}
        \includegraphics[width=0.33\textwidth]{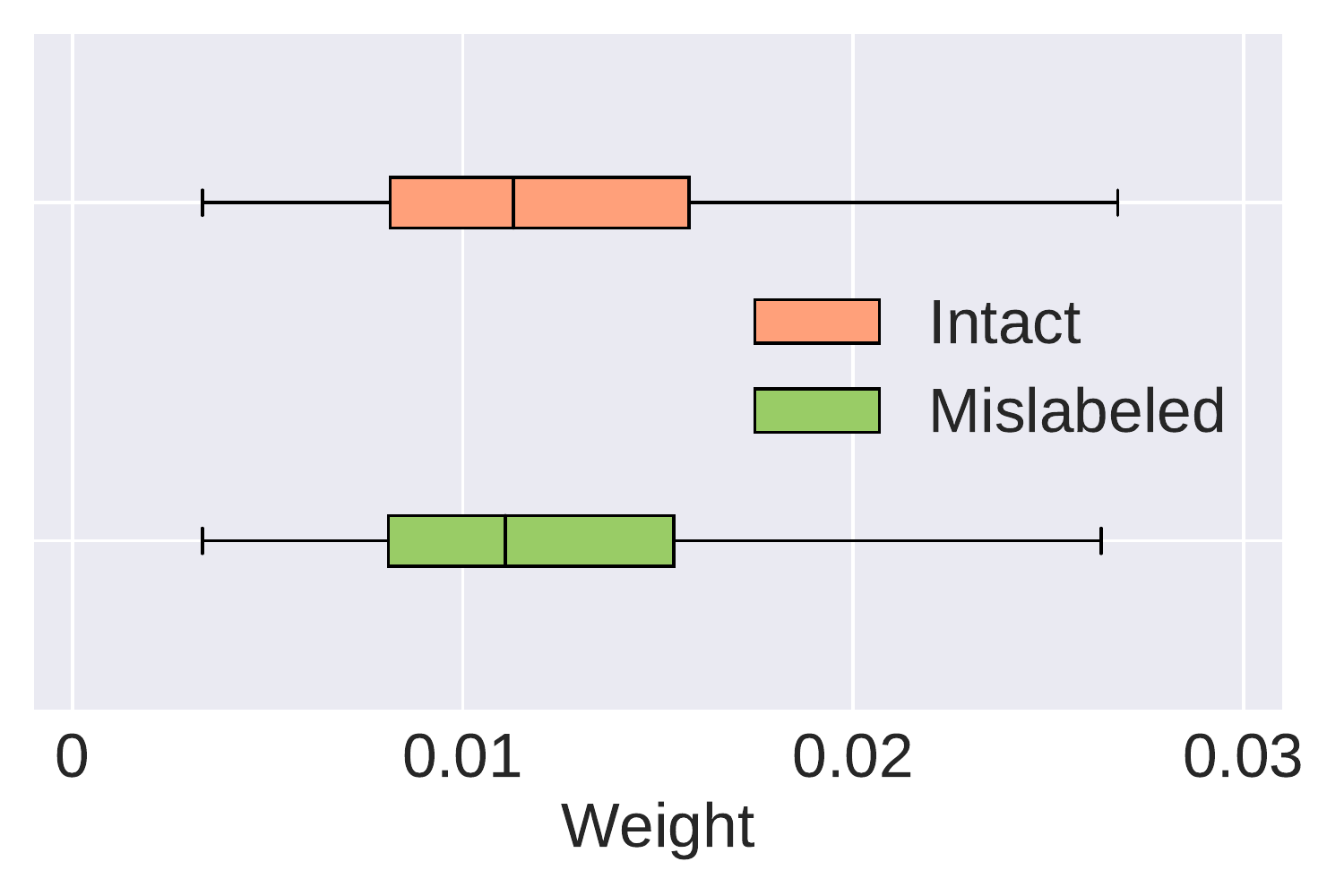}%
        \includegraphics[width=0.33\textwidth]{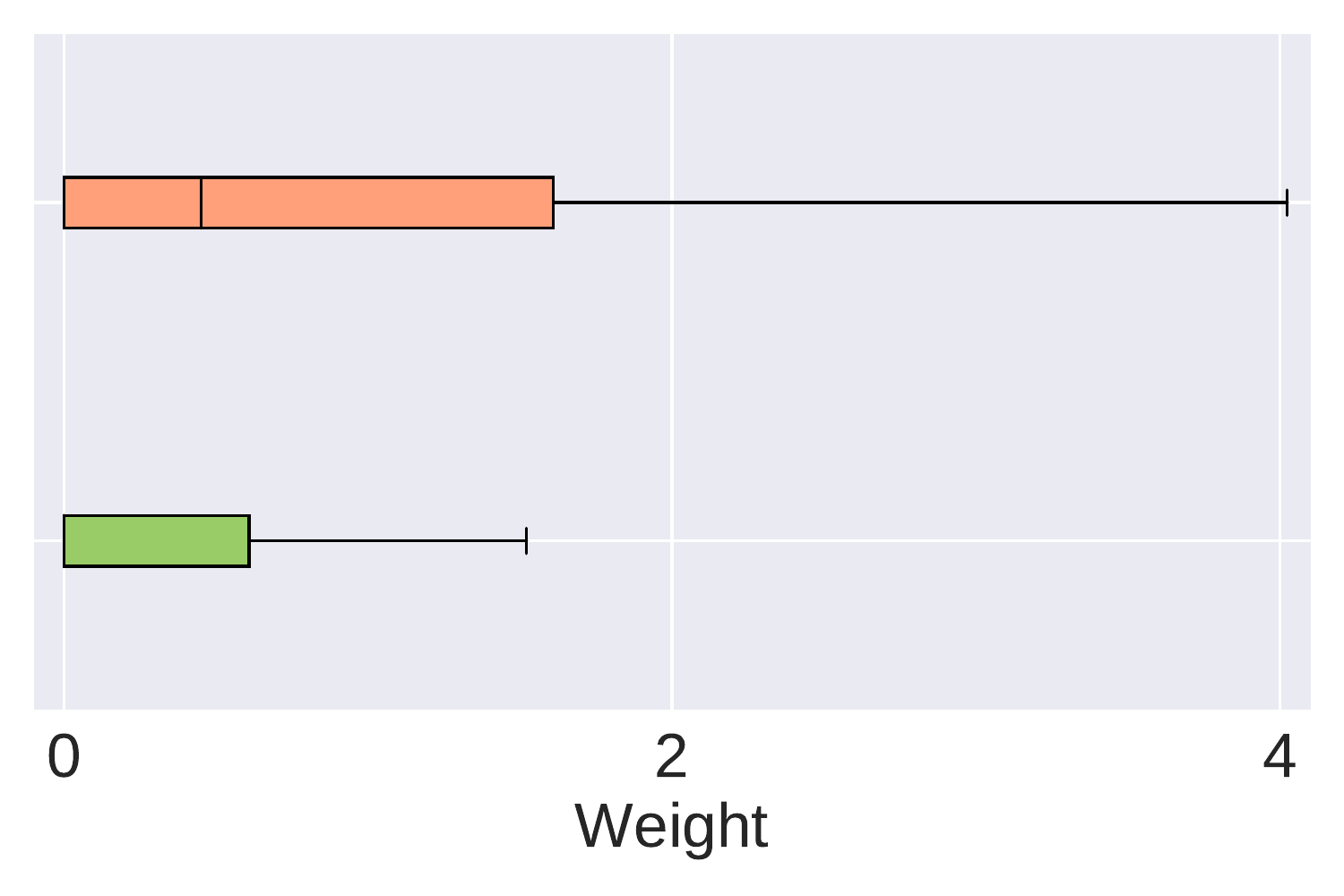}
        \includegraphics[width=0.33\textwidth]{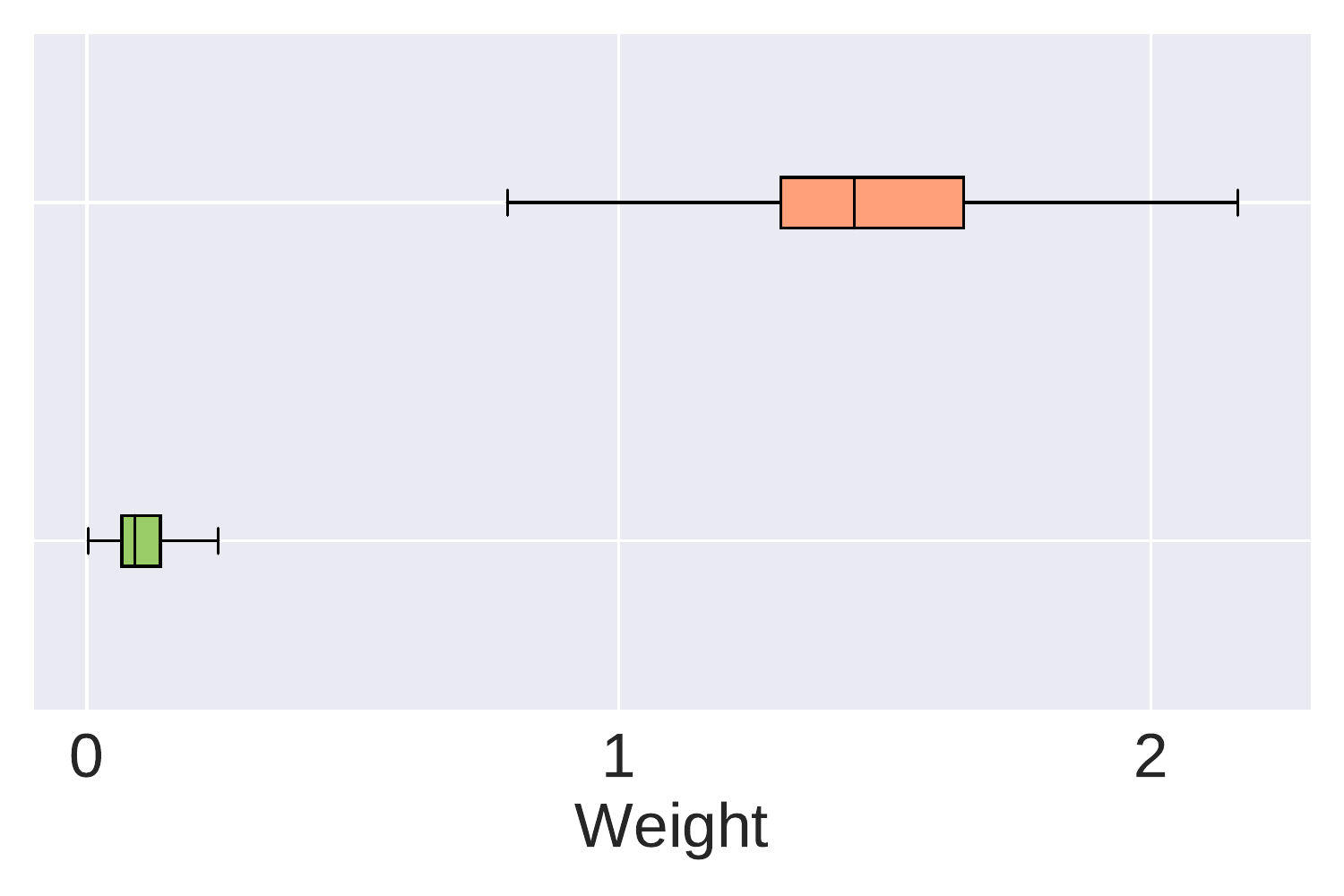}
    \end{minipage}\\
    \begin{minipage}[c]{0.02\textwidth}\small \rotatebox{90}{Histogram plots} \end{minipage}%
    \begin{minipage}[c]{0.98\textwidth}
        \includegraphics[width=0.33\textwidth]{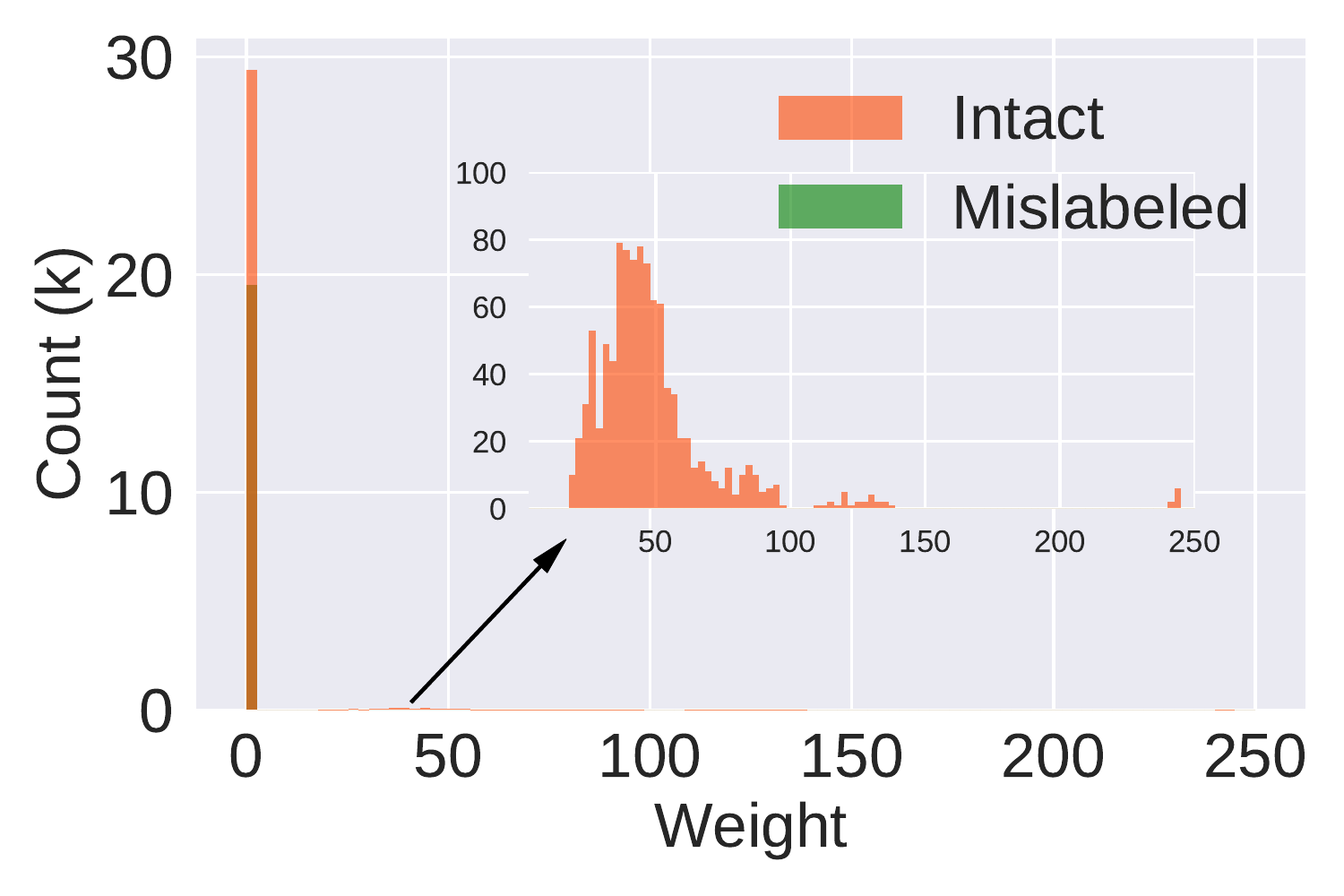}%
        \includegraphics[width=0.33\textwidth]{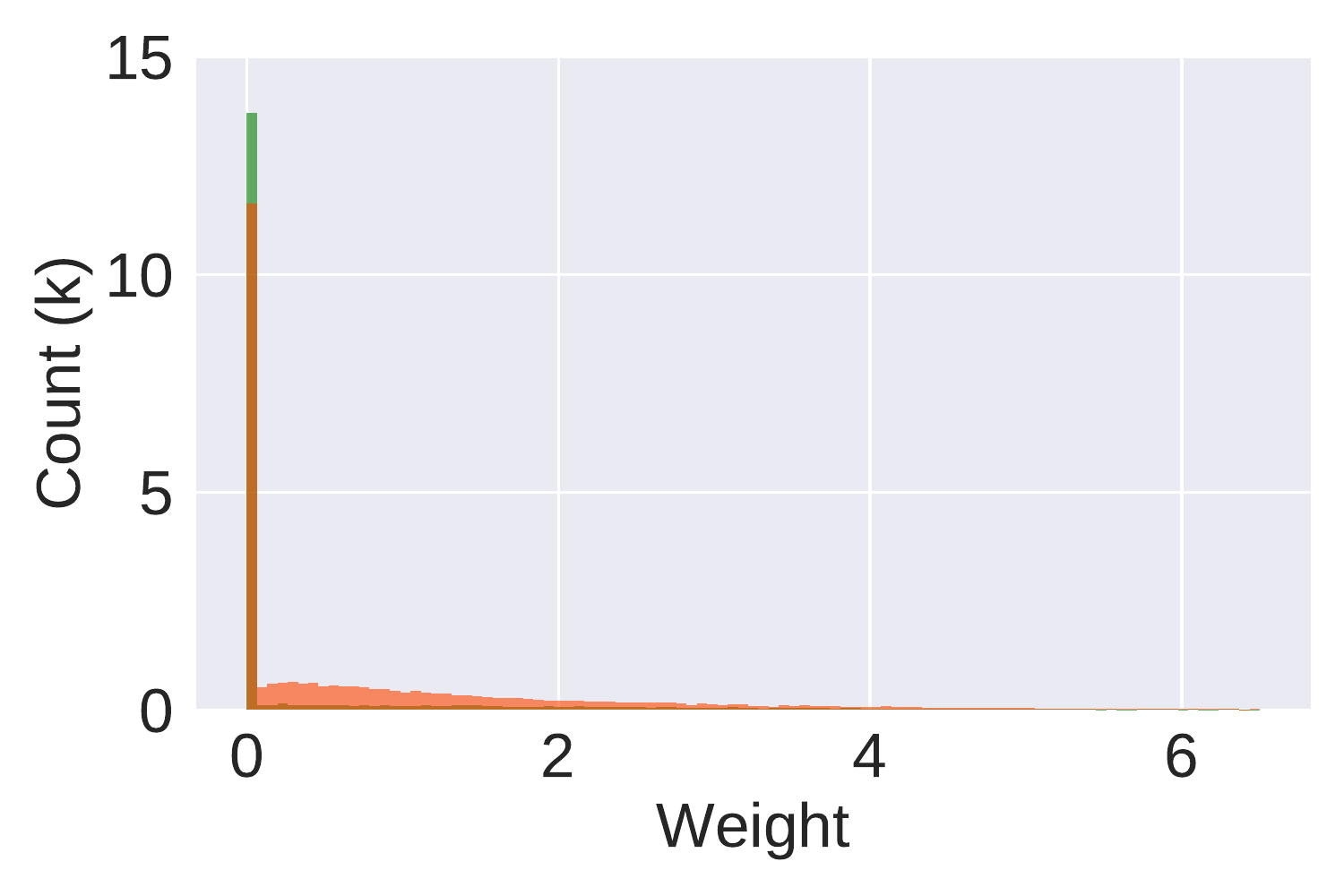}
        \includegraphics[width=0.33\textwidth]{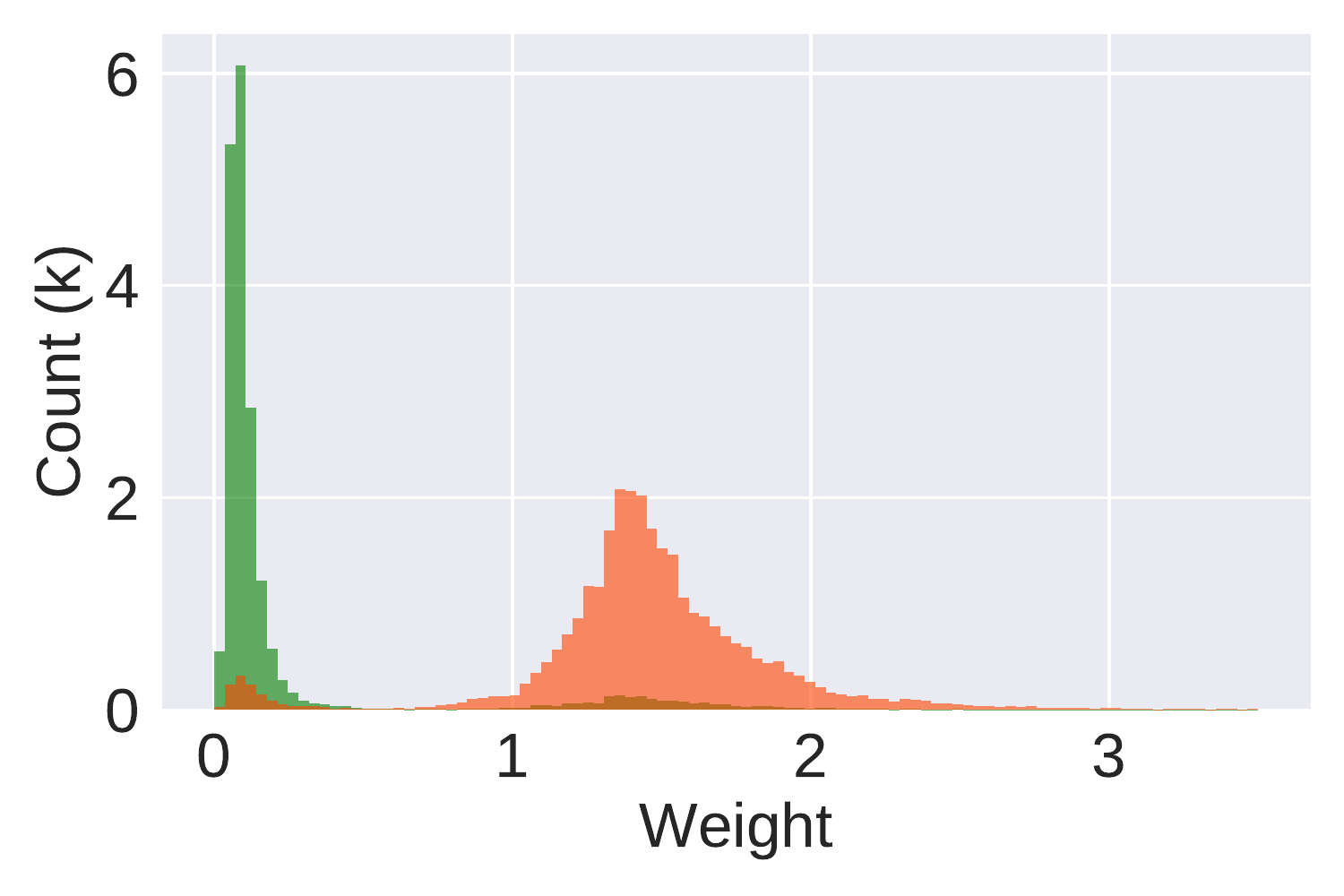}
    \end{minipage}
    \caption{Statistics of weight distributions on CIFAR-10 under 0.4 symmetric flip.}
    \label{fig:weightdist}
\end{figure}

To better understand how DIW contributes to learning robust models, we take a look at the learned weights in the final epoch.
As shown in Figure~\ref{fig:weightdist}, DIW can successfully identify intact/mislabeled training data and automatically up-/down-weight them, while others cannot effectively identify them.
This confirms that DIW can improve the weights and thus reduce the bias of the model.

\subsubsection{Class-prior-shift Experiments}
We imposed class-prior shift on Fashion-MNIST following \cite{buda2018systematic}:
\begin{itemize}
    \item the classes were divided into majority classes and minority classes, where the fraction of the minority classes is $\mu<1$;
    \item the training data were drawn from every majority class using a sample size, and from every minority class using another sample size, where the ratio of these two sample sizes was $\rho>1$;
    \item the test data were evenly sampled form all classes.
\end{itemize}
We fixed $\mu=0.2$ and tried $\rho=100$ and $\rho=200$.
A new baseline \emph{Truth} was added for reference, where the true weights were used, i.e., $1-\mu+\mu/\rho$ and $\mu+\rho-\mu\rho$ for the majority/minority classes.

The experimental results are reported in Table~\ref{tab:ib}, where we can see that DIW again outperforms the baselines.
Table~\ref{tab: dist} contains the \emph{mean absolute error}~(MAE) and \emph{root mean square error}~(RMSE) from the weights learned by IW, Reweight, and DIW to the true weights, as the unit test under class-prior shift.
The results confirm that the weights learned by DIW are closer to the true weights.

\begin{table}[t]
    \begin{minipage}[t]{0.46\textwidth}
        \centering\small
        \caption{Mean accuracy (standard deviation) in percentage on Fashion-MNIST under class-prior shift (5 trials). Best and comparable methods (paired \textit{t}-test at significance level 5\%) are highlighted in bold.}
        \label{tab:ib}
        \begin{tabular}{ccc}
            \toprule
            & $\rho=100$ & $\rho=200$\\
            \midrule
            Clean & \makecell{63.38 (2.59)} & \makecell{63.38 (2.59)} \\
            Uniform & \makecell{\textbf{83.48 (1.26)}} & \makecell{79.12 (1.18)} \\
            Random & \makecell{\textbf{83.11 (1.70)}} & \makecell{79.38 (0.96)} \\
            IW & \makecell{\textbf{83.45} \textbf{(1.10)}} & \makecell{\textbf{80.25} \textbf{(2.23)}} \\
            Reweight & \makecell{81.96 (1.74)} & \makecell{\textbf{79.37 (2.38)}} \\
            DIW & \makecell{\textbf{84.02 (1.82)}} & \makecell{\textbf{81.37 (0.95)}} \\
            \midrule
            Truth & \makecell{\textbf{83.29 (1.11)}} & \makecell{\textbf{80.22 (2.13)}} \\
            \bottomrule
        \end{tabular}
    \end{minipage}\hfill
    \begin{minipage}[t]{0.50\textwidth}
        \centering\small
        \caption{Mean distance (standard deviation) from the learned weights to the true weights on Fashion-MNIST under class-prior shift (5 trials). Best and comparable methods (paired \textit{t}-test at significance level 5\%) are highlighted in bold.}
        \label{tab: dist}
        \begin{tabular}{ccc}
            \toprule
            $\rho=100$ & MAE & RMSE\\
            \midrule
            IW & \makecell{1.10 (0.03)} & \makecell{10.19 (0.33)}\\
            Reweight & \makecell{1.66 (0.02)} & \makecell{5.65 (0.20)}\\
            DIW & \makecell{\textbf{0.45} \textbf{(0.02)}} & \makecell{\textbf{3.19} \textbf{(0.07)}}\\
            \midrule
            $\rho=200$ & MAE & RMSE\\
            \midrule
            IW & \makecell{1.03 (0.04)} & \makecell{9.99 (0.38)}\\
            Reweight & \makecell{1.64 (0.05)} & \makecell{6.07 (0.86)}\\
            DIW & \makecell{\textbf{0.46} \textbf{(0.06)}} & \makecell{\textbf{3.67} \textbf{(0.13)}}\\
            \bottomrule
        \end{tabular}
    \end{minipage}
\end{table}

\subsubsection{Ablation Study}
As shown in Figure~\ref{fig:siwvsdiw}, DIW comprises many options, which means that DIW can have a complicated algorithm design.
Starting from IW,
\begin{itemize}
    \item introducing feature extractor~(FE) yields SIW;
    \item based on SIW, updating $\mathcal{W}$ yields DIW1;
    \item based on DIW1, updating FE yields DIW2;
    \item based on DIW2, pretraining FE yields DIW3.
\end{itemize}
We compared them under label noise and report the results in Table~\ref{tab: ablation}, where the ``-F'' or ``-L'' suffix means using the hidden-layer-output or loss-value transformation.
In general, we can observe
\begin{itemize}
    \item SIWs improve upon IW due to the introduction of FE;
    \item DIWs improve upon SIWs due to the dynamic nature of $\mathcal{W}$ in DIWs;
    \item for DIWs with a pretrained FE (i.e., DIW1 and DIW3), updating the FE during training is usually better than fixing it throughout training;
    \item for DIWs whose FE is updated (i.e., DIW2 and DIW3), ``-F'' methods perform better when FE is pretrained, while ``-L'' methods do not necessarily need to pretrain FE.
\end{itemize}
Therefore, DIW2-L is more recommended, which was indeed used in the previous experiments.

\begin{table}
\begin{center}
    \small
    \caption{Mean accuracy (standard deviation) in percentage on Fashion-MNIST (F-MNIST for short) and CIFAR-10/100 under label noise (5 trials). Best and comparable methods (paired \textit{t}-test at significance level 5\%) are highlighted in bold. p/s is short for pair/symmetric flip.}
    \label{tab: ablation}
    \resizebox{\textwidth}{!}{
    \begin{tabular}{ccccccccccc}
        \toprule
        & Noise & IW & SIW-F & SIW-L & DIW1-F & DIW2-F & DIW3-F & DIW1-L & DIW2-L & DIW3-L\\
        \midrule
        \multirow{5}{*}{\rotatebox{90}{F-MNIST}} & 0.3 p &  \makecell{82.69 \\ (0.38)} &  \makecell{82.41 \\ (0.46)} & \makecell{85.46 \\ (0.29)} & \makecell{87.60 \\ (0.07)} & \makecell{\textbf{87.67} \\ \textbf{(0.37)}} & \makecell{87.54 \\ (0.25)} & \makecell{87.04 \\ (0.51)} & \makecell{\textbf{88.19} \\ \textbf{(0.43)}} & \makecell{86.68 \\ (1.42)} \\
        & 0.4 s &  \makecell{80.54 \\ (0.66)} &  \makecell{82.36 \\ (0.65)} & \makecell{\textbf{88.68} \\ \textbf{(0.23)}} & \makecell{87.45 \\ (0.22)} & \makecell{87.04 \\ (0.30)}  & \makecell{88.29 \\ (0.16)} & \makecell{\textbf{88.98} \\ \textbf{(0.19)}} & \makecell{88.29 \\ (0.18)} & \makecell{87.89 \\ (0.43)} \\
        & 0.5 s &  \makecell{78.90 \\ (0.97)} &  \makecell{81.29 \\ (0.68)} & \makecell{\textbf{87.49} \\ \textbf{(0.23)}} & \makecell{\textbf{87.27} \\ \textbf{(0.38)}} & \makecell{86.41 \\ (0.36)}  & \makecell{87.28 \\ (0.18)} & \makecell{\textbf{87.70} \\ \textbf{(0.15)}} & \makecell{\textbf{87.67} \\ \textbf{(0.57)}} & \makecell{\textbf{86.74} \\ \textbf{(1.19)}}\\
        \midrule
        \multirow{5}{*}{\rotatebox{90}{CIFAR-10}} & 0.3 p &  \makecell{45.02 \\ (2.25)} &  \makecell{74.61 \\ (0.51)} & \makecell{80.45 \\ (0.89)} & \makecell{82.75 \\ (0.57)} &  \makecell{81.19 \\ (0.81)} & \makecell{81.76 \\ (0.70)} & \makecell{81.73 \\ (0.54)} & \makecell{\textbf{84.44} \\ \textbf{(0.70)}} & \makecell{\textbf{83.80} \\ \textbf{(0.93)}} \\
        & 0.4 s &  \makecell{44.31 \\ (2.14)} &  \makecell{65.58 \\ (0.82)} & \makecell{76.39 \\ (0.72)} & \makecell{78.23 \\ (0.69)} &  \makecell{77.48 \\ (0.60)}  & \makecell{78.75 \\ (0.45)} &\makecell{75.27 \\ (1.37)} & \makecell{\textbf{80.40} \\ \textbf{(0.69)}} & \makecell{\textbf{80.10} \\ \textbf{(0.58)}} \\
        & 0.5 s &  \makecell{42.84 \\ (2.35)} &  \makecell{62.81 \\ (1.29)} & \makecell{71.47 \\ (1.47)} & \makecell{74.20 \\ (0.81)} &  \makecell{73.98 \\ (1.29)}  & \makecell{\textbf{76.38} \\ \textbf{(0.53)}} &\makecell{69.67 \\ (1.73)} & \makecell{\textbf{76.26} \\ \textbf{(0.73)}} & \makecell{\textbf{76.86} \\ \textbf{(0.44)}}\\
        \midrule
        \multirow{5}{*}{\rotatebox{90}{CIFAR-100$^*$}} & 0.3 p &  \makecell{10.85 \\ (0.59)} &  \makecell{10.44 \\ (0.63)} & \makecell{45.43 \\ (0.71)} & -- & -- & -- & \makecell{51.90 \\ (1.11)} & \makecell{\textbf{53.94} \\ \textbf{(0.29)}} & \makecell{\textbf{54.01} \\ \textbf{(0.93)}} \\
        & 0.4 s &  \makecell{10.61 \\ (0.53)} &  \makecell{11.70 \\ (0.48)} & \makecell{47.40 \\ (0.34)} & -- & -- & -- & \makecell{50.99 \\ (0.16)}& \makecell{\textbf{53.66} \\ \textbf{(0.28)}}& \makecell{\textbf{53.07} \\ \textbf{(0.32)}} \\
        & 0.5 s &  \makecell{10.58 \\ (0.17)} &  \makecell{13.26 \\ (0.69)} & \makecell{41.74 \\ (1.68)} & -- & -- & -- & \makecell{46.25 \\ (0.60)} & \makecell{\textbf{49.13} \\ \textbf{(0.98)}} & \makecell{\textbf{49.11} \\ \textbf{(0.90)}}\\
        \bottomrule
    \end{tabular}}\\
\end{center}
{\small$^*$ Note: ``-F'' methods for DIW are not applicable on CIFAR-100, since there are too few data in a class in a mini-batch.}
\end{table}

\begin{figure}[!t]
    \centering
    \subcaptionbox{IW}{\includegraphics[width=0.19\textwidth]{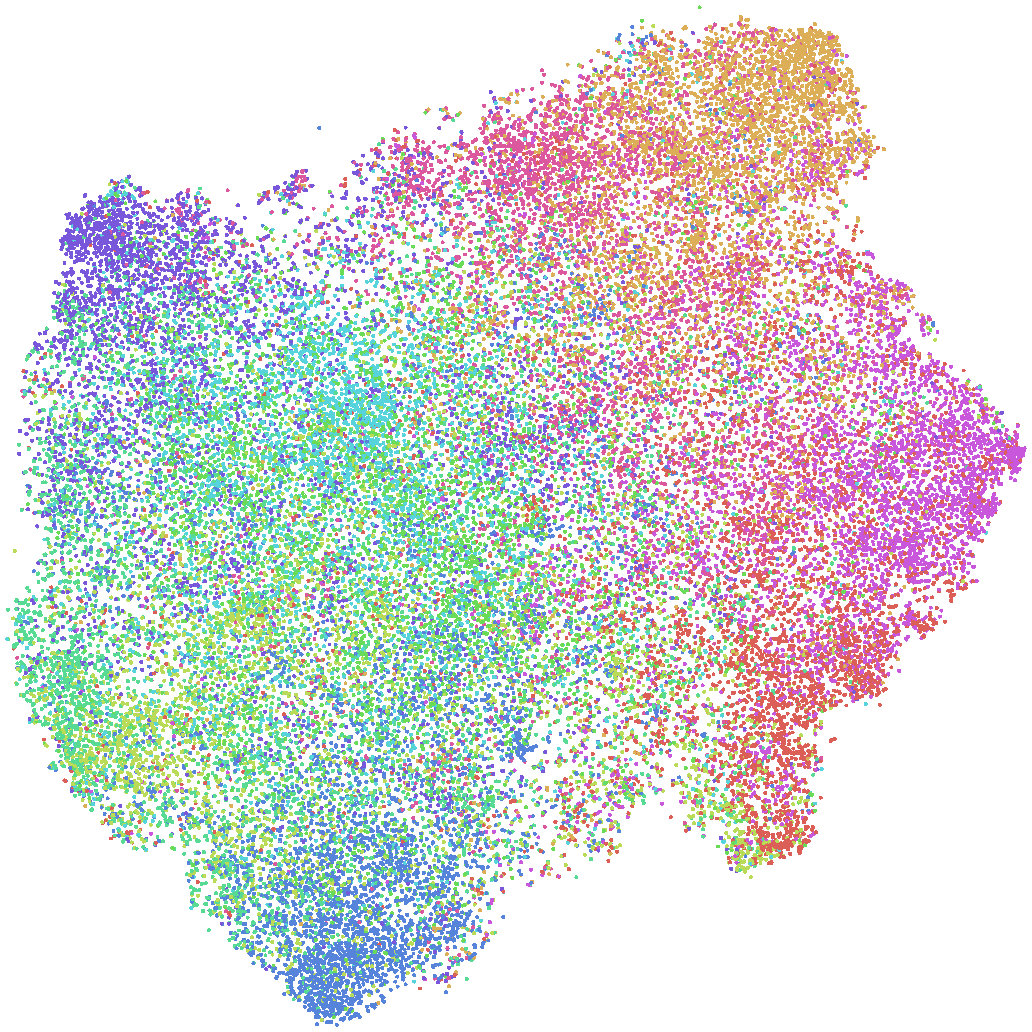}}\hfill
    \subcaptionbox{SIW-F}{\includegraphics[width=0.19\textwidth]{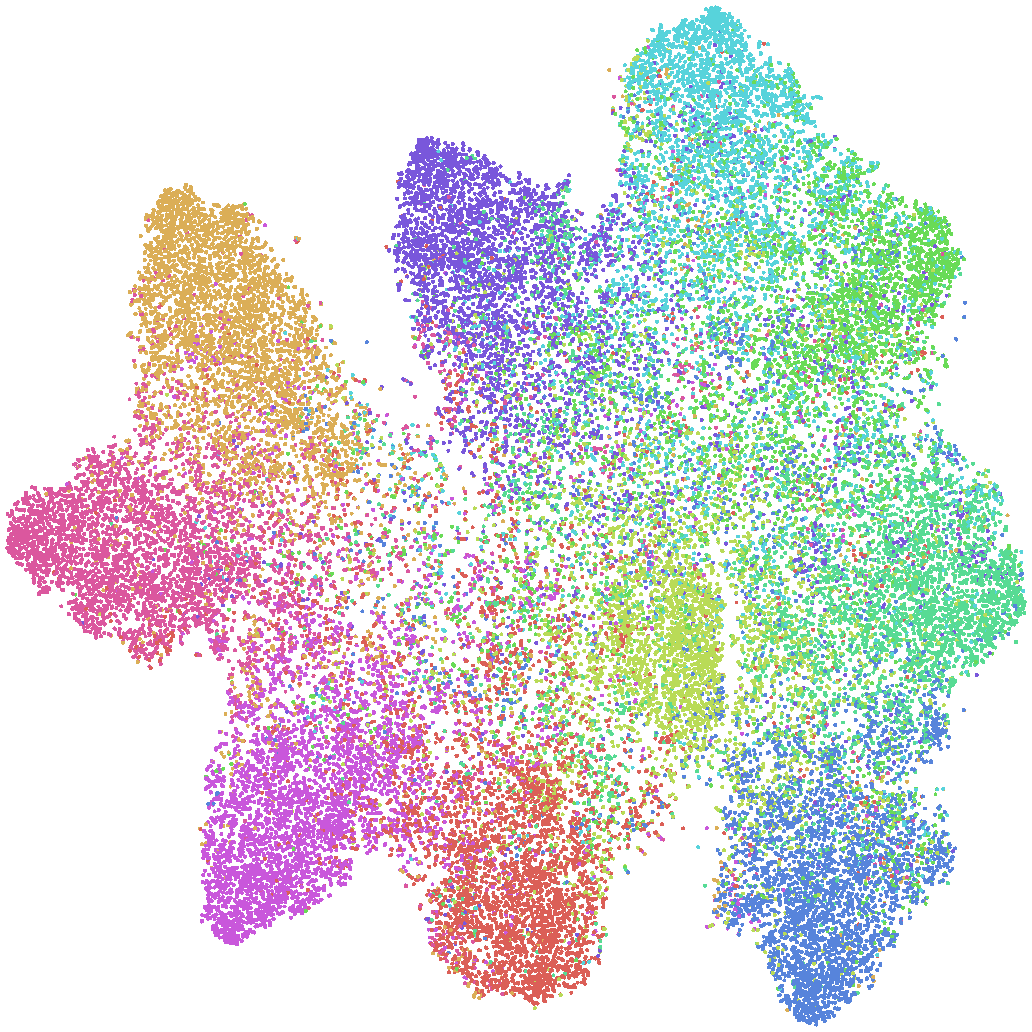}}\hfill
    \subcaptionbox{SIW-L}{\includegraphics[width=0.19\textwidth]{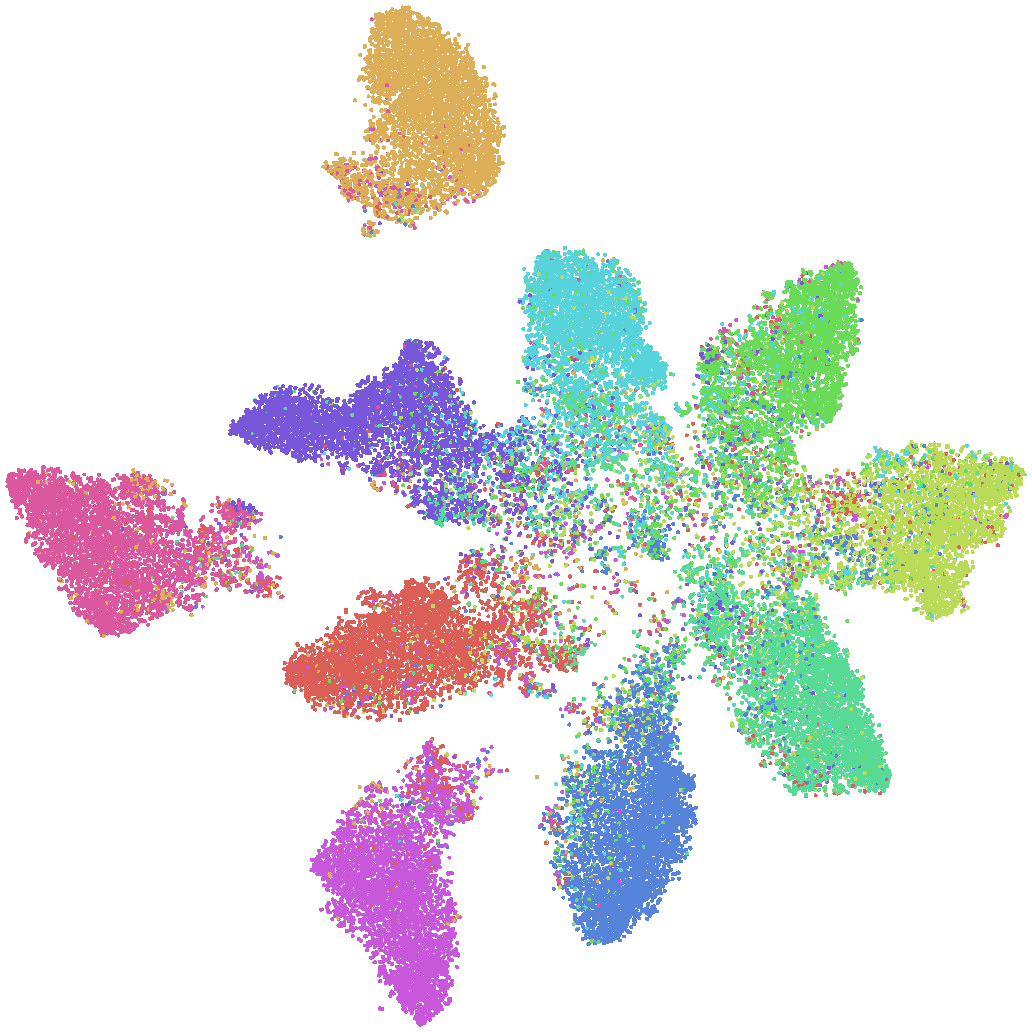}}\hfill
    \subcaptionbox{Reweight}{\includegraphics[width=0.19\textwidth]{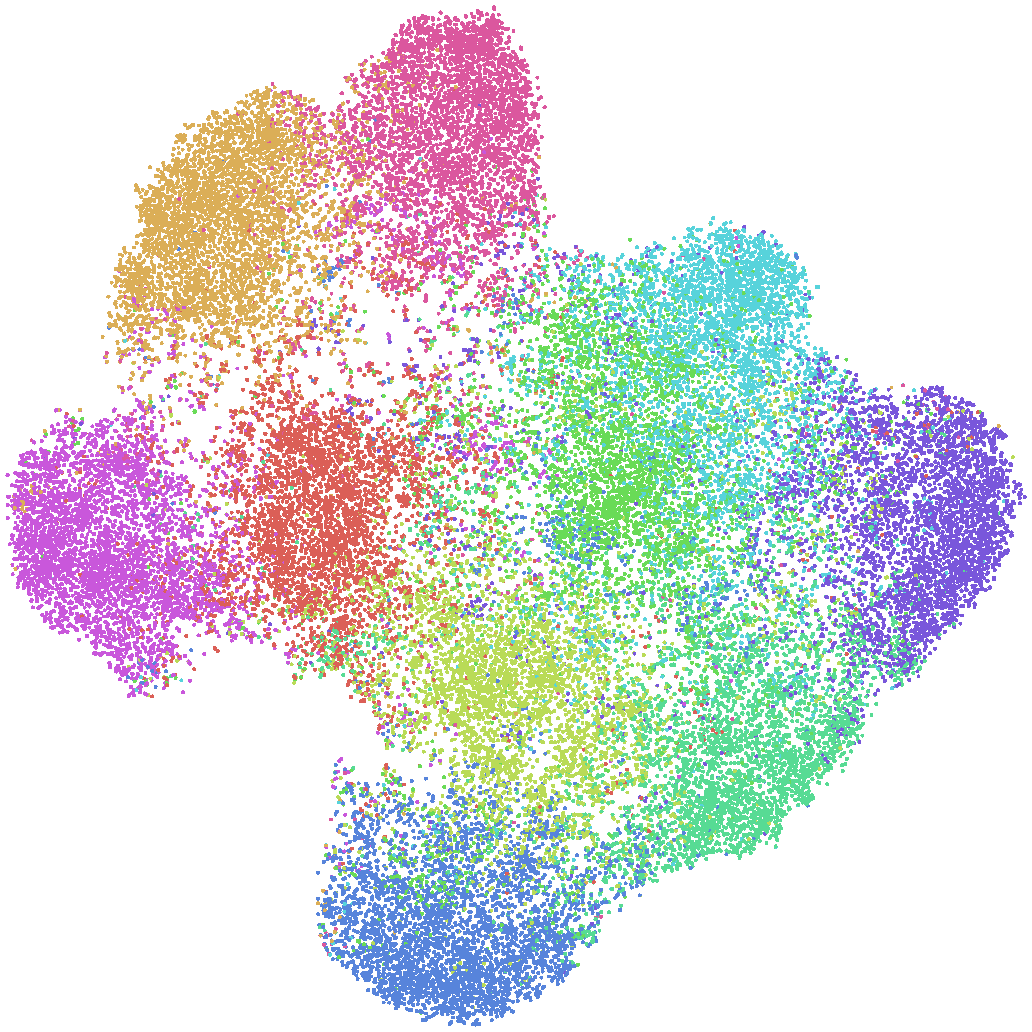}}\hfill
    \subcaptionbox{DIW1-F}{\includegraphics[width=0.19\textwidth]{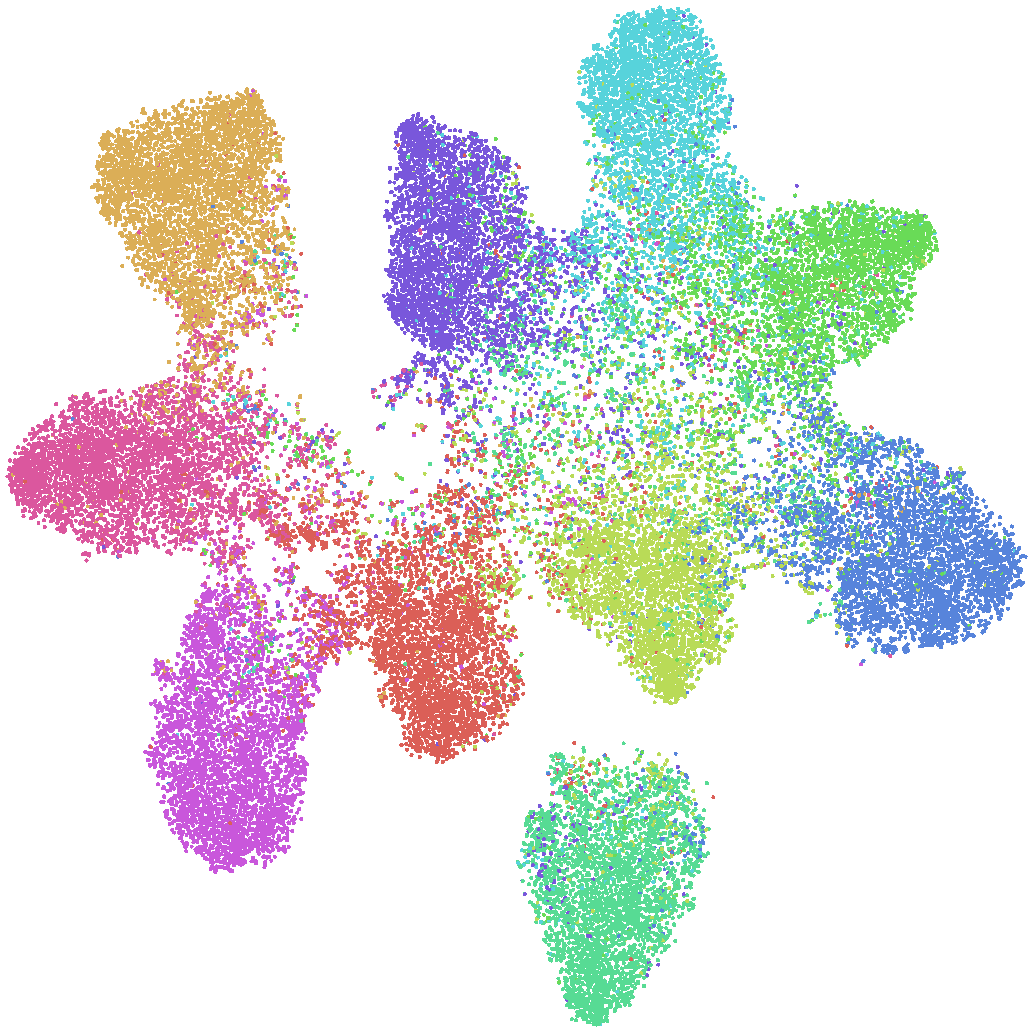}}\\
    \subcaptionbox{DIW2-F}{\includegraphics[width=0.19\textwidth]{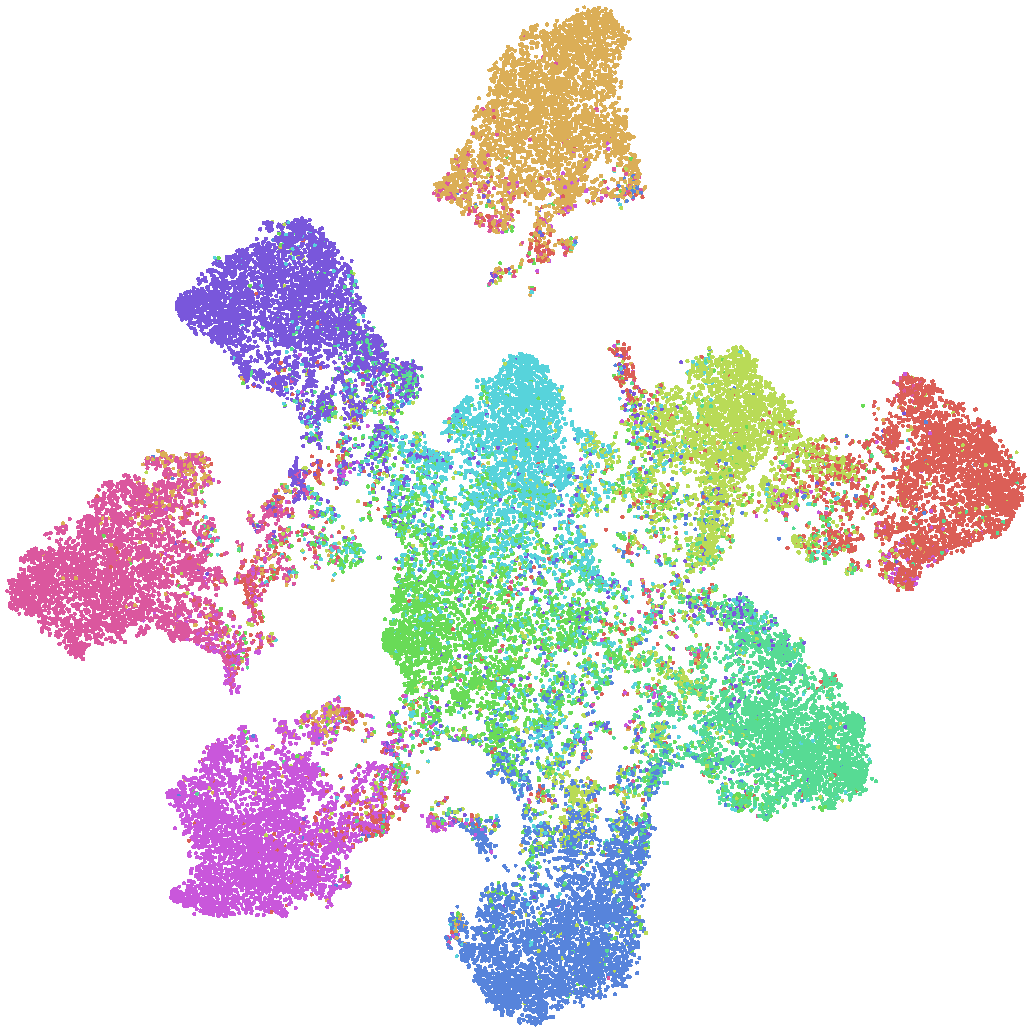}}\hfill
    \subcaptionbox{DIW3-F}{\includegraphics[width=0.19\textwidth]{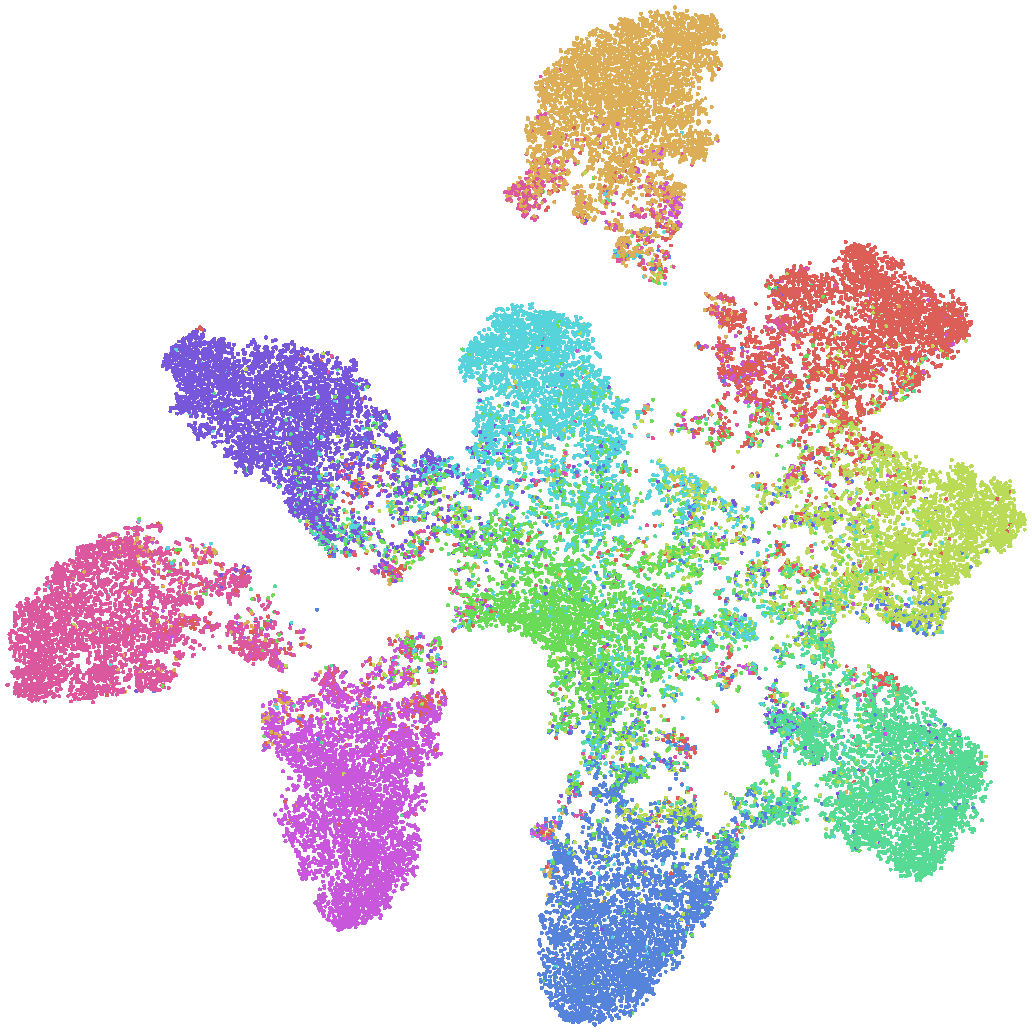}}\hfill
    \subcaptionbox{DIW1-L}{\includegraphics[width=0.19\textwidth]{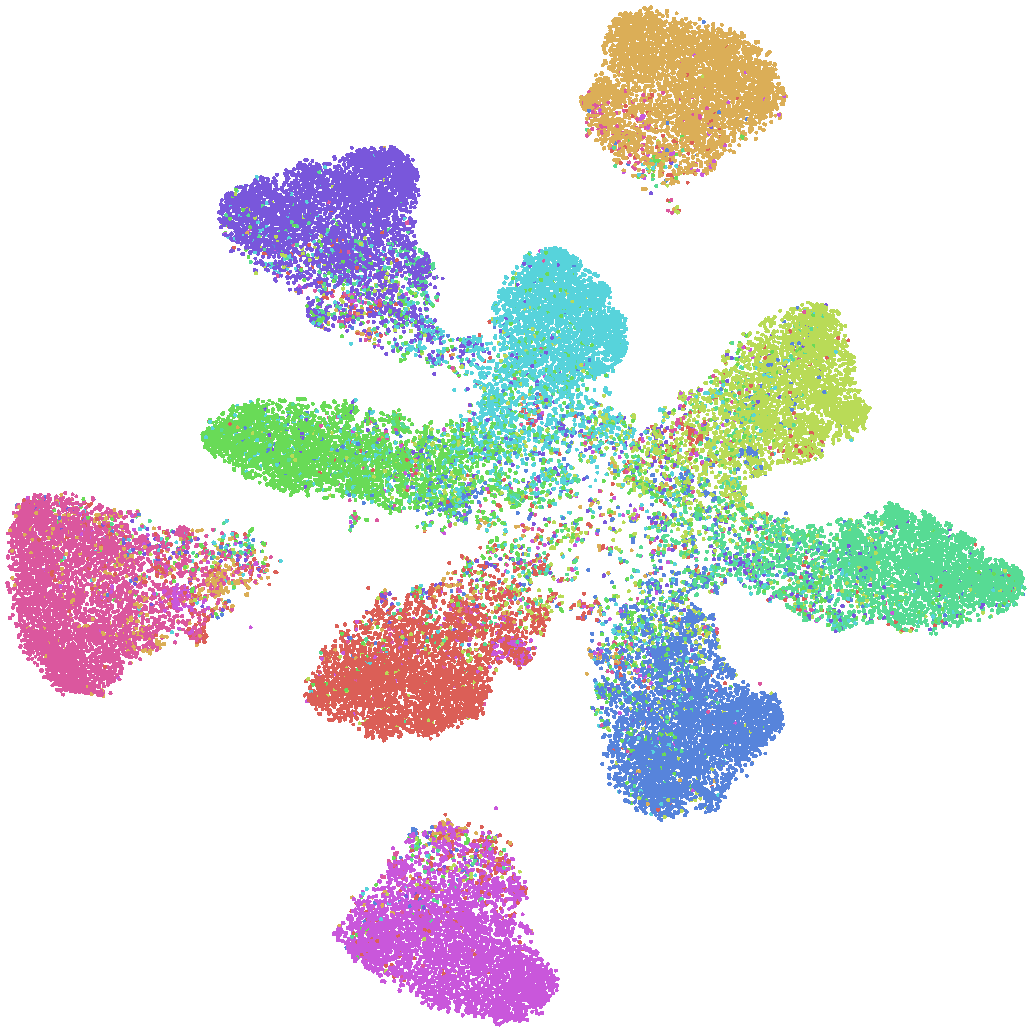}}\hfill
    \subcaptionbox{DIW2-L}{\includegraphics[width=0.19\textwidth]{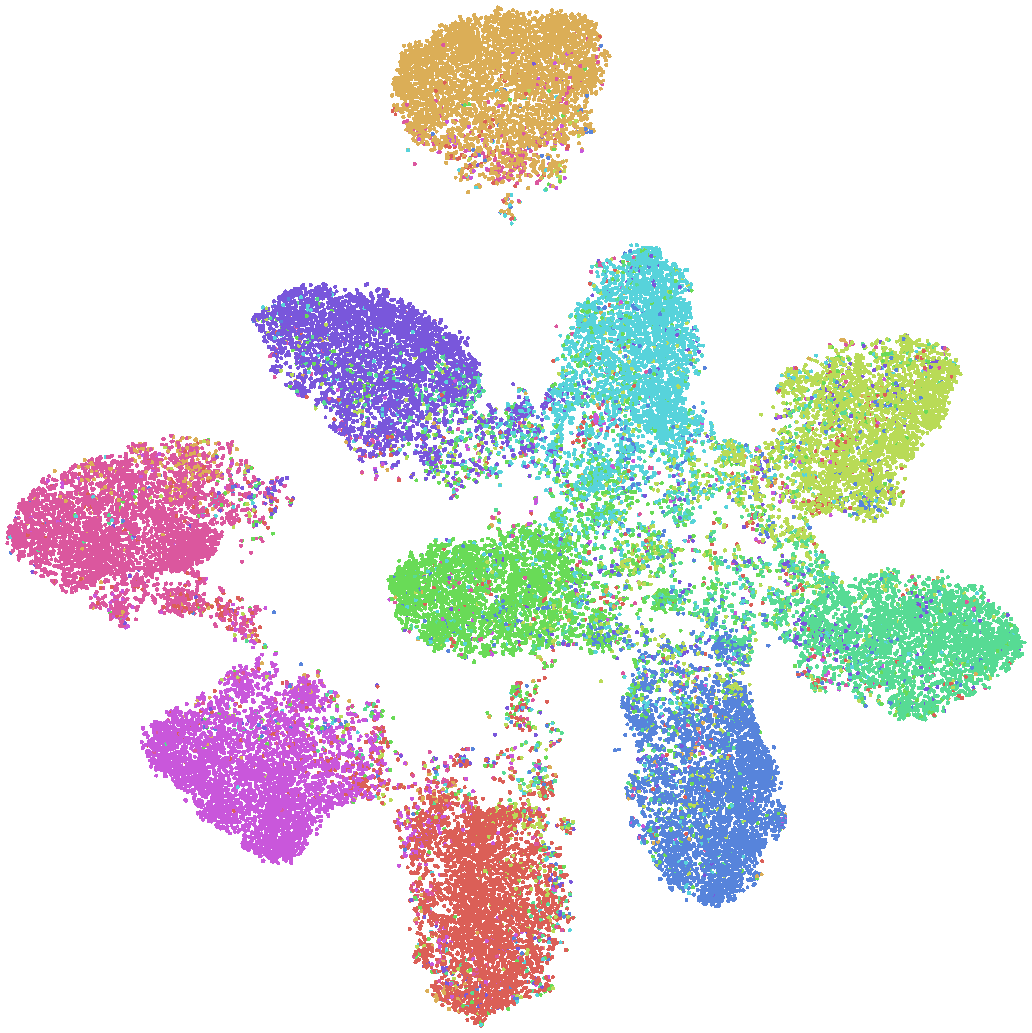}}\hfill
    \subcaptionbox{DIW3-L}{\includegraphics[width=0.19\textwidth]{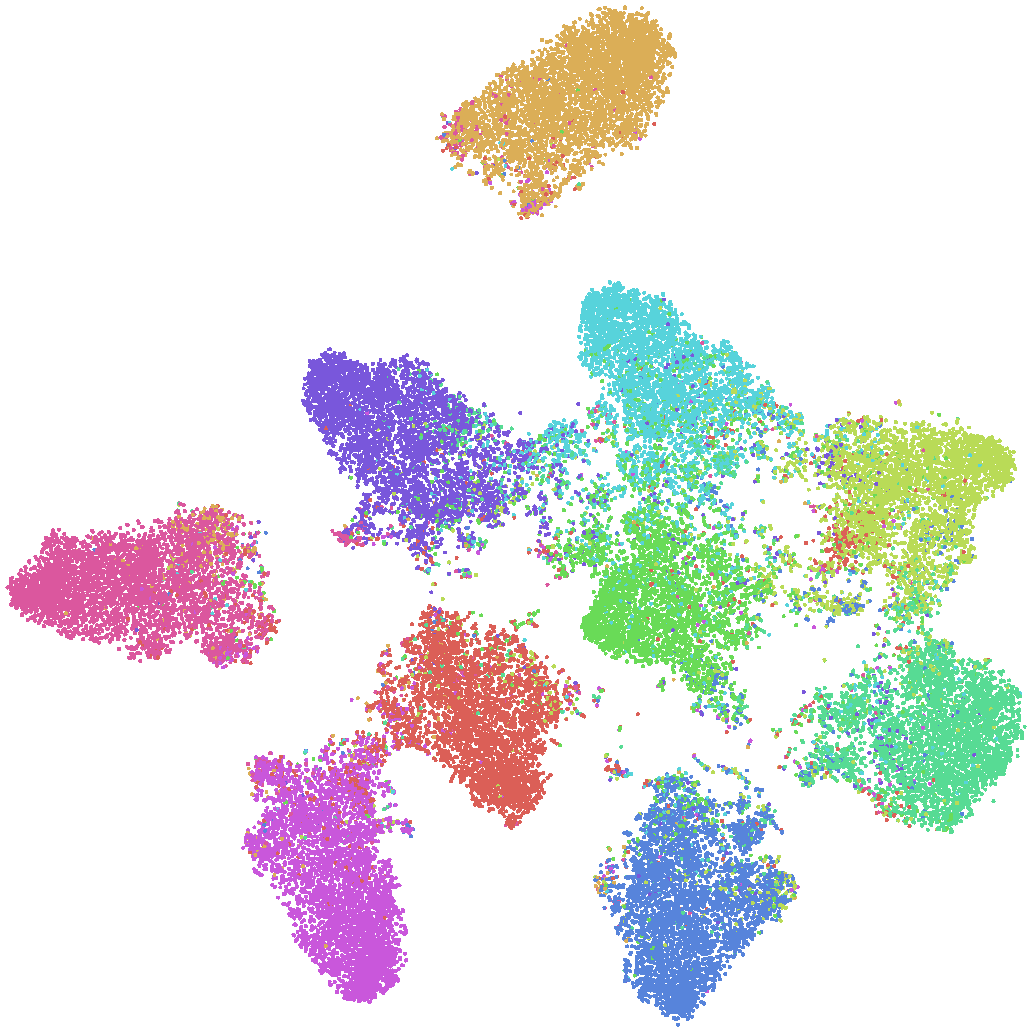}}
    \caption{Visualizations of embedded data on noisy CIFAR-10 (colors mean ground-truth labels).}
    \label{fig: tsne}
\end{figure}

\begin{figure}[!t]
    \centering
    \begin{minipage}[c]{0.02\textwidth}~\end{minipage}%
    \begin{minipage}[c]{0.326\textwidth}\centering\small 0.3 pair \end{minipage}%
    \begin{minipage}[c]{0.326\textwidth}\centering\small 0.4 symmetric \end{minipage}%
    \begin{minipage}[c]{0.326\textwidth}\centering\small 0.5 symmetric \end{minipage}\\
    \begin{minipage}[c]{0.02\textwidth}\small \rotatebox{90}{CIFAR-10} \end{minipage}%
    \begin{minipage}[c]{0.98\textwidth}
        \includegraphics[width=0.333\textwidth]{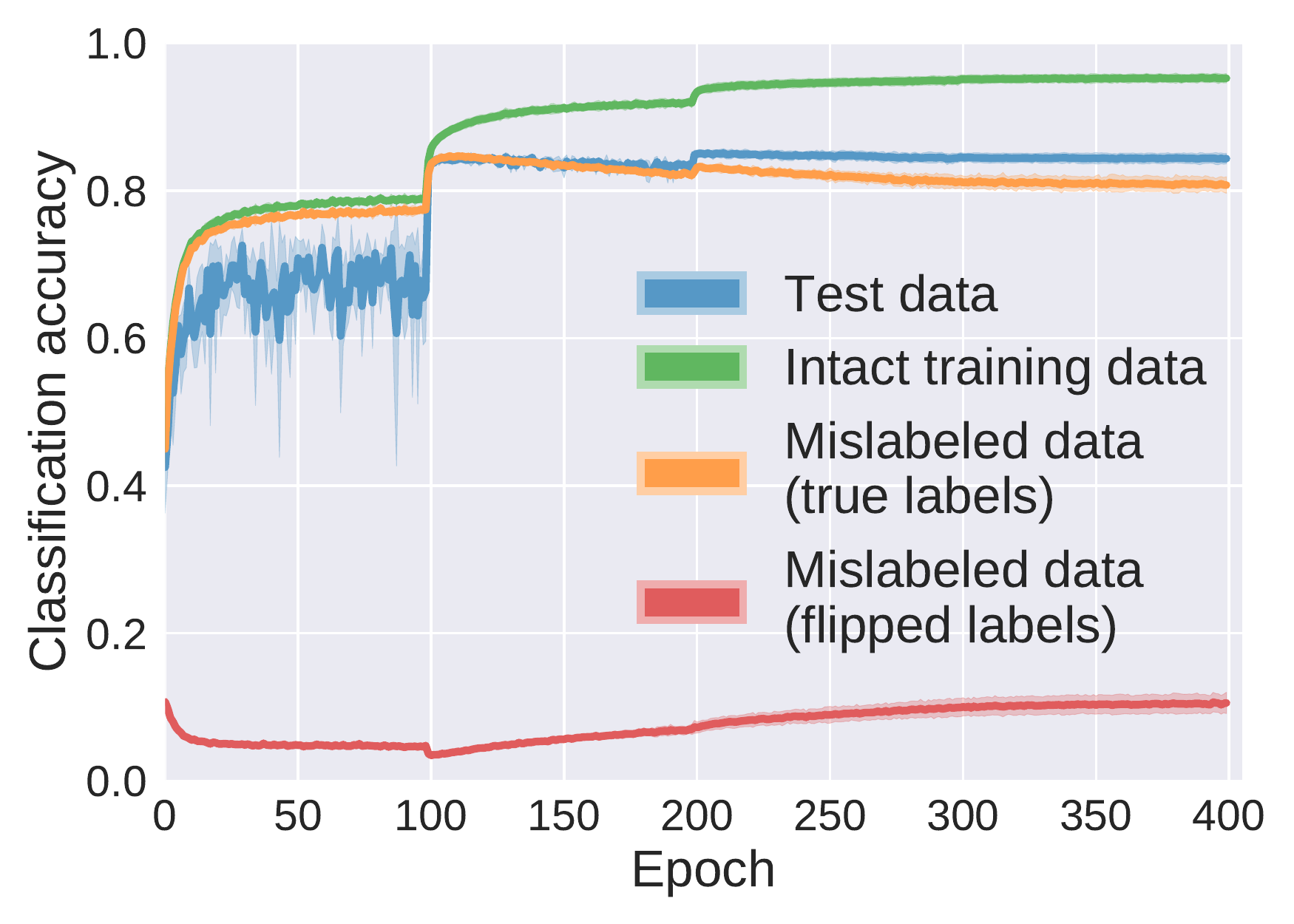}%
        \includegraphics[width=0.333\textwidth]{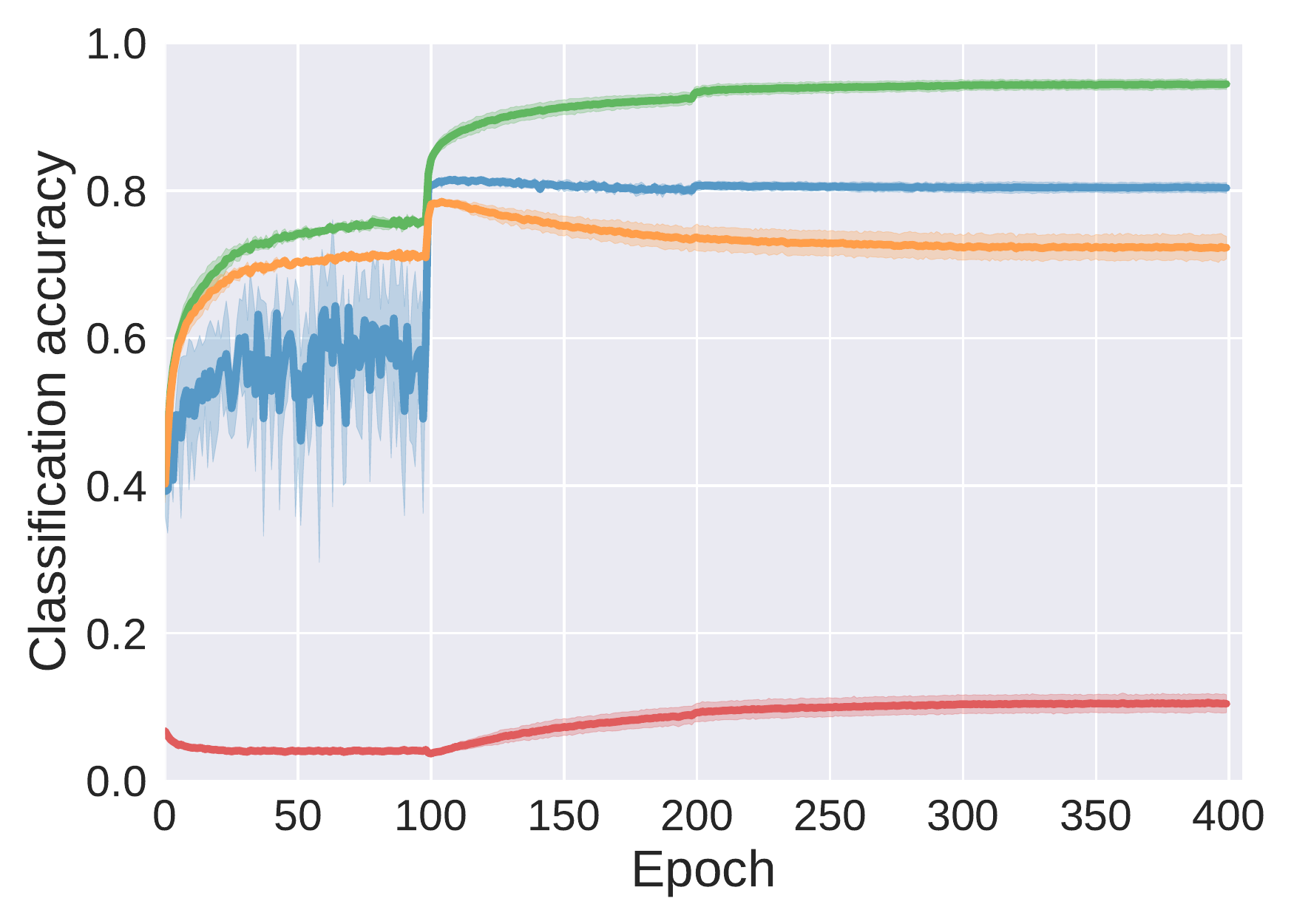}%
        \includegraphics[width=0.333\textwidth]{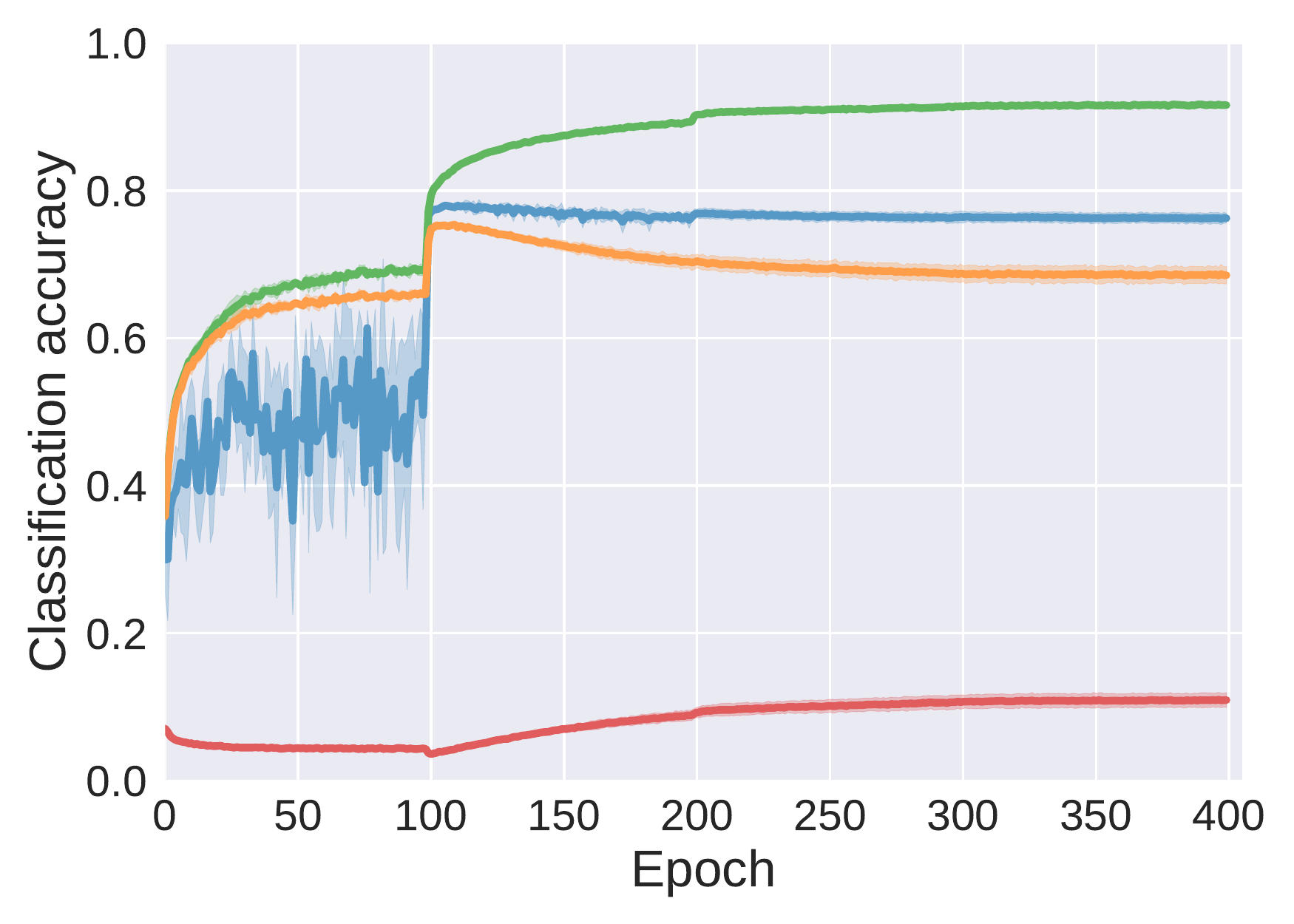}%
    \end{minipage}\\
    \begin{minipage}[c]{0.02\textwidth}\small \rotatebox{90}{CIFAR-100} \end{minipage}%
    \begin{minipage}[c]{0.98\textwidth}
        \includegraphics[width=0.333\textwidth]{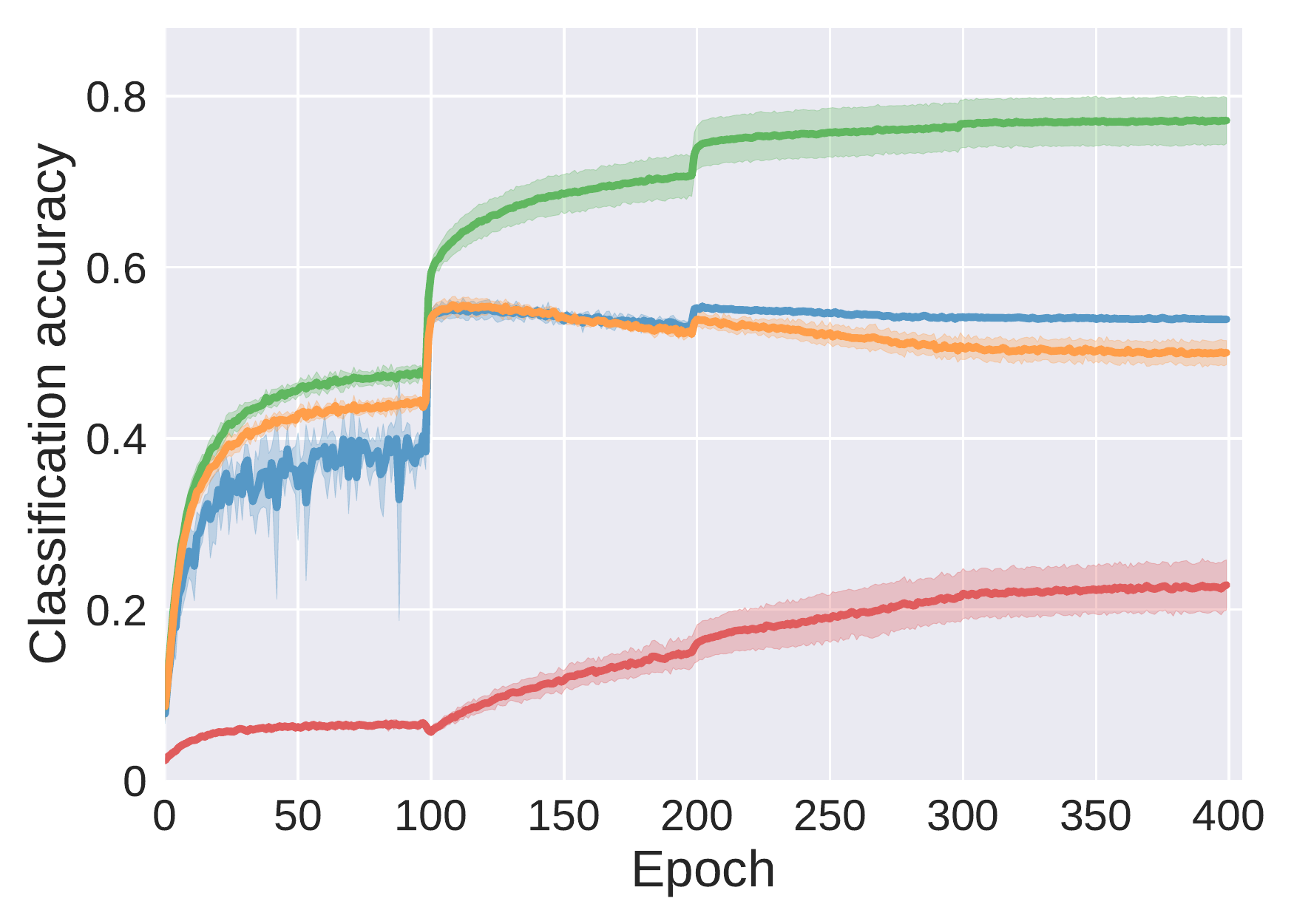}%
        \includegraphics[width=0.333\textwidth]{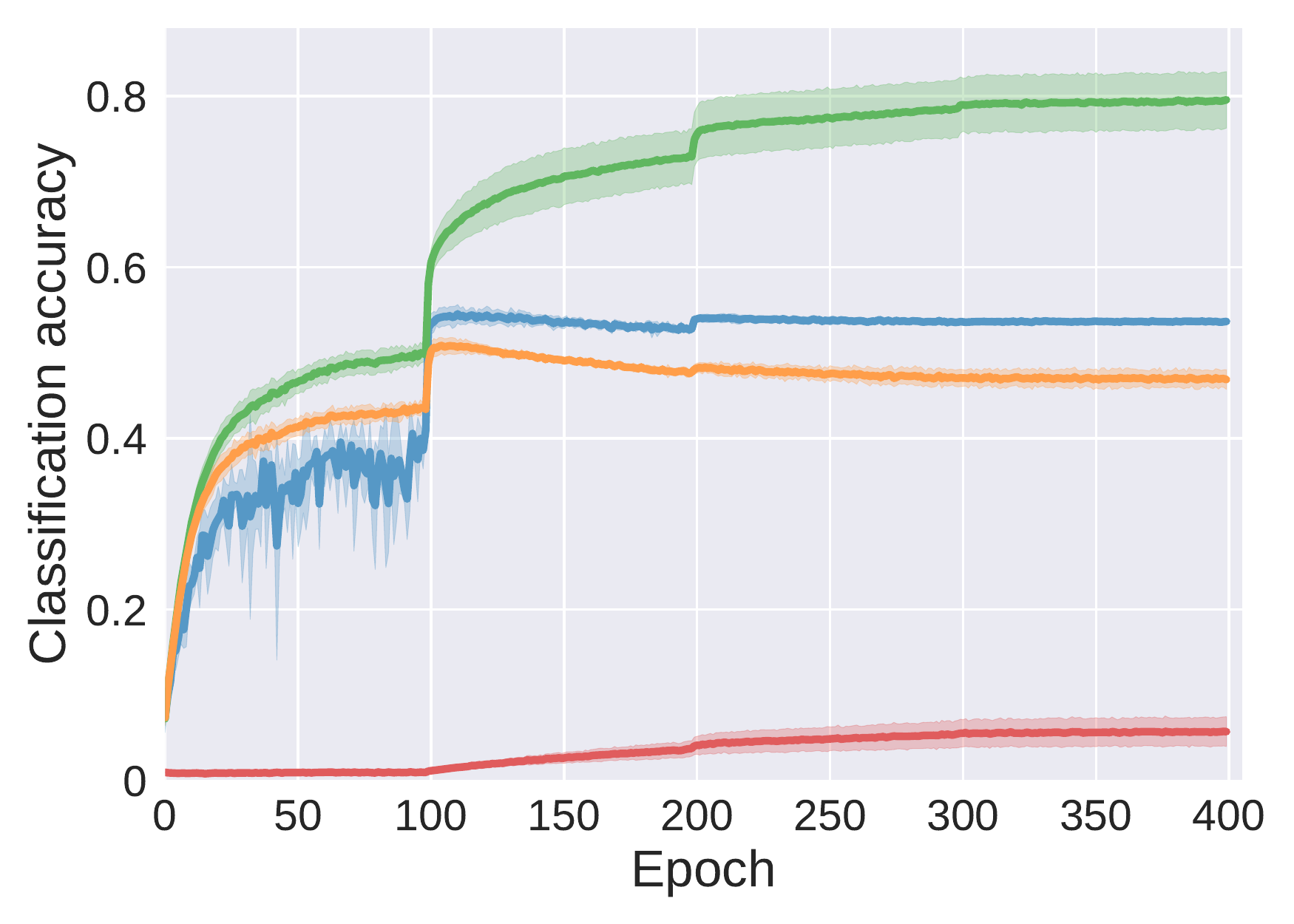}%
        \includegraphics[width=0.333\textwidth]{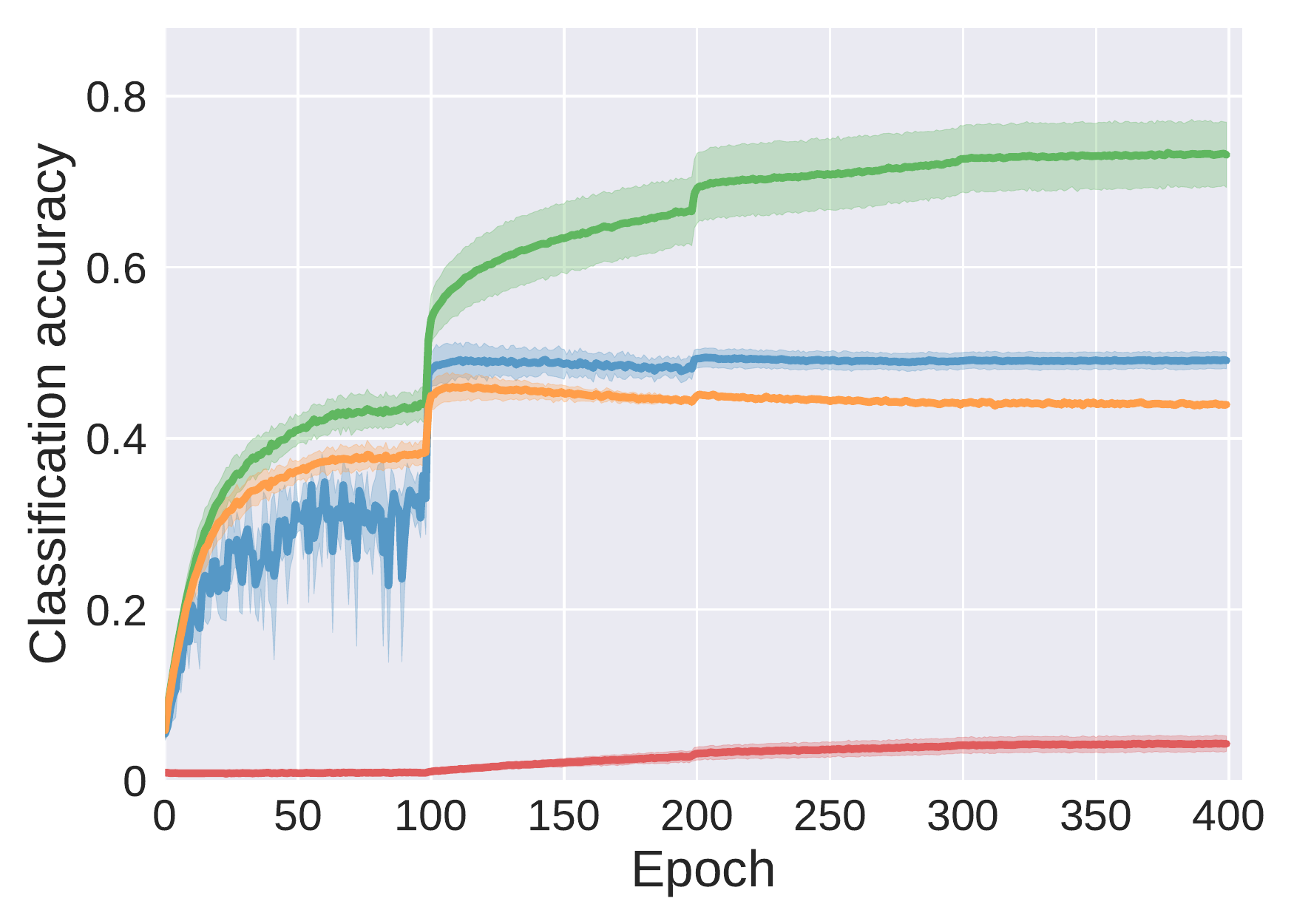}%
    \end{minipage}
    \caption{Denoising effect of DIW2-L on CIFAR-10/100 under label noise (5 trials).}
    \label{fig: diw_on_noisy}
\end{figure}

Furthermore, we trained models on CIFAR-10 under 0.4 symmetric flip, projected 64-dimensional last-layer representations of training data by \emph{t-distributed stochastic neighbor embedding}~(t-SNE) \cite{maaten2008visualizing}, and visualized the embedded data in Figure~\ref{fig: tsne}.
We can see that DIWs have more concentrated clusters of the embedded data, which implies the superiority of DIWs over IW and SIWs.

Finally, we analyzed the denoising effect of DIW2-L on CIFAR-10/100 in Figure~\ref{fig: diw_on_noisy}, by the curves of the training accuracy on the intact data, mislabeled data (evaluated by the flipped and ground-truth labels), and the test accuracy.
According to Figure~\ref{fig: diw_on_noisy}, DIW2-L can simultaneously fit the intact data and denoise the mislabeled data, so that for the mislabeled data, the flipped labels given for training correspond to much lower accuracy than the ground-truth labels withheld for training.
\section{Causal Mechanism Transfer}
\label{sec:causal}


\newtheorem{assumption}{Assumption}

\newcommand{\headline}[1]{\subsubsection{#1}}
\newcommand{\RSF}{{\bf{F}}}
\newcommand{\bj}{{\bf j}}
\newcommand{\bbN}{\mathbb{N}}
\newcommand{\cbarhR}{\check{R}}
\newcommand{\hRSF}{\hat{\RSF}}
\newcommand{\Var}{\mathbb{V}}
\newcommand{\Qsp}{\mathcal{Q}}
\newcommand{\NLICA}{\mathrm{ICA}}
\newcommand{\Datakk}[1]{\cD_{#1}}
\newcommand{\Datak}{\Datakk{k}}
\newcommand{\DataTar}{\cD_\te}
\newcommand{\augSi}{\bar{\bf{s}}_{\bj}}
\newcommand{\augZi}{\bar \bz_{\bj}}
\newcommand{\Unif}{\mathrm{Unif}}
\providecommand{\annot}[2]{\underbrace{#1}_{\text{#2}}}
\newcommand{\Rademacher}{\mathfrak{R}}
\newcommand{\lossBound}{B_\ell}
\newcommand{\qBound}{B_q}
\newcommand{\bztrikk}[1]{\bz^{\tr}_{#1,i}}
\newcommand{\bztrik}{\bztrikk{k}}
\newcommand{\bztej}{\bz^\te_j}
\newcommand{\runtrikk}[1]{_{i=1}^{n_{\tr,#1}}}
\newcommand{\runtrik}{\runtrikk{k}}
\newcommand{\runtej}{_{j=1}^{n_\te}}
\newcommand{\Stej}{S^\te_{j}}
\newcommand{\Strik}{S^\tr_{k,i}}
\newcommand{\logisticLoss}{\ell_\mathrm{log}}
\newcommand{\vPerm}{\varsigma}

In the \emph{full-distribution shift} scenario, what can we use as a reasonable basis for transfer learning? In other words, what is a plausible \emph{transfer assumption} that allows intricate shifts in the joint distributions between the source domains and the target domain?
In this section, we explore the possibility of a causality-based foundation for transfer learning.
Our motivation is simple: if we have different domains with the same causal mechanism that generates the data, it is plausible to think that one can exploit such a commonality of domains for transfer learning.
However, the natural question is: how can we exploit such a commonality?
We introduce the result of investigating this question, a novel method and foundation of transfer learning called \emph{causal mechanism transfer} \cite{TeshimaFewshot2020}.

\subsection{Motivation and Problem Setup}

The problem setup we consider is that of the \emph{independent component shift} scenario, a setting that emerges when the target and source distributions have a common causal mechanism behind them.

Such a common mechanism may be more conceivable in applications involving structured table data such as medical records \cite{YadavMining2018}. For example, in medical record analysis for disease risk prediction, it can be reasonable to assume that there is a pathological mechanism that is common across regions or generations, but the data distributions may vary due to the difference in cultures or lifestyles. Such a hidden structure (pathological mechanism, in this case), once estimated, may provide portable knowledge to enable transfer learning, allowing one to obtain accurate predictors in under-investigated regions or for new generations.

\subsubsection{Base Problem Setup}
In this section, we consider a \emph{multi-source few-shot supervised regression} transfer learning problem.
Therefore, the output space is \(\cY\coloneqq\bR\) (see Section~\ref{sec:formulate:osl}). For notational brevity, we denote $\bx$ and $y$ jointly as \(\bz = (\bx, y) = (\bz^1, \ldots, \bz^D)\) as well as \(\cZ = \bR^D\), where \(D = d_\mathrm{in} + 1\).

Then, with some abuse of notation, we treat the loss function \(\ell\) as a function over \(\cF \times \cZ\) and write the true risk and empirical risk in \eqref{eq: risk} and \eqref{eq:empirical-risk} as
\begin{align*}
R(f) &= \bE_{(\bz^\te)\sim p_\te(\bz)}\ell(f, \bz^\te), \\
\widehat{R}(f) &= \frac{1}{n_\tr}\sum_{i=1}^{n_\tr}\ell(f, \bz_i^\tr).
\end{align*}

Instead of a single source domain \(p_\tr(\bz)\), we assume that there are \(K \geq 2\) source domains, and we have access to independent samples from these domains as \(\{\Datak\}_{k \in [K]}\), where each \(\Datak = \{\bztrik\}\runtrik\) is a set of i.i.d.~samples from \(p_{\tr, k}(\bz)\).

We assume that we can access small data from the target domain, \(\DataTar = \{\bztej\}\runtej\), which is a set of i.i.d.~samples from \(p_\te(\bz)\).
We assume \(n_\te, n_{\tr, k} \geq D\) for simplicity.

The goal is to learn a predictor $f\colon\cX\rightarrow\cY$ that minimizes the risk \(R(f)\).
Under this base problem setup, the question is what is the relation between the source domains and the target domain.

\begin{figure}[!t]
\begin{minipage}[c]{1.0\linewidth}
\begin{minipage}[c]{0.3\linewidth}
\includegraphics[keepaspectratio, width=0.975\textwidth]{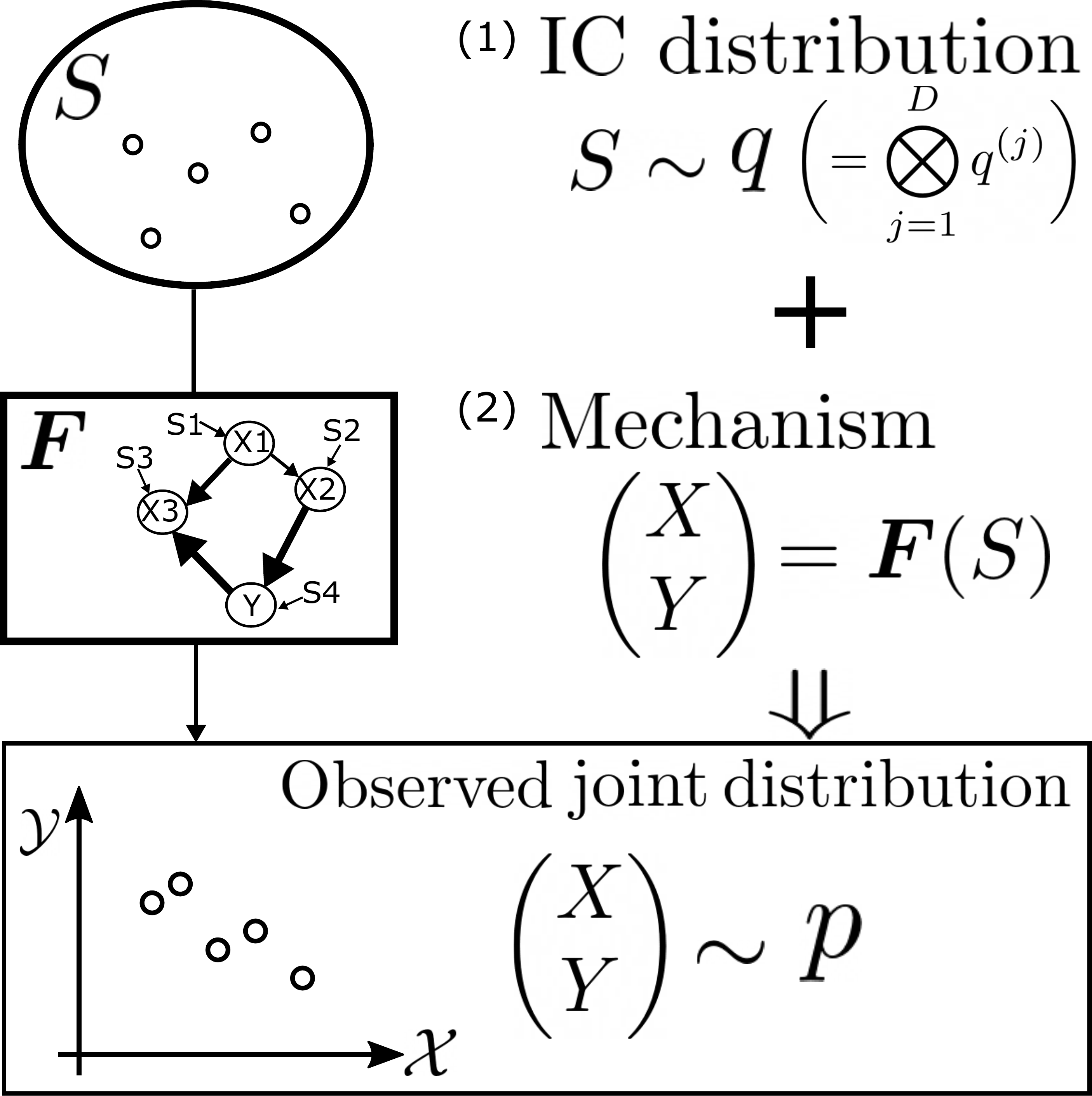}
\end{minipage}\hspace*{\fill}
\begin{minipage}[c]{0.65\linewidth}
\caption{
Nonparametric generative model of nonlinear independent component analysis.
Our meta-distributional transfer assumption is built on the model,
where there exists an invertible function \(\RSF\) representing the mechanism
to generate labeled data \((X, Y)\) from the independent components (ICs), \(S\), sampled from \(q\).
As a result, each pair \((\RSF, q)\) defines a joint density \(p\).
}
\label{fig:causal:schematic-illustration-1}
\end{minipage}\hspace*{\fill}
\end{minipage}
\end{figure}

\begin{figure}[!t]
\begin{minipage}[c]{1.0\linewidth}
\begin{minipage}[c]{1.0\linewidth}
\fbox{\includegraphics[keepaspectratio, width=0.975\textwidth]{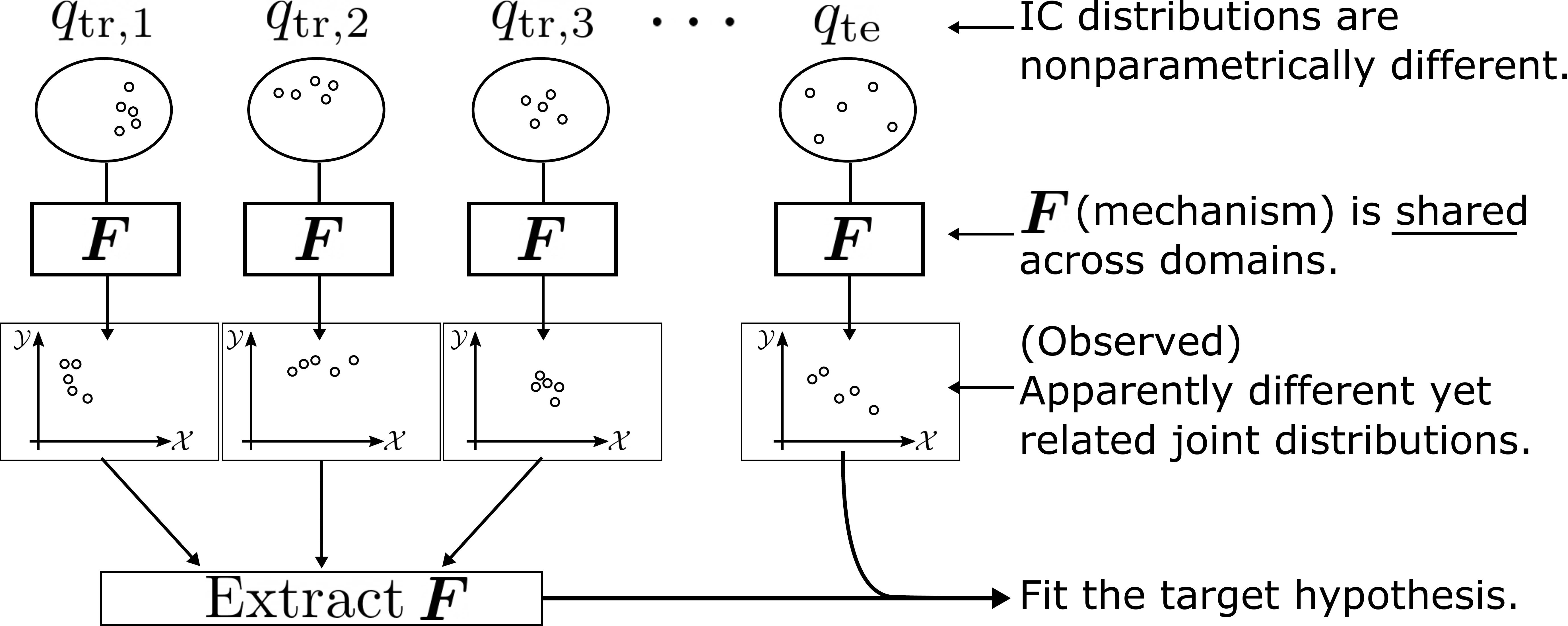}}
\vspace*{0.5\topsep}
\end{minipage}
\begin{minipage}[c]{1.0\linewidth}
\caption{
Our assumption of common generative mechanism.
By capturing the common data generation mechanism, we enable domain adaptation among seemingly very different distributions without relying on parametric assumptions.
}
\label{fig:causal:schematic-illustration-2}
\end{minipage}
\end{minipage}
\end{figure}

\subsubsection{Key Assumption}

Our key transfer assumption is that all domains follow nonlinear \emph{independent component analysis} (ICA; e.g., \cite{HyvarinenIndependent2001}) models with identical mixing functions (Figure~\ref{fig:causal:schematic-illustration-2}).
Formally, the assumption is stated as follows (illustrated in Figures~\ref{fig:causal:schematic-illustration-1} and \ref{fig:causal:schematic-illustration-2}):

\begin{assumption}[Main assumption \cite{TeshimaFewshot2020}]
Let \(\Qsp\) be the set of the density (with respect to the Lebesgue measure) of independent distributions over \(\cZ\).
We assume that there exists a set of independent component (IC) densities \(q_\te, q_{\tr, k} \in \Qsp (k \in [K])\), and a smooth invertible function \(\RSF: \bR^D \to \bR^D\) (the \emph{transformation} or \emph{mixing map}) such that \(\bztrik \sim p_{\tr, k}\) is generated by first sampling \(\Strik \sim q_{\tr, k}\) and later transforming it by
\begin{equation}\label{paper:eq:nonlinear-mixing}
\bztrik = \RSF(\Strik),
\end{equation}
and similarly \(\bztej = \RSF(\Stej), \Stej \sim q_\te\) for \(p_\te\).
\end{assumption}

A salient example of generative models which results in \eqref{paper:eq:nonlinear-mixing} is \emph{structural equation models} (SEMs; \cite{PearlCausality2009,PetersElements2017}).
More precisely, the generative model of \eqref{paper:eq:nonlinear-mixing} corresponds to the \emph{reduced form} \cite{ReissStructural2007} of a \emph{Markovian} SEM \cite{PearlCausality2009}, i.e., a form where the structural equations to determine \(Z\) from \((Z, S)\) are solved so that \(Z\) is expressed as a function of \(S\).
Such a conversion is always possible because a Markovian SEM induces an \emph{acyclic} causal graph \cite{PearlCausality2009}, and hence the structural equations can be solved by elimination of variables.
This interpretation of reduced-form SEMs as \eqref{paper:eq:nonlinear-mixing} has been exploited in methods of \emph{causal discovery}, e.g., in the linear non-Gaussian additive-noise models and their successors \cite{KanoCausal2003,ShimizuLinear2006,MontiCausal2019}.
In the case of SEMs, the key assumption of our method translates into the invariance of the causal mechanisms (expressed by the structural equations) across domains, which enables an intuitive assessment of the assumption based on prior knowledge. For instance, if all domains have the same causal mechanism and are in the same intervention state (including an intervention-free case), the modeling choice is deemed plausible.
Note that we do not estimate the original structural equations in the proposed method, but we only require estimating the reduced form which is an easier problem compared to causal discovery, e.g., \cite{GlymourReview2019,ShimizuLinear2006,MontiCausal2019}.

Under the assumption of invariant \(\RSF\), since complex changes in \(q\) are allowed, intricate shifts in the apparent distribution \(p\) can be accommodated.

\subsection{Causal Mechanism Transfer}

Here, we detail the proposed method, causal mechanism transfer (Algorithm~\ref{alg:cmt}; \cite{TeshimaFewshot2020}).
The method proceeds in three steps: \emph{estimation}, \emph{inflation}, and \emph{synthesis}, which are visually summarized in Figure~\ref{fig:causal:schematic-illustration-algorithm}.

\begin{figure}[!t]
\begin{minipage}[t]{1.0\linewidth}

\begin{minipage}[c]{1.0\linewidth}
\includegraphics[keepaspectratio,width=\textwidth,height=\textheight]{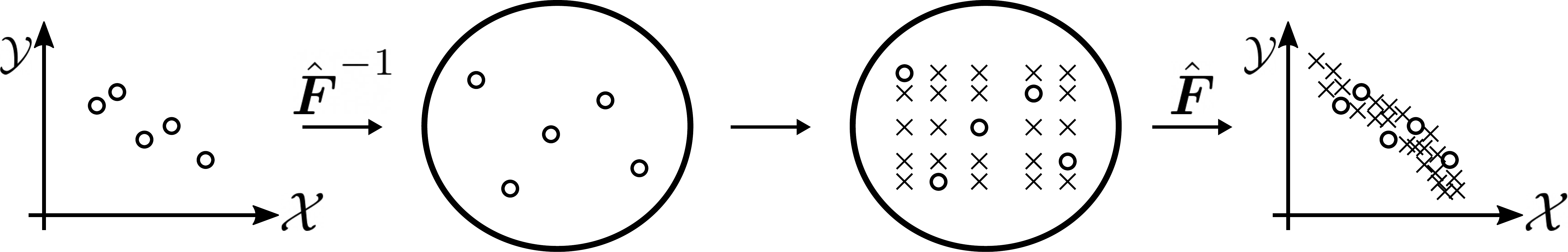}
\end{minipage}

\begin{minipage}[c]{1.0\linewidth}
\begin{minipage}[c]{0.2\linewidth}
\centering{}{\subcaption{Labeled target data}\label{fig:schema-1}}
\end{minipage}\hfill
\begin{minipage}[c]{0.2\linewidth}
\centering{}{\subcaption{Find IC}\label{fig:schema-2}}
\end{minipage}\hfill
\begin{minipage}[c]{0.2\linewidth}
\centering{}{\subcaption{Shuffle}\label{fig:schema-3}}
\end{minipage}\hfill
\begin{minipage}[c]{0.2\linewidth}
\centering{}{\subcaption{Pseudo target data}\label{fig:schema-4}}
\end{minipage}\hfill
\end{minipage}

\begin{minipage}[c]{1.0\linewidth}
\caption{
  Schematic illustration of proposed few-shot domain adaptation method after estimating the common mechanism \(\RSF\).
  With the estimated \(\hRSF\), the method augments the small target domain sample in a few steps to enhance statistical efficiency:
  (\subref{fig:schema-1}) The algorithm is given labeled target domain data.
  (\subref{fig:schema-2}) From labeled target domain data, extract the ICs.
  (\subref{fig:schema-3}) By shuffling the values, synthesize likely values of IC.
  (\subref{fig:schema-4}) From the synthesized IC, generate pseudo target data.
  The generated data is used to fit a predictor for the target domain.
  }
\label{fig:causal:schematic-illustration-algorithm}
\end{minipage}

\end{minipage}
\end{figure}

\subsubsection{Step 1: Estimation}
First, we estimate the common generative mechanism \(\RSF\), which is the sole connection between the source domains and the target domain.
The estimation can be realized by performing nonlinear ICA using the source domain data, namely via \emph{generalized contrastive learning} (GCL; \cite{HyvarinenNonlinear2019}).
GCL uses auxiliary information for training a certain binary classification function, \(r_{\hRSF,\varphi}\), equipped with a parametrized feature extractor \(\hRSF: \bR^D \to \bR^D\) and a set of functions \(\varphi = \{\varphi_j\}_{j=1}^D\), where each \(\varphi_j\) is a function from \(\bR \times \mathcal{U}\) to \(\bR\), and \(\mathcal{U}\) is some measurable space of auxiliary labels.
The auxiliary information we use in our problem setup is the domain indices, and hence \(\mathcal{U} = [K]\).
The classification function to be trained in GCL is \(r_{\hRSF,\varphi}(\bz, u) := \sum_{j=1}^D \varphi_j((\hRSF^{-1}(\bz))_j, u)\) consisting of \((\hRSF, \varphi)\), and the classification task of GCL is to classify \((\bz_{k}, k)\) as positive and \((\bz_{k}, k') (k' \neq k)\) as negative when \(\bz_{k} \in \Datak\).
This yields the following domain-contrastive learning criterion to estimate \(\RSF\):
\begin{equation*}\begin{split}
\argmin_{\hRSF, \{\varphi_j\}_{j=1}^D} \sum_{k=1}^K \frac{1}{n_{\tr,k}} \sum_{i=1}^{n_{\tr,k}} \biggl(&\logisticLoss\left(r_{\hRSF,\varphi}(\bztrik, k)\right) + \mathbb{E}_{k' \neq k}\logisticLoss\left(-r_{\hRSF,\varphi}(\bztrik, k')\right) \biggr),
\end{split}\end{equation*}
where \(\mathbb{E}_{k'\neq k}\) denotes the expectation with respect to \(k' \sim \Unif([K] \setminus \{k\})\) (``\(\Unif\)'' denotes the uniform distribution),
and \(\logisticLoss\) is the logistic loss \(\logisticLoss(m) := \log(1 + \exp(- m))\).
The trained feature extractor \(\hRSF\) is used as an estimator of \(\RSF\).
In experiments, \(\hRSF\) is implemented by invertible neural networks \cite{KingmaGlow2018}, \(\varphi_j (j \in [D])\) by multi-layer perceptron \cite{hastie2009elements}, and \(\mathbb{E}_{k'\neq k}\) is replaced by a random sampling renewed for every mini-batch.
Note that the invertible neural networks we use have been proven to be \emph{universal approximators} for smooth invertible maps, which adds a layer of theoretical justification to the modeling choice \cite{TeshimaCouplingbased2020}.

\subsubsection{Step 2: Inflation}
Second, the method uses the estimated \(\hRSF\) to perform data augmentation of the target domain data based on the knowledge transferred from the source domains.
The second step extracts and inflates the target domain ICs using the estimated \(\hRSF\).
We first extract the ICs of the target domain data by applying the inverse of \(\hRSF\) as
\begin{equation*}\begin{split}
\hat s_j = \hRSF^{-1}(\bztej).
\end{split}\end{equation*}
After the extraction, we inflate the set of IC values by taking all dimension-wise combinations of the estimated IC:
\begin{equation*}\begin{split}
\augSi = (\hat s_{j_1}^{(1)}, \ldots, \hat s_{j_D}^{(D)}), \quad {\bj} = (j_1, \ldots, j_D) \in [n_\te]^D,
\end{split}\end{equation*}
to obtain new plausible IC values \(\augSi\).
The intuitive motivation of this procedure stems from the independence of the IC distributions.
In our implementation, we used invertible neural networks \cite{KingmaGlow2018} to model the function \(\hRSF\) to enable the computation of the inverse \(\hRSF^{-1}\).

\subsubsection{Step 3: Synthesis}
The third step estimates the target risk \(R\) by the empirical distribution of the augmented data:
\begin{equation}\label{paper:eq:augmented-erm}\begin{split}
\cbarhR(f) := \frac{1}{n_\te^D}\sum_{\bj \in [n_\te]^D} \left[\ell(f, \hRSF(\augSi))\right],
\end{split}\end{equation}
and performs empirical risk minimization.
In experiments, we used a regularization term \(\Omega(\cdot)\) to control the complexity of \(\cF\) and selected
\begin{equation*}\begin{split}
\hat{f} \in \mathop{\rm arg~min}\limits_{f \in \cF} \left\{\cbarhR(f) + \Omega(f)\right\}.
\end{split}\end{equation*}
The generated hypothesis \(\hat{f}\) is then used to make predictions in the target domain.
In our experiments, we used \(\Omega(f) = \lambda \|f\|^2\), where \(\lambda > 0\) and the norm is that of the reproducing kernel Hilbert space (RKHS) which we took the subset \(\cF\) from.
Note that we may well subsample only a subset of combinations in \eqref{paper:eq:augmented-erm} to mitigate the computation cost similarly to \cite{ClemenconScalingup2016} and \cite{PapaSGD2015}.

\begin{algorithm}[t]
\caption{Causal mechanism transfer.}
\label{alg:cmt}
\begin{algorithmic}
\REQUIRE Source domain data sets \(\{\Datak\}_{k \in [K]}\), target domain data set \(\DataTar\),
nonlinear ICA algorithm \(\NLICA\), and a learning algorithm \(\mathcal{A}_{\cF}\) to fit the hypothesis class \(\cF\) of predictors.
\STATE // Step 1. Estimate the shared transformation.
\STATE \quad \(\hRSF \gets \NLICA(\Datakk{1}, \ldots, \Datakk{K})\)
\STATE // Step 2. Extract and shuffle target independent components
\STATE \quad \(\hat s_j \gets \hRSF^{-1}(\bztej), \quad (j = 1, \ldots, n_\te)\)
\STATE \quad \(\{\augSi\}_{\bj\in[n_\te]^D} \gets \mathrm{AllCombinations}(\{\hat s_j\}_{j=1}^{n_\te})\)
\STATE // Step 3. Synthesize target data and fit the predictor.
\STATE \quad \(\augZi \gets \hRSF(\augSi)\)
\STATE \quad \(\hat{f} \gets \mathcal{A}_{\cF}(\{\augZi\}_{\bj})\)
\ENSURE \(\hat{f}\): the predictor in the target domain.
\end{algorithmic}
\end{algorithm}

\subsection{Theoretical Insights}

Next, we introduce the theoretical analysis of the proposed method.
Since the proofs are involved and requires space, we omit the details here. We refer the interested readers to \cite{TeshimaFewshot2020} and its supplementary material.

\subsubsection{Complete-estimation Case: Minimum Variance Property}

First, we consider the case that \(\RSF\) has been estimated perfectly. While this is an idealistic case, the analysis provides us with the intuition that the proposed method helps the learner in terms of the \emph{variance} of the risk estimator.

\begin{theorem}[Minimum variance property of \(\cbarhR\)]\label{thm:causal:1}
Assume that \(\hRSF = \RSF\).
Then, for each \(f \in \cF\), the proposed risk estimator \(\cbarhR(f)\) is the uniformly minimum variance unbiased estimator of \(R(f)\), i.e., for any unbiased estimator \(\tilde R(f)\) of \(R(f)\),
\begin{equation*}\begin{split}
\forall q \in \Qsp, \quad \Var(\cbarhR(f)) \leq \Var(\tilde R(f)),
\end{split}\end{equation*}
where \(\Var\) denotes the variance, and also \(\bE_\te \cbarhR(f) = R(f)\) holds where \(\bE_\te\) denotes the expectation with respect to \(p_\te\).
\end{theorem}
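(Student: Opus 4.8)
The plan is to establish unbiasedness directly and then obtain the minimum-variance claim via the Lehmann--Scheffé theorem, by exhibiting $\cbarhR(f)$ as an unbiased estimator that is a function of a complete sufficient statistic. Throughout, write $\phi(\bs) \coloneqq \ell(f, \RSF(\bs))$, so that invertibility of $\RSF$ together with the change-of-variables formula gives $R(f) = \bE_{\bs \sim q_\te}[\phi(\bs)]$. Because $\hRSF = \RSF$, the extracted values $\hat s_j = \RSF^{-1}(\bztej)$ are exactly i.i.d.\ draws from $q_\te \in \Qsp$, i.e., from a density with independent coordinates. The crucial observation for unbiasedness is that, since the coordinates are independent, \emph{any} dimension-wise recombination $\augSi = (\hat s_{j_1}^{(1)}, \dots, \hat s_{j_D}^{(D)})$ is again marginally distributed as $q_\te$, regardless of whether the indices $j_1, \dots, j_D$ coincide. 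Hence $\bE_\te[\phi(\augSi)] = R(f)$ for every $\bj \in [n_\te]^D$, and averaging yields $\bE_\te \cbarhR(f) = R(f)$.

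Next I would identify the complete sufficient statistic. Collect the samples into an $n_\te \times D$ array with entries $\hat s_j^{(d)}$; under the product model these entries are mutually independent, the $d$-th column consisting of $n_\te$ i.i.d.\ draws from the marginal $q_\te^{(d)}$ and distinct columns being independent. Let $T = (T^{(1)}, \dots, T^{(D)})$, where $T^{(d)}$ is the (unordered) order statistic of column $d$. The likelihood factorizes as $\prod_{d}\prod_{j} q_\te^{(d)}(\hat s_j^{(d)})$, which depends on the data only through $T$, so $T$ is sufficient by the factorization theorem. For completeness I would combine two ingredients: the classical fact that the order statistic of an i.i.d.\ univariate sample is complete for the family of all absolutely continuous distributions, and a tensorization lemma stating that the joint statistic of independent sub-experiments is complete for the product of the parameter families. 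Since the columns are independent and the marginals $q_\te^{(1)}, \dots, q_\te^{(D)}$ range freely over univariate densities, this shows $T$ is complete for $\Qsp$.

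It then remains to verify that $\cbarhR(f)$ is a function of $T$, which is immediate because the recombination sum is invariant under permuting indices within each coordinate. To make the Rao--Blackwell structure explicit, consider the naive unbiased estimator $\phi(\hat s_1)$, whose mean is $R(f)$. Conditioning on $T$ and using column independence, the conditional law of $\hat s_1$ given $T$ assigns, in each coordinate $d$ independently, a uniform draw from the $n_\te$ values of column $d$; therefore $\bE_\te[\phi(\hat s_1) \mid T] = \frac{1}{n_\te^D}\sum_{\bj} \phi(\augSi) = \cbarhR(f)$. By Lehmann--Scheffé, the unbiased, $T$-measurable estimator $\cbarhR(f)$ is the unique uniformly minimum variance unbiased estimator, so $\Var(\cbarhR(f)) \le \Var(\tilde R(f))$ for every unbiased $\tilde R(f)$ and every $q \in \Qsp$, which is the assertion.

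The main obstacle I anticipate is the completeness argument, not the unbiasedness or the conditional-expectation computation. Single-coordinate completeness of order statistics in the fully nonparametric regime is classical but already nontrivial, and upgrading it to the multivariate product family requires the tensorization lemma together with a careful Fubini/measurability argument: from $\bE_q[g(T)] = 0$ for all products $q$, one fixes all but one coordinate's marginal, applies one-dimensional completeness to eliminate that coordinate's dependence almost everywhere, and iterates across the $D$ coordinates. Ensuring that the exceptional null sets can be assembled consistently across coordinates is the delicate technical point; the remaining steps are essentially bookkeeping.
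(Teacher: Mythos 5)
Your proof is correct and follows essentially the same route as the paper: viewing the $D$ independent IC coordinates as $D$ independent samples, the paper rewrites $R(f)$ as a $D$-variate \emph{regular statistical functional} and $\cbarhR(f)$ as its corresponding generalized (multi-sample) U-statistic, then cites the classical UMVUE theorem for such statistics---whose standard proof is precisely your argument (unbiasedness from coordinate independence, sufficiency and completeness of the per-column order statistics, and Rao--Blackwell/Lehmann--Scheff\'e). The completeness tensorization you flag is indeed the one delicate point; in the classical treatments it is handled by exponential-family (or finite-mixture) subfamilies that cover all columns simultaneously, which sidesteps the coordinate-by-coordinate null-set assembly in your sketched iteration.
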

The proof of Theorem~\ref{thm:causal:1} is obtained by rewriting \(R(f)\) as a \(D\)-variate \emph{regular statistical functional} and \(\cbarhR(f)\) as its corresponding generalized U-statistic \cite{LeeUStatistics1990}.
Theorem~\ref{thm:causal:1} implies that the proposed risk estimator can have superior statistical efficiency in terms of the variance over the ordinary empirical risk \eqref{eq:empirical-risk}, since \(\widehat{R}(f)\) is also an unbiased estimator of \(R(f)\).

\subsubsection{Incomplete-estimation Case: Excess Risk Bound}
\begin{figure}[!t]
\centering{}\includegraphics[keepaspectratio,width=0.6\textwidth]{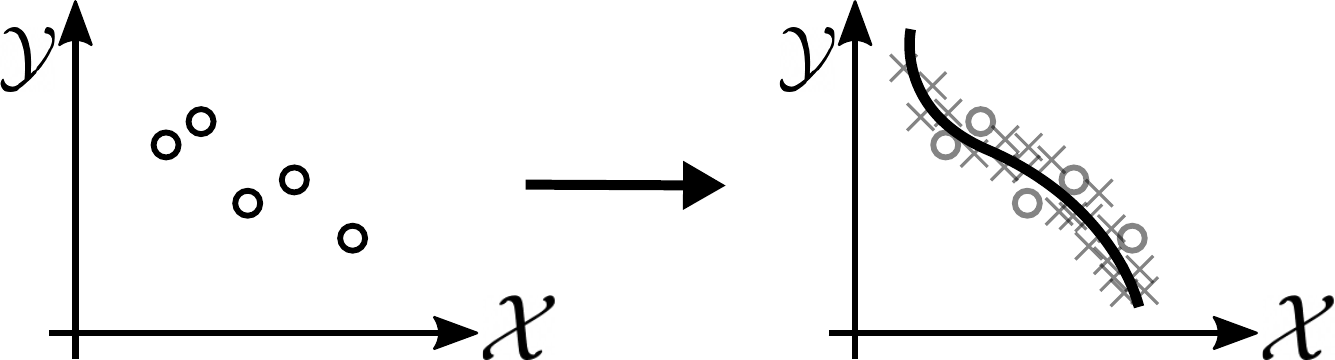}
\caption{\label{fig: sec6: inflated-data-fitting}Fitting the data inflated by the proposed method. If the inflated data appear at the appropriate locations, the increment of the data has the effect of apparent complexity reduction since one can fit a complex predictor with less fear of overfitting. On the other hand, if the estimation of \(\hRSF\) is poor, the fitting may be biased.}
\end{figure}

In real situations, one has to estimate \(\RSF\).
The following theorem characterizes the statistical gain and loss arising from the estimation error \(\RSF - \hRSF\).
The intuition is that the increased number of data points suppresses the possibility of overfitting because the hypothesis has to fit the majority of the inflated data (Figure~\ref{fig: sec6: inflated-data-fitting}) and not only the few original data, but the estimator \(\hRSF\) has to be accurate so that fitting the inflated data is meaningful.
Theorem~\ref{thm:causal:2} quantifies this consideration:
\begin{theorem}[Excess risk bound]\label{thm:causal:2}
Let \(\hat{f}\) be a minimizer of \eqref{paper:eq:augmented-erm}, and \(f^*\) be a minimizer of \(R\) (assuming both exist).
Under appropriate assumptions (see Theorem~3 in the supplementary material of \cite{TeshimaFewshot2020}),
for arbitrary \(\delta, \delta' \in (0, 1)\), we have with probability at least \(1 - (\delta + \delta')\),
\begin{equation*}\begin{split}
R(\hat{f}) - R(f^*) \leq &\annot{C\sum_{j=1}^D \|\RSF_j - \hRSF_j\|_{W^{1,1}}}{Approximation error} + \annot{4D\Rademacher(\cF) + 2D\lossBound\sqrt{\frac{\log 2/\delta}{2n_\te}}}{Estimation error} \\
&\quad+ \annot{\zeta_1(\delta', n_\te) + D\lossBound\qBound\zeta_2(\RSF - \hRSF)}{Higher order terms}.
\end{split}\end{equation*}
Here, \(\|\cdot\|_{W^{1,1}}\) is the \((1, 1)\)-Sobolev norm, and we define the effective Rademacher complexity \(\Rademacher(\cF)\) by
\begin{equation*}
\begin{split}
\Rademacher(\cF) := \frac{1}{n_\te} \bE_{\hat s}\bE_{\sigma}\left[\sup_{f \in \cF} \left|\sum_{j=1}^{n_\te} \sigma_j \bE_{S_2', \ldots, S_D'}[\tilde{\ell}(\hat s_j, S_2', \ldots, S_D')]\right|\right],
\end{split}\end{equation*}
where \(\{\sigma_j\}_{j=1}^{n_\te}\) are independent sign variables,
\(\bE_{\hat s}\) is the expectation with respect to \(\{\hat s_j\}_{j=1}^{n_\te}\),
the dummy variables \(S_2', \ldots, S_D'\) are i.i.d.\@ copies of \(\hat s_1\),
and \(\tilde{\ell}\) is defined by using the degree-\(D\) symmetric group \(\mathfrak{S}_D\) as
\begin{equation*}\begin{split}
\tilde{\ell}(s_1, \ldots, s_D) := \frac{1}{D!} \sum_{\vPerm \in \mathfrak{S}_D} \ell(f, \hRSF(s_{\vPerm(1)}^{(1)}, \ldots, s_{\vPerm(D)}^{(D)})),
\end{split}\end{equation*}
and \(\zeta_1(\delta', n)\) and \(\zeta_2(\RSF - \hRSF)\) are higher order terms.
The constants \(\qBound\) and \(\lossBound\) depend only on \(q\) and \(\ell\), respectively, while \(C\) depends only on \(\RSF, q, \ell\), and \(D\).
\end{theorem}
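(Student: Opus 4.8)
The plan is to follow the standard excess-risk decomposition, but with an intermediate \emph{oracle} estimator that uses the true mixing map $\RSF$ in place of its estimate $\hRSF$. Since $\hat{f}$ minimizes $\cbarhR$, the middle term of the usual three-term split is nonpositive, so that $R(\hat{f}) - R(f^*) \le \sup_{f\in\cF}\bigl(R(f) - \cbarhR(f)\bigr) + \bigl(\cbarhR(f^*) - R(f^*)\bigr)$, and it suffices to control $\sup_{f\in\cF}|R(f) - \cbarhR(f)|$. First I would introduce the oracle estimator $\cbarhR_{\RSF}$, defined exactly as \eqref{paper:eq:augmented-erm} but with the true ICs $s_j = \RSF^{-1}(\bztej)$ resynthesized through the true $\RSF$, and split $R - \cbarhR = (R - \cbarhR_{\RSF}) + (\cbarhR_{\RSF} - \cbarhR)$. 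The first difference is the \emph{estimation error}: it uses the correct $\RSF$, so by Theorem~\ref{thm:causal:1} its expectation is exactly $R(f)$. The second is the \emph{approximation error} attributable to $\RSF - \hRSF$.

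To handle the estimation error $\sup_f|R(f) - \cbarhR_{\RSF}(f)|$, I would exploit that $\cbarhR_{\RSF}(f)$ is a V-statistic built from the i.i.d.\ target ICs, whose off-diagonal part is a genuine degree-$D$ symmetric U-statistic with the symmetrized kernel $\tilde{\ell}$ appearing in the theorem statement. Using Hoeffding's representation I would rewrite this U-statistic as an average of i.i.d.\ blocks, apply McDiarmid's bounded-differences inequality to produce the tail term $2D\lossBound\sqrt{\log(2/\delta)/(2n_\te)}$, and apply Rademacher symmetrization within each block to obtain the effective complexity $4D\Rademacher(\cF)$, whose definition is precisely the first Hájek projection of $\tilde{\ell}$ (fixing one argument and averaging over i.i.d.\ copies of the rest). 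The diagonal terms of the V-statistic, where some indices of $\bj$ coincide, are lower order and would be absorbed into $\zeta_1(\delta', n_\te)$.

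The approximation error $\sup_f|\cbarhR_{\RSF}(f) - \cbarhR(f)|$ compares data synthesized through the true and the estimated mechanisms, and I would analyze it at the population level via a change of variables. The expectation of $\cbarhR(f)$ equals $\int \ell(f, \bz)\,\hat{p}_\te(\bz)\,\dif\bz$, where $\hat{p}_\te$ is the push-forward through $\hRSF$ of the \emph{product} of the estimated marginal IC densities; this reduces to $R(f)$ exactly when $\hRSF = \RSF$, by the independence of the true ICs. The discrepancy $\hat{p}_\te - p_\te$ would then be controlled by a first-order perturbation of the change-of-variables formula in $\hRSF - \RSF$. Because the push-forward density carries the Jacobian determinant of the map, both the map values \emph{and} their first derivatives enter, which is exactly why the $(1,1)$-Sobolev norm is the natural quantity, giving the leading term $C\sum_{j=1}^D\|\RSF_j - \hRSF_j\|_{W^{1,1}}$; the quadratic and higher remainders, scaled by $\lossBound$ and $\qBound$, would be collected into $D\lossBound\qBound\,\zeta_2(\RSF - \hRSF)$. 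Finally I would combine the two events by a union bound over $\delta$ and $\delta'$.

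I expect the approximation-error step to be the main obstacle. Propagating the error $\RSF - \hRSF$ cleanly through both the inverse map $\hRSF^{-1}$ used to extract the ICs and the forward map $\hRSF$ used to resynthesize, while keeping the change-of-variables Jacobians under control and isolating a genuinely first-order term with a well-bounded remainder, is delicate; it is what forces the Sobolev norm and dictates the form of $\zeta_2$. By contrast, the U-statistic concentration is technically standard once the Hoeffding decomposition and the separation of diagonal terms are in place, as foreshadowed by the definition of $\Rademacher(\cF)$ through its first projection.
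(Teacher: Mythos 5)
First, a point of reference: this article does not actually prove Theorem~\ref{thm:causal:2}; it explicitly defers the proof to the supplementary material of \cite{TeshimaFewshot2020}. Measured against that proof, your toolbox is largely the right one: reduction to a uniform deviation bound via the minimality of $\hat{f}$, viewing \eqref{paper:eq:augmented-erm} as a V-statistic whose distinct-index part is a degree-$D$ U-statistic with symmetrized kernel $\tilde{\ell}$, Hoeffding's representation plus McDiarmid plus symmetrization for the concentration terms, relegating coincident-index terms to $\zeta_1$, a change-of-variables (push-forward) analysis for the Sobolev term, and a final union bound.

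The genuine gap is in your decomposition through the oracle estimator $\cbarhR_{\RSF}$ (true ICs resynthesized through the true $\RSF$). Two concrete problems arise. (i) Your approximation term $\sup_{f\in\cF}\bigl|\cbarhR_{\RSF}(f)-\cbarhR(f)\bigr|$ is an \emph{empirical} difference evaluated at the sample points, and the $W^{1,1}$ norm is an $L^1$-type (integrated) norm: smallness of $\sum_j\|\RSF_j-\hRSF_j\|_{W^{1,1}}$ says nothing about the pointwise discrepancy $\ell(f,\RSF(\cdot))-\ell(f,\hRSF(\cdot))$ at the realized ICs. Your proposed move ``to the population level'' only bounds the expectation of this difference, namely $R(f)-\bE[\cbarhR(f)]$; the zero-mean fluctuation of the empirical difference around that expectation remains, and controlling it uniformly over $\cF$ requires a second U-statistic concentration argument for the difference class, whose complexity is not higher order (at best it is bounded by the sum of the complexities for the $\RSF$-kernel and $\hRSF$-kernel classes). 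Executed honestly, this roughly doubles the estimation-error terms and does not yield the stated constants. (ii) Your estimation term $\sup_{f\in\cF}\bigl|R(f)-\cbarhR_{\RSF}(f)\bigr|$ concentrates a U-statistic whose kernel is built from the \emph{true} $\RSF$ and true ICs, so the complexity you would obtain is defined with respect to those; but the theorem's $\Rademacher(\cF)$ is explicitly defined through $\tilde{\ell}$, which involves $\hRSF$ and the estimated ICs $\hat{s}_j$. The clean route---and the one taken in \cite{TeshimaFewshot2020}---dispenses with the oracle entirely: write $R-\cbarhR$ as (bias) $+$ (deviation of $\cbarhR$ itself) $+$ (diagonal terms), i.e.\ compare the distinct-index U-statistic with \emph{its own} expectation (here $\hRSF$, being estimated from source data independent of the target sample, is fixed conditionally on the sources, so standard U-statistic concentration applies and produces exactly the stated $4D\Rademacher(\cF)+2D\lossBound\sqrt{\log(2/\delta)/(2n_\te)}$), and only then compare that expectation with $R(f)$, which is a genuinely population-level quantity where the change of variables and the $W^{1,1}$ norm legitimately enter.
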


Details of the statement and the proof can be found in \cite{TeshimaFewshot2020}.
Note that the theorem is agnostic to how \(\hRSF\) is obtained, hence it applies to more general problem setups as long as \(\RSF\) can be estimated.
In Theorem~\ref{thm:causal:2}, the utility of the proposed method appears in the effective complexity measure.
The complexity is defined by a set of functions which are marginalized over all but one argument, resulting in mitigated dependence on the input dimensionality from exponential to linear (see Remark~3 in the supplementary material of \cite{TeshimaFewshot2020} for details).

\subsection{Experiments}
Finally, we provide the results of proof-of-concept experiments to demonstrate the effectiveness of the proposed approach.

\headline{Datasets}
We used the gasoline consumption data \cite[p.284, Example~9.5]{GreeneEconometric2012}, which is a panel data of gasoline usage in 18 of the OECD countries over 19 years.
We considered each country as a domain, and we disregarded the time-series structure and considered the data as i.i.d.\@ samples for each country in this proof-of-concept experiment.
The dataset contains four variables, all of which were log-transformed: the motor gasoline consumption per car (the predicted variable), per-capita income, the motor gasoline price, and the stock of cars per capita (the predictor variables) \cite{BaltagiGasoline1983}.
The dataset has been used in econometric analyses involving SEMs \cite{BaltagiEconometric2005}, conforming to our approach.

\headline{Compared Methods}
We compared the following transfer learning methods, all of which can be applied to regression problems.
Unless explicitly specified, the predictor class \(\cF\) was chosen to be kernel ridge regression (KRR; e.g., \cite{scholkopf2001learning}) with the same hyperparameter candidates as the proposed method.

\begin{itemize}
\item Naive baselines (\emph{SrcOnly}, \emph{TarOnly}, and \emph{S\&TV}): \emph{SrcOnly} (resp. \emph{TarOnly}) trains a predictor on the source domain data (resp. target training data) without any device.
\emph{SrcOnly} can be effective if the source domains and the target domain have highly similar distributions.
The \emph{S\&TV} baseline trains on both source and target domain data, but the LOOCV score is computed only from the target domain data.
\item \emph{TrAdaBoost}: Two-stage TrAdaBoost.R2; a boosting method tailored for few-shot regression transfer proposed in \cite{PardoeBoosting2010}.
It is an iterative method with early-stopping \cite{PardoeBoosting2010}, for which we used the leave-one-out cross-validation score on the target domain data as the criterion.
As suggested in \cite{PardoeBoosting2010}, we set the maximum number of outer loop iterations at \(30\).
The base predictor is the decision tree regressor with the maximum depth \(6\) \cite{hastie2009elements}.
Note that although TrAdaBoost does not have a clarified transfer assumption,
we compared the performance for reference.
\item \emph{IW}: Importance-weighted KRR using RuLSIF \cite{yamada2011relative} (see also Section~\ref{riwerm}).
The method directly estimates a relative joint density ratio function \(\frac{p_\te(\bz)}{\alpha p_\te(\bz) + (1 - \alpha) p_\tr(\bz)}\) for \(\alpha \in [0, 1)\),
where \(p_\tr\) is a hypothetical source distribution created by pooling all source domain data.
Following \cite{yamada2011relative}, we experimented on \(\alpha \in \{0, 0.5, 0.95\}\) and reported the results of \(0.5\) which performed the best among the three.
The regularization coefficient \(\lambda'\) was selected from \(\lambda' \in 2^{\{-10, \ldots, 10\}}\) using importance-weighted cross-validation \cite{sugiyama2012machine}.
\item \emph{GDM}: Generalized discrepancy minimization \cite{CortesAdaptation2019}.
This method performs instance-weighted training on the source domain data with the weights that minimize the \emph{generalized discrepancy} (via quadratic programming).
We selected the hyperparameters \(\lambda_r\) from \(2^{\{-10, \ldots, 10\}}\) as suggested in \cite{CortesAdaptation2019}.
The selection criterion is the performance of the trained predictor on the target training data as the method trains on the source domain data and the target unlabeled data.
\item \emph{Copula}: The non-parametric regular-vine copula method \cite{Lopez-pazSemisupervised2012}.
This method presumes to use a specific joint density estimator called regular-vine (R-vine) copulas.
Adaptation is realized in two steps: the first step estimates which components of the constructed R-vine model are different by performing two-sample tests based on maximum mean discrepancy \cite{Lopez-pazSemisupervised2012},
and the second step re-estimates the components in which a change is detected using only the target domain data.
\item \emph{LOO} (reference score): The leave-one-out cross-validated error estimate was also calculated for reference.
It is the average prediction error for a single held-out test point when the predictor is trained on the rest of the target domain data.
\end{itemize}

\textbf{Evaluation Procedure}
The prediction accuracy was measured by the mean squared error (MSE).
For each train-test split, we randomly selected one-third (6 points) of the target domain dataset as the training set and use the rest as the test set.
All experiments were repeated 10 times with different train-test splits of target domain data.

\textbf{Configuration of the Proposed Method}
We modeled \(\hRSF\) by an \(8\)-layer Glow neural network \cite{KingmaGlow2018}.
We modeled \(\varphi_j\) by a \(1\)-hidden-layer neural network with a varied number of hidden units, \(K\) output units, and the rectified linear unit activation \cite{LeCunDeep2015}.
We used its \(k\)-th output (\(k \in [K]\)) as the value for \(\varphi_j(\cdot, k)\).
For training, we use the Adam optimizer \cite{kingma15iclr} with fixed parameters \((\beta_1, \beta_2, \epsilon) = (0.9, 0.999, 10^{-8})\), fixed initial learning rate \(10^{-3}\), and the maximum number of epochs \(300\).
For further implementation details, see \cite{TeshimaFewshot2020}.
For each evaluation step, we took all combinations (with replacement) of the estimated ICs to synthesize target domain data.
After we synthesized the data, we filtered them by applying a novelty detection technique with respect to the union of source domain data, namely the \emph{one-class support vector machine} \cite{scholkopf2001learning} with the fixed parameter \(\nu=0.1\) and radial basis function (RBF; e.g., \cite{scholkopf2001learning}) kernel \(k(x, y) = \exp(- \|x - y\|^2/\gamma)\) with \(\gamma = D\).
This is because the estimated transform \(\hRSF\) is not expected to be trained well outside the union of the supports of the source distributions.
After performing the filtration, we combined the original target training data with the augmented data to ensure the original data points to be always included.

\textbf{Predictor Hypothesis Class \(\cF\)}
As the predictor model, we used the KRR with the RBF kernel.
The bandwidth \(\gamma\) was chosen by the median heuristic similarly to \cite{yamada2011relative} for simplicity.
Note that the choice of the predictor model is for the sake of comparison with the other methods tailored for KRR \cite{CortesAdaptation2019}, and that an arbitrary predictor hypothesis class and learning algorithm can be easily combined with the proposed approach.

\textbf{Hyperparameter Selection}
We performed grid-search for hyperparameter selection.
The number of hidden units for \(\varphi_j\) was chosen from \(\{10, 20\}\) and the coefficient of weight-decay from \(10^{\{-2, -1\}}\).
The \(\ell_2\)-regularization coefficient \(\lambda\) of KRR was chosen from \(\lambda \in 2^{\{-10, \ldots, 10\}}\) following \cite{CortesAdaptation2019}.
To perform hyperparameter selection as well as early-stopping, we recorded the leave-one-out cross-validation (LOOCV) mean-squared error on the target training data every \(20\) epochs and selected its minimizer.
The leave-one-out score was computed using the well-known closed-form formula instead of training the predictor for each split \cite{RifkinNotes2007}.
Note that we only used the original target domain data as the held-out set and not the synthesized data.

\subsubsection{Experimental Results} 
In Table~\ref{tbl:experiment:real-1}, we report the MSE scores normalized by that of \emph{LOO} to facilitate the comparison, similarly to \cite{CortesDomain2014}.

In many of the target domain choices, the naive baselines (\emph{SrcOnly} and \emph{S\&TV}) suffered from negative transfer, i.e., higher average MSEs than \emph{TarOnly} (in 12 out of 18 domains).
On the other hand, the proposed method performed better than \emph{TarOnly} or was more resistant to negative transfer than the other compared methods.
The performances of \emph{GDM}, \emph{Copula}, and \emph{IW} were often inferior even compared to the baseline performance of \emph{SrcAndTarValid}.
For \emph{GDM} and \emph{IW}, this was attributed to the fact that these methods presume the availability of abundant (unlabeled) target domain data, which was unavailable in the current problem setup.
For \emph{Copula}, the performance inferior to the naive baselines was possibly due to the restriction of the predictor model to its accompanied probability model \cite{Lopez-pazSemisupervised2012}.
\emph{TrAdaBoost} worked reasonably well for many but not all domains. For some domains, it suffered from negative transfer similarly to others, possibly because of the very small number of training data points. Note that the transfer assumption of TrAdaBoost has not been stated \cite{PardoeBoosting2010}, and it is not clear when the method is reliable.

The domains on which the baselines perform better than the proposed method can be explained by the following two cases: (i) easier domains allow naive baselines to perform well and (ii) some domains may have deviated \(\RSF\).
Case (i) implies that estimating \(\RSF\) is unnecessary, and hence the proposed method can be suboptimal (more likely for JPN, NLD, NOR, and SWE in Table~\ref{tbl:experiment:real-1}, where SrcOnly or S\&TV improved upon TrgOnly).
On the other hand, case (ii) implies that an approximation error was induced as in Theorem~\ref{thm:causal:2} (more likely for IRL and ITA in Table~\ref{tbl:experiment:real-1}). In this case, others also perform poorly, implying the difficulty of the problem instance.
In either case, in practice, one may well perform cross-validation to fall back into the baselines.
\begin{table*}[!ht]
\centering
\caption{\small
Results of the real-world data experiments for different choices of the target domain.
The evaluation score was MSE normalized by that of \emph{LOO} (the lower the better).
All experiments were repeated 10 times with different train-test splits of target domain data,
and the average performance is reported with the standard errors in the brackets.
The target column indicates abbreviated country names.
Bold-face indicates the best score (Prop: proposed method, TrAda: \emph{TrAdaBoost}, the numbers in the brackets of IW indicate the value of \(\alpha\)).
The proposed method often improved upon the baseline \emph{TarOnly} or was relatively more resistant to negative transfer, with notable improvements in \emph{DEU}, \emph{GBR}, and \emph{USA}.
}
\label{tbl:experiment:real-1}
\small{}
\begin{tabular}[t]{|*1{p{8mm}|}|*1{p{8mm}}||*4{p{10mm}|}*1{p{10mm}|}*5{p{10mm}|}}
  \hline
  Target & (LOO) & TarOnly & \textbf{Prop} & SrcOnly & S\&TV & TrAda & GDM & Copula & IW(.5)\\
  \hline
AUT & 1 & 5.88 (1.60) & \textbf{5.39 (1.86)} & 9.67 (0.57) & 9.84 (0.62) & 5.78 (2.15) & 31.56 (1.39) & 27.33 (0.77) & 34.06 (0.67)\\
  \hline
BEL & 1 & 10.70 (7.50) & \textbf{7.94 (2.19)} & 8.19 (0.68) & 9.48 (0.91) & 8.10 (1.88) & 89.10 (4.12) & 119.86 (2.64) & 105.68 (3.13)\\
\hline
CAN & 1 & 5.16 (1.36) & \textbf{3.84 (0.98)} & 157.74 (8.83) & 156.65 (10.69) & 51.94 (30.06) & 516.90 (4.45) & 406.91 (1.59) & 571.33 (1.60)\\
\hline
DNK & 1 & 3.26 (0.61) & \textbf{3.23 (0.63)} & 30.79 (0.93) & 28.12 (1.67) & 25.60 (13.11) & 16.84 (0.85) & 14.46 (0.79) & 21.83 (0.93)\\
\hline
FRA & 1 & 2.79 (1.10) & \textbf{1.92 (0.66)} & 4.67 (0.41) & 3.05 (0.11) & 52.65 (25.83) & 91.69 (1.34) & 156.29 (1.96) & 113.5 (1.15)\\
\hline
DEU & 1 & 16.99 (8.04) & \textbf{6.71 (1.23)} & 229.65 (9.13) & 210.59 (14.99) & 341.03 (157.80) & 739.29 (11.81) & 929.03 (4.85) & 807.88 (4.14)\\
\hline
GRC & 1 & 3.80 (2.21) & \textbf{3.55 (1.79)} & 5.30 (0.90) & 5.75 (0.68) & 11.78 (2.36) & 26.90 (1.89) & 23.05 (0.53) & 39.56 (1.70)\\
\hline
IRL & 1 & \textbf{3.05 (0.34)} & 4.35 (1.25) & 135.57 (5.64) & 12.34 (0.58) & 23.40 (17.50) & 3.84 (0.22) & 26.60 (0.59) & 5.79 (0.12)\\
\hline
ITA & 1 & \textbf{13.00 (4.15)} & 14.05 (4.81) & 35.29 (1.83) & 39.27 (2.52) & 87.34 (24.05) & 226.95 (11.14) & 343.10 (10.04) & 237.15 (6.46)\\
\hline
JPN & 1 & 10.55 (4.67) & 12.32 (4.95) & \textbf{8.10 (1.05)} & 8.38 (1.07) & 18.81 (4.59) & 95.58 (7.89) & 71.02 (5.08) & 129.3 (10.47)\\
\hline
NLD & 1 & 3.75 (0.80) & 3.87 (0.79) & \textbf{0.99 (0.06)} & 0.99 (0.05) & 9.45 (1.43) & 28.35 (1.62) & 29.53 (1.58) & 33.38 (1.63)\\
\hline
NOR & 1 & 2.70 (0.51) & 2.82 (0.73) & 1.86 (0.29) & \textbf{1.63 (0.11)} & 24.25 (12.50) & 23.36 (0.88) & 31.37 (1.17) & 27.09 (0.76)\\
\hline
ESP & 1 & 5.18 (1.05) & 6.09 (1.53) & 5.17 (1.14) & \textbf{4.29 (0.72)} & 14.85 (4.20) & 33.16 (6.99) & 152.59 (6.19) & 56.54 (2.16)\\
\hline
SWE & 1 & 6.44 (2.66) & 5.47 (2.63) & 2.48 (0.23) & \textbf{2.02 (0.21)} & 2.18 (0.25) & 15.53 (2.59) & 2706.85 (17.91) & 113.55 (1.72)\\
\hline
CHE & 1 & 3.51 (0.46) & \textbf{2.90 (0.37)} & 43.59 (1.77) & 7.48 (0.49) & 38.32 (9.03) & 8.43 (0.24) & 29.71 (0.53) & 9.33 (0.22)\\
\hline
TUR & 1 & 1.65 (0.47) & 1.06 (0.15) & 1.22 (0.18) & \textbf{0.91 (0.09)} & 2.19 (0.34) & 64.26 (5.71) & 142.84 (2.04) & 139.29 (2.41)\\
\hline
GBR & 1 & 5.95 (1.86) & \textbf{2.66 (0.57)} & 15.92 (1.02) & 10.05 (1.47) & 7.57 (5.10) & 50.04 (1.75) & 68.70 (1.25) & 69.19 (0.87)\\
\hline
USA & 1 & 4.98 (1.96) & \textbf{1.60 (0.42)} & 21.53 (3.30) & 12.28 (2.52) & 2.06 (0.47) & 308.69 (5.20) & 244.90 (1.82) & 393.45 (1.68)\\
\hline
\hline
\#Best & - & 2 & 10 & 2 & 4 & 0 & 0 & 0 & 0\\
\hline
\end{tabular}
\end{table*}



\section{Conclusion}
\label{sec:concl}
In this article, we rethought classical importance-weighting based approaches for transfer learning and explained their limitations on the two-step architecture: 
as data nowadays are becoming more and more complex and high-dimensional, the first importance estimation step can be very difficult and the resulted estimation error will propagate to the second weighted training step, degrading the prediction performance of the trained model.

To avoid the issue, we introduced two one-step solutions: one learns an importance estimator and a predictor jointly by minimizing an upper bound of the test risk,
and the other iterates weight estimation and weighted training with features for weight estimation extracted as hidden-layer outputs or loss values.
Furthermore, we presented a transfer learning method built upon the common data generation causal mechanism.
\clearpage

Future work will consider adapting the proposed methods to evolving domain shift in non-stationary environments \cite{liu2020learning,kumar2020understanding},
and study other meta-distributional concepts for capturing the intrinsic structure of the evolving domain data.
From the practical application viewpoint, we will employ the one-step importance-weighting techniques for deep reinforcement learning where the agents may face some visual changes \cite{gamrian2019transfer},
and use the novel causal mechanism transfer learning method in health data across different population groups \cite{gardner2020application}.

\subsubsection*{Acknowledgments}
NL, TF, and MS were supported by the Institute for AI and Beyond, UTokyo.
MS was also supported by JST AIP Acceleration Research Grant Number JPMJCR20U3, Japan.
TZ and TF were supported by the SPRING GX program at UTokyo.
TT was supported by RIKEN Junior Research Associate Program and Masason Foundation.

\bibliographystyle{ieeetr}
\bibliography{mybib.bib,bibs/sec6.bib,bibs/sec5.1.bib}

\begin{thebibliography}{10}

\bibitem{goodfellow2016deep}
I.~Goodfellow, Y.~Bengio, and A.~Courville, {\em Deep learning}.
\newblock MIT Press, 2016.

\bibitem{wahba1990spline}
G.~Wahba, {\em Spline Models for Observational Data}, vol.~59.
\newblock SIAM, 1990.

\bibitem{bishop1995neural}
C.~M. Bishop, {\em Neural Networks for Pattern Recognition}.
\newblock Oxford University Press, 1995.

\bibitem{Vapnik1998}
V.~N. Vapnik, {\em Statistical Learning Theory}.
\newblock Wiley-Interscience, 1998.

\bibitem{scholkopf2001learning}
B.~Sch\"olkopf and A.~J. Smola, {\em Learning with Kernels: Support Vector
  Machines, Regularization, Optimization, and Beyond}.
\newblock MIT Press, 2001.

\bibitem{hastie2009elements}
T.~Hastie, R.~Tibshirani, and J.~Friedman, {\em The Elements of Statistical
  Learning: Data mining, Inference, and Prediction}.
\newblock Springer Science \& Business Media, 2009.

\bibitem{duda2012pattern}
R.~O. Duda, P.~E. Hart, and D.~G. Stork, {\em Pattern Classification}.
\newblock John Wiley \& Sons, 2012.

\bibitem{sugiyama2015introduction}
M.~Sugiyama, {\em Introduction to Statistical Machine Learning}.
\newblock Morgan Kaufmann, 2015.

\bibitem{yu2020bdd100k}
F.~Yu, H.~Chen, X.~Wang, W.~Xian, Y.~Chen, F.~Liu, V.~Madhavan, and T.~Darrell,
  ``Bdd100k: A diverse driving dataset for heterogeneous multitask learning,''
  in {\em Proceedings of the IEEE/CVF conference on computer vision and pattern
  recognition}, pp.~2636--2645, 2020.

\bibitem{koh2021wilds}
P.~W. Koh, S.~Sagawa, S.~M. Xie, M.~Zhang, A.~Balsubramani, W.~Hu, M.~Yasunaga,
  R.~L. Phillips, I.~Gao, T.~Lee, {\em et~al.}, ``Wilds: A benchmark of
  in-the-wild distribution shifts,'' in {\em Proceedings of International
  Conference on Machine Learning}, pp.~5637--5664, 2021.

\bibitem{berk1983introduction}
R.~A. Berk, ``An introduction to sample selection bias in sociological data,''
  {\em American Sociological Review}, pp.~386--398, 1983.

\bibitem{zadrozny2004learning}
B.~Zadrozny, ``Learning and evaluating classifiers under sample selection
  bias,'' in {\em Proceedings of International Conference on Machine Learning},
  pp.~903--910, 2004.

\bibitem{JMLR:v18:16-315}
B.~van Rooyen and R.~C. Williamson, ``A theory of learning with corrupted
  labels,'' {\em Journal of Machine Learning Research}, vol.~18, no.~228,
  pp.~1--50, 2018.

\bibitem{han2020survey}
B.~Han, Q.~Yao, T.~Liu, G.~Niu, I.~W. Tsang, J.~T. Kwok, and M.~Sugiyama, ``A
  survey of label-noise representation learning: Past, present and future,''
  {\em Preprint arXiv:2011.04406}, 2020.

\bibitem{quionero2009dataset}
J.~Quionero-Candela, M.~Sugiyama, A.~Schwaighofer, and N.~D. Lawrence, {\em
  Dataset Shift in Machine Learning}.
\newblock MIT Press, 2009.

\bibitem{sugiyama2012machine}
M.~Sugiyama and M.~Kawanabe, {\em Machine Learning in Non-stationary
  Environments: Introduction to Covariate Shift Adaptation}.
\newblock MIT Press, 2012.

\bibitem{cochran2007sampling}
W.~G. Cochran, {\em Sampling Techniques}.
\newblock John Wiley \& Sons, 2007.

\bibitem{fishman2013monte}
G.~Fishman, {\em Monte Carlo: Concepts, Algorithms, and Applications}.
\newblock Springer Science \& Business Media, 2013.

\bibitem{kahn1953methods}
H.~Kahn and A.~W. Marshall, ``Methods of reducing sample size in {M}onte
  {C}arlo computations,'' {\em Journal of the Operations Research Society of
  America}, vol.~1, no.~5, pp.~263--278, 1953.

\bibitem{shimodaira2000improving}
H.~Shimodaira, ``Improving predictive inference under covariate shift by
  weighting the log-likelihood function,'' {\em Journal of Statistical Planning
  and Inference}, vol.~90, no.~2, pp.~227--244, 2000.

\bibitem{sugiyama2005input}
M.~Sugiyama and K.-R. M{\"u}ller, ``Input-dependent estimation of
  generalization error under covariate shift,'' {\em Statistics \& Decisions},
  vol.~23, no.~4, pp.~249--279, 2005.

\bibitem{sugiyama2007covariate}
M.~Sugiyama, M.~Krauledat, and K.-R. M\"uller, ``Covariate shift adaptation by
  importance weighted cross validation,'' {\em Journal of Machine Learning
  Research}, vol.~8, no.~5, pp.~985--1005, 2007.

\bibitem{huang2007correcting}
J.~Huang, A.~Gretton, K.~Borgwardt, B.~Sch{\"o}lkopf, and A.~J. Smola,
  ``Correcting sample selection bias by unlabeled data,'' in {\em Advances in
  Neural Information Processing Systems 19}, pp.~601--608, 2007.

\bibitem{sugiyama2008direct}
M.~Sugiyama, S.~Nakajima, H.~Kashima, P.~v. Buenau, and M.~Kawanabe, ``Direct
  importance estimation with model selection and its application to covariate
  shift adaptation,'' in {\em Advances in Neural Information Processing Systems
  20}, pp.~1433--1440, 2008.

\bibitem{sugiyama2008direct1}
M.~Sugiyama, T.~Suzuki, S.~Nakajima, H.~Kashima, P.~von B{\"u}nau, and
  M.~Kawanabe, ``Direct importance estimation for covariate shift adaptation,''
  {\em Annals of the Institute of Statistical Mathematics}, vol.~60, no.~4,
  pp.~699--746, 2008.

\bibitem{kanamori2009least}
T.~Kanamori, S.~Hido, and M.~Sugiyama, ``A least-squares approach to direct
  importance estimation,'' {\em Journal of Machine Learning Research}, vol.~10,
  no.~7, pp.~1391--1445, 2009.

\bibitem{yamada2011relative}
M.~Yamada, T.~Suzuki, T.~Kanamori, H.~Hachiya, and M.~Sugiyama, ``Relative
  density-ratio estimation for robust distribution comparison,'' in {\em
  Advances in Neural Information Processing Systems 24}, pp.~594--602, 2011.

\bibitem{sugiyama2012density}
M.~Sugiyama, T.~Suzuki, and T.~Kanamori, {\em Density Ratio Estimation in
  Machine Learning}.
\newblock Cambridge University Press, 2012.

\bibitem{zhang2013domain}
K.~Zhang, B.~Sch{\"o}lkopf, K.~Muandet, and Z.~Wang, ``Domain adaptation under
  target and conditional shift,'' in {\em Proceedings of International
  Conference on Machine Learning}, pp.~819--827, 2013.

\bibitem{zhang2020one}
T.~Zhang, I.~Yamane, N.~Lu, and M.~Sugiyama, ``A one-step approach to covariate
  shift adaptation,'' in {\em Proceedings of Asian Conference on Machine
  Learning}, pp.~65--80, 2020.

\bibitem{zhang2021one}
T.~Zhang, I.~Yamane, N.~Lu, and M.~Sugiyama, ``A one-step approach to covariate
  shift adaptation,'' {\em SN Computer Science}, vol.~2, no.~4, pp.~1--12,
  2021.

\bibitem{bendavid06jcss}
S.~Ben-David, N.~Eiron, and P.~M. Longc, ``On the difficulty of approximately
  maximizing agreements,'' {\em Journal of Computer and System Sciences},
  vol.~66, no.~3, pp.~496--514, 2003.

\bibitem{bartlett06jasa}
P.~L. Bartlett, M.~I. Jordan, and J.~D. McAuliffe, ``Convexity, classification,
  and risk bounds,'' {\em Journal of the American Statistical Association},
  vol.~101, no.~473, pp.~138--156, 2006.

\bibitem{pan2009survey}
S.~J. Pan and Q.~Yang, ``A survey on transfer learning,'' {\em IEEE
  Transactions on Knowledge and Data Engineering}, vol.~22, no.~10,
  pp.~1345--1359, 2009.

\bibitem{yang2020transfer}
Q.~Yang, Y.~Zhang, W.~Dai, and S.~J. Pan, {\em Transfer Learning}.
\newblock Cambridge University Press, 2020.

\bibitem{angluin1988learning}
D.~Angluin and P.~Laird, ``Learning from noisy examples,'' {\em Machine
  Learning}, vol.~2, no.~4, pp.~343--370, 1988.

\bibitem{zhang2016understanding}
C.~Zhang, S.~Bengio, M.~Hardt, B.~Recht, and O.~Vinyals, ``Understanding deep
  learning requires rethinking generalization,'' in {\em Proceedings of
  International Conference on Learning Representations}, 2017.

\bibitem{japkowicz2002class}
N.~Japkowicz and S.~Stephen, ``The class imbalance problem: A systematic
  study,'' {\em Intelligent Data Analysis}, vol.~6, no.~5, pp.~429--449, 2002.

\bibitem{he2009learning}
H.~He and E.~A. Garcia, ``Learning from imbalanced data,'' {\em IEEE
  Transactions on Knowledge and Data Engineering}, vol.~21, no.~9,
  pp.~1263--1284, 2009.

\bibitem{huang2016learning}
C.~Huang, Y.~Li, C.~Change~Loy, and X.~Tang, ``Learning deep representation for
  imbalanced classification,'' in {\em Proceedings of IEEE Conference on
  Computer Vision and Pattern Recognition}, pp.~5375--5384, 2016.

\bibitem{buda2018systematic}
M.~Buda, A.~Maki, and M.~A. Mazurowski, ``A systematic study of the class
  imbalance problem in convolutional neural networks,'' {\em Neural Networks},
  vol.~106, pp.~249--259, 2018.

\bibitem{lipton2018detecting}
Z.~C. Lipton, Y.-X. Wang, and A.~Smola, ``Detecting and correcting for label
  shift with black box predictors,'' in {\em Proceedings of International
  Conference on Machine Learning}, pp.~3128--3136, 2018.

\bibitem{cao2019learning}
K.~Cao, C.~Wei, A.~Gaidon, N.~Arechiga, and T.~Ma, ``Learning imbalanced
  datasets with label-distribution-aware margin loss,'' in {\em Advances in
  Neural Information Processing Systems 32}, pp.~1565--1576, 2019.

\bibitem{gong2016domain}
M.~Gong, K.~Zhang, T.~Liu, D.~Tao, C.~Glymour, and B.~Sch{\"o}lkopf, ``Domain
  adaptation with conditional transferable components,'' in {\em Proceedings of
  International Conference on Machine Learning}, pp.~2839--2848, 2016.

\bibitem{yu2020label}
X.~Yu, T.~Liu, M.~Gong, K.~Zhang, K.~Batmanghelich, and D.~Tao, ``Label-noise
  robust domain adaptation,'' in {\em International Conference on Machine
  Learning}, pp.~10913--10924, 2020.

\bibitem{fang2020rethinking}
T.~Fang, N.~Lu, G.~Niu, and M.~Sugiyama, ``Rethinking importance weighting for
  deep learning under distribution shift,'' in {\em Advances in Neural
  Information Processing Systems 33}, pp.~11996--12007, 2020.

\bibitem{10.5555/345662}
N.~Cristianini and J.~Shawe-Taylor, {\em An Introduction to Support Vector
  Machines: And Other Kernel-Based Learning Methods}.
\newblock Cambridge University Press, 1999.

\bibitem{beaton1974fitting}
A.~E. Beaton and J.~W. Tukey, ``The fitting of power series, meaning
  polynomials, illustrated on band-spectroscopic data,'' {\em Technometrics},
  vol.~16, no.~2, pp.~147--185, 1974.

\bibitem{andersen2008modern}
R.~Andersen, {\em Modern Methods for Robust Regression}, vol.~152.
\newblock SAGE, 2008.

\bibitem{robbins1951stochastic}
H.~Robbins and S.~Monro, ``A stochastic approximation method,'' {\em The Annals
  of Mathematical Statistics}, pp.~400--407, 1951.

\bibitem{yang2007weighted}
X.~Yang, Q.~Song, and Y.~Wang, ``A weighted support vector machine for data
  classification,'' {\em International Journal of Pattern Recognition and
  Artificial Intelligence}, vol.~21, no.~5, pp.~961--976, 2007.

\bibitem{koltchinskii_rademacher_2001}
V.~Koltchinskii, ``Rademacher penalties and structural risk minimization,''
  {\em IEEE Transactions on Information Theory}, vol.~47, no.~5,
  pp.~1902--1914, 2001.

\bibitem{mohri2018foundations}
M.~Mohri, A.~Rostamizadeh, and A.~Talwalkar, {\em Foundations of Machine
  Learning}.
\newblock MIT Press, 2018.

\bibitem{shalev2014understanding}
S.~Shalev-Shwartz and S.~Ben-David, {\em Understanding Machine Learning: From
  Theory to Algorithms}.
\newblock Cambridge University Press, 2014.

\bibitem{ahmed2014dataset}
C.~F. Ahmed, N.~Lachiche, C.~Charnay, and A.~Braud, ``Dataset shift in a
  real-life dataset,'' in {\em ECML-PKDD Workshop on Learning over Multiple
  Contexts}, 2014.

\bibitem{pmlr-v51-chen16d}
X.~Chen, M.~Monfort, A.~Liu, and B.~D. Ziebart, ``Robust covariate shift
  regression,'' in {\em Proceedings of International Conference on Artificial
  Intelligence and Statistics}, pp.~1270--1279, 2016.

\bibitem{NIPS2006_a74c3bae}
A.~J. Storkey and M.~Sugiyama, ``Mixture regression for covariate shift,'' in
  {\em Advances in Neural Information Processing Systems 19}, pp.~1337--1344,
  2007.

\bibitem{cortes2008sample}
C.~Cortes, M.~Mohri, M.~Riley, and A.~Rostamizadeh, ``Sample selection bias
  correction theory,'' in {\em Proceedings of International Conference on
  Algorithmic Learning Theory}, pp.~38--53, Springer, 2008.

\bibitem{xiao2017}
H.~Xiao, K.~Rasul, and R.~Vollgraf, ``{Fashion-MNIST}: A novel image dataset
  for benchmarking machine learning algorithms,'' {\em Preprint
  arXiv:1708.07747v2}, 2017.

\bibitem{clanuwat2018deep}
T.~Clanuwat, M.~Bober-Irizar, A.~Kitamoto, A.~Lamb, K.~Yamamoto, and D.~Ha,
  ``Deep learning for classical {J}apanese literature,'' {\em arXiv preprint
  arXiv:1812.01718}, 2018.

\bibitem{byrd2018effect}
J.~Byrd and Z.~C. Lipton, ``What is the effect of importance weighting in deep
  learning?,'' in {\em International Conference on Machine Learning},
  pp.~872--881, 2019.

\bibitem{srivastava2014dropout}
N.~Srivastava, G.~Hinton, A.~Krizhevsky, I.~Sutskever, and R.~Salakhutdinov,
  ``Dropout: A simple way to prevent neural networks from overfitting,'' {\em
  Journal of Machine Learning Research}, vol.~15, no.~1, pp.~1929--1958, 2014.

\bibitem{ren2018learning}
M.~Ren, W.~Zeng, B.~Yang, and R.~Urtasun, ``Learning to reweight examples for
  robust deep learning,'' in {\em Proceedings of International Conference on
  Machine Learning}, pp.~4334--4343, 2018.

\bibitem{krizhevsky2009learning}
A.~Krizhevsky and G.~Hinton, ``Learning multiple layers of features from tiny
  images,'' tech. rep., 2009.
\newblock https://www.cs.toronto.edu/\~{}kriz/learning-features-2009-TR.pdf.

\bibitem{lecun1998gradient}
Y.~LeCun, L.~Bottou, Y.~Bengio, and P.~Haffner, ``Gradient-based learning
  applied to document recognition,'' {\em Proceedings of the IEEE}, vol.~86,
  no.~11, pp.~2278--2324, 1998.

\bibitem{he2016deep}
K.~He, X.~Zhang, S.~Ren, and J.~Sun, ``Deep residual learning for image
  recognition,'' in {\em In Proceedings of IEEE Conference on Computer Vision
  and Pattern Recognition}, pp.~770--778, 2016.

\bibitem{kingma15iclr}
D.~P. Kingma and J.~L. Ba, ``Adam: A method for stochastic optimization,'' in
  {\em Proceedings of International Conference on Learning Representations},
  2015.

\bibitem{han2018co}
B.~Han, Q.~Yao, X.~Yu, G.~Niu, M.~Xu, W.~Hu, I.~Tsang, and M.~Sugiyama,
  ``Co-teaching: Robust training of deep neural networks with extremely noisy
  labels,'' in {\em Advances in Neural Information Processing Systems 31},
  pp.~8527--8537, 2018.

\bibitem{van2015learning}
B.~Van~Rooyen, A.~Menon, and R.~C. Williamson, ``Learning with symmetric label
  noise: The importance of being unhinged,'' in {\em Advances in Neural
  Information Processing Systems 28}, pp.~10--18, 2015.

\bibitem{maaten2008visualizing}
L.~van~der Maaten and G.~Hinton, ``Visualizing data using t-{SNE},'' {\em
  Journal of Machine Learning Research}, vol.~9, no.~Nov, pp.~2579--2605, 2008.

\bibitem{TeshimaFewshot2020}
T.~Teshima, I.~Sato, and M.~Sugiyama, ``Few-shot domain adaptation by causal
  mechanism transfer,'' in {\em Proceedings of {{International Conference}} on
  {{Machine Learning}}}, pp.~9458--9469, 2020.

\bibitem{YadavMining2018}
P.~Yadav, M.~Steinbach, V.~Kumar, and G.~Simon, ``Mining electronic health
  records ({{EHRs}}): A survey,'' {\em ACM Computing Surveys}, vol.~50, no.~6,
  pp.~1--40, 2018.

\bibitem{HyvarinenIndependent2001}
A.~Hyv{\"a}rinen, J.~Karhunen, and E.~Oja, {\em Independent {{Component
  Analysis}}}.
\newblock {John Wiley \& Sons, Inc.}, 2001.

\bibitem{PearlCausality2009}
J.~Pearl, {\em Causality: Models, {{Reasoning}} and {{Inference}}}.
\newblock {Cambridge University Press}, second~ed., 2009.

\bibitem{PetersElements2017}
J.~Peters, D.~Janzing, and B.~Sch{\"o}lkopf, {\em Elements of {{Causal
  Inference}}: Foundations and {{Learning Algorithms}}}.
\newblock MIT Press, 2017.

\bibitem{ReissStructural2007}
P.~C. Reiss and F.~A. Wolak, ``Structural econometric modeling: Rationales and
  examples from industrial organization,'' in {\em Handbook of
  {{Econometrics}}}, vol.~6, pp.~4277--4415, {Elsevier}, 2007.

\bibitem{KanoCausal2003}
Y.~Kano and S.~Shimizu, ``Causal inference using nonnormality,'' in {\em
  Proceedings of the {{International Symposium}} on the {{Science}} of
  {{Modeling}}, the 30th {{Anniversary}} of the {{Information Criterion}},},
  pp.~261--270, 2003.

\bibitem{ShimizuLinear2006}
S.~Shimizu, P.~O. Hoyer, A.~Hyv{\"a}rinen, and A.~J. Kerminen, ``A linear
  non-{{Gaussian}} acyclic model for causal discovery,'' {\em Journal of
  Machine Learning Research}, vol.~7, no.~72, pp.~2003--2030, 2006.

\bibitem{MontiCausal2019}
R.~P. Monti, K.~Zhang, and A.~Hyv{\"a}rinen, ``Causal discovery with general
  non-linear relationships using non-linear {{ICA}},'' in {\em Proceedings of
  Conference on {{Uncertainty}} in {{Artificial Intelligence}}}, pp.~186--195,
  2019.

\bibitem{GlymourReview2019}
C.~Glymour, K.~Zhang, and P.~Spirtes, ``Review of {{Causal Discovery Methods
  Based}} on {{Graphical Models}},'' {\em Frontiers in Genetics}, vol.~10,
  no.~524, 2019.

\bibitem{HyvarinenNonlinear2019}
A.~Hyv{\"a}rinen, H.~Sasaki, and R.~Turner, ``Nonlinear {{ICA}} using auxiliary
  variables and generalized contrastive learning,'' in {\em Proceedings of
  {{International Conference}} on {{Artificial Intelligence}} and
  {{Statistics}}}, pp.~859--868, 2019.

\bibitem{KingmaGlow2018}
D.~P. Kingma and P.~Dhariwal, ``Glow: Generative flow with invertible 1x1
  convolutions,'' in {\em Advances in {{Neural Information Processing Systems}}
  31}, pp.~10215--10224, 2018.

\bibitem{TeshimaCouplingbased2020}
T.~Teshima, I.~Ishikawa, K.~Tojo, K.~Oono, M.~Ikeda, and M.~Sugiyama,
  ``Coupling-based invertible neural networks are universal diffeomorphism
  approximators,'' in {\em Advances in Neural Information Processing Systems
  33}, pp.~3362--3373, 2020.

\bibitem{ClemenconScalingup2016}
S.~Cl{\'e}men{\c c}on, I.~Colin, and A.~Bellet, ``Scaling-up empirical risk
  minimization: Optimization of incomplete {{U}}-statistics,'' {\em Journal of
  Machine Learning Research}, vol.~17, no.~76, pp.~1--36, 2016.

\bibitem{PapaSGD2015}
G.~Papa, S.~Cl{\'e}men{\c c}on, and A.~Bellet, ``{SGD} algorithms based on
  incomplete {U}-statistics: Large-scale minimization of empirical risk,'' in
  {\em Advances in {{Neural Information Processing Systems}} 28},
  pp.~1027--1035, 2015.

\bibitem{LeeUStatistics1990}
A.~J. Lee, {\em U-{{Statistics}}: Theory and {{Practice}}}.
\newblock {M. Dekker}, 1990.

\bibitem{GreeneEconometric2012}
W.~H. Greene, {\em Econometric {{Analysis}}}.
\newblock {Prentice Hall}, seventh~ed., 2012.

\bibitem{BaltagiGasoline1983}
B.~H. Baltagi and J.~M. Griffin, ``Gasoline demand in the {{OECD}}: An
  application of pooling and testing procedures,'' {\em European Economic
  Review}, vol.~22, no.~2, pp.~117--137, 1983.

\bibitem{BaltagiEconometric2005}
B.~Baltagi, {\em Econometric {{Analysis}} of {{Panel Data}}}.
\newblock {John Wiley and Sons}, 3rd~ed., 2005.

\bibitem{PardoeBoosting2010}
D.~Pardoe and P.~Stone, ``Boosting for regression transfer,'' in {\em
  Proceedings of {{International Conference}} on {{Machine Learning}}},
  pp.~863--870, 2010.

\bibitem{CortesAdaptation2019}
C.~Cortes, M.~Mohri, and A.~M. Medina, ``Adaptation based on generalized
  discrepancy,'' {\em Journal of Machine Learning Research}, vol.~20, no.~1,
  pp.~1--30, 2019.

\bibitem{Lopez-pazSemisupervised2012}
D.~{Lopez-paz}, J.~M. {Hern{\'a}ndez-lobato}, and B.~Sch{\"o}lkopf,
  ``Semi-supervised domain adaptation with non-parametric copulas,'' in {\em
  Advances in {{Neural Information Processing Systems}} 25}, pp.~665--673,
  2012.

\bibitem{LeCunDeep2015}
Y.~LeCun, Y.~Bengio, and G.~Hinton, ``Deep learning,'' {\em Nature}, vol.~521,
  no.~7553, pp.~436--444, 2015.

\bibitem{RifkinNotes2007}
R.~M. Rifkin and R.~A. Lippert, ``Notes on regularized least squares,'' tech.
  rep., 2007.
\newblock http://128.30.100.62:8080/media/fb/ps/MIT-CSAIL-TR-2007-025.pdf.

\bibitem{CortesDomain2014}
C.~Cortes and M.~Mohri, ``Domain adaptation and sample bias correction theory
  and algorithm for regression,'' {\em Theoretical Computer Science}, vol.~519,
  pp.~103--126, 2014.

\bibitem{liu2020learning}
H.~Liu, M.~Long, J.~Wang, and Y.~Wang, ``Learning to adapt to evolving
  domains,'' in {\em Advances in Neural Information Processing Systems 33},
  2020.

\bibitem{kumar2020understanding}
A.~Kumar, T.~Ma, and P.~Liang, ``Understanding self-training for gradual domain
  adaptation,'' in {\em Proceedings of International Conference on Machine
  Learning}, pp.~5468--5479, 2020.

\bibitem{gamrian2019transfer}
S.~Gamrian and Y.~Goldberg, ``Transfer learning for related reinforcement
  learning tasks via image-to-image translation,'' in {\em Proceedings of
  International Conference on Machine Learning}, pp.~2063--2072, 2019.

\bibitem{gardner2020application}
P.~Gardner, X.~Liu, and K.~Worden, ``On the application of domain adaptation in
  structural health monitoring,'' {\em Mechanical Systems and Signal
  Processing}, vol.~138, no.~106550, 2020.

\end{thebibliography}

\end{document}